\numberwithin{equation}{section}
\theoremstyle{plain}
\newtheorem{definition}{\protect\definitionname}
\providecommand{\definitionname}{Definition}
\numberwithin{definition}{section}
\newtheorem{theorem}{\protect\theoremname}
\providecommand{\theoremname}{Theorem}
\numberwithin{theorem}{section}
\newtheorem{lemma}{\protect\lemmaname}
\providecommand{\lemmaname}{Lemma}
\numberwithin{lemma}{section}
\newcommand\thickbar[1]{\accentset{\rule{.4em}{.8pt}}{#1}}
\begin{document}

\begin{frontmatter}
\title{Rademacher upper bounds for cross-validation errors with an application to the lasso\thanksref{T1}}
\runtitle{Upper bounds for cross-validation errors}
\thankstext{T1}{We would like to thank Pierre Del Moral and Peter Hall for valuable comments on the draft. We also would like to acknowledge seminar participants at UNSW and University of Melbourne for useful questions, comments and suggestions.}

\begin{aug}
\author{\fnms{Ning\snm{Xu,}\thanksref{t1}\ead[label=e1]{ning.xu@sydney.edu.au}}}
\author{\fnms{Jian\snm{Hong}\ead[label=e2]{jian.hong@sydney.edu.au}}}
\and
\author{\fnms{Timothy C.G.}\snm{Fisher}\thanksref{t3}\ead[label=e3]{tim.fisher@sydney.edu.au}}

\address{School of Economics, University of Sydney\\ NSW 2006 Australia\\
\printead{e1,e2,e3}}

\thankstext{t1}{Xu would like to thank Google Australia and NICTA for hardware and programming support in package optimization and development.}
\thankstext{t3}{Fisher would like to acknowledge the financial support of the Australian Research Council grant DP0663477.}
\runauthor{N. Xu et al.}

\affiliation{University of Sydney}
\end{aug}

\begin{abstract} 
We establish a general upper bound for $K$-fold cross-validation ($K$-CV) errors that can be adapted to many $K$-CV-based estimators and learning algorithms. Based on Rademacher complexity of the model and the Orlicz-$\Psi_{\nu}$ norm of the error process, the CV error upper bound applies to both light-tail and heavy-tail error distributions. We also extend the CV error upper bound to $\beta$-mixing data using the technique of independent blocking. We provide a Python package (\texttt{CVbound}, \url{https://github.com/isaac2math}) for computing the CV error upper bound in $K$-CV-based algorithms. Using the lasso as an example, we demonstrate in simulations that the upper bounds are tight and stable across different parameter settings and random seeds. As well as accurately bounding the CV errors for the lasso, the minimizer of the new upper bounds can be used as a criterion for variable selection. Compared with the CV-error minimizer, simulations show that tuning the lasso penalty parameter according to the minimizer of the upper bound yields a more sparse and more stable model that retains all of the relevant variables.
\end{abstract}

\begin{keyword}[class=MSC]
\kwd[Primary ]{62F40}
\kwd{62J07}
\kwd[; secondary ]{68Q32}
\end{keyword}

\begin{keyword}
\kwd{$K$-fold cross-validation}
\kwd{Rademacher complexity}
\kwd{Orlicz-$\Psi_{\nu}$ norm}
\kwd{$\beta$-mixing}
\kwd{independent blocks}
\kwd{lasso regression}
\kwd{variable selection}
\end{keyword}

\end{frontmatter}

\section{Introduction}

The out-of-sample performance of a model is important for model selection and is often quantified using $K$-fold cross-validation ($K$-CV) \citep{stone74,stone77}. The cross-validation error (CV error), also referred to as the average prediction error, indicates empirically the out-of-sample performance of the model. Algorithms like the lasso are often computed using $K$-CV and minimizing the CV error is frequently used as a criterion for model selection. Thus, intuitively, the reliability of the CV error is the very essence of model selection and evaluation. In particular, the absence of a reliable CV error may result in models that fail to deliver valid interpretation or prediction. In this paper, we offer a general framework to analyze the reliability of the CV error. The framework allows us to derive an upper bound for the CV error that quantifies and improves the reliability of $K$-CV.

\subsection{$K$-fold cross-validation}

Before elaborating the issues, we fix the terminology around the $K$-CV procedure following \citet{friedman2001elements}. $K$-CV evaluates the out-of-sample performance of a model (typically defined to be a function class or a model class) by repeatedly splitting the original sample into a \emph{training set} to train (or estimate) the model, and a \emph{validation set} to evaluate it. The original sample is randomly split into $K$ equal-size folds. In each round of the training-validation split, a single fold is retained as the validation set, returning the \emph{prediction error}, while the remaining $K-1$ folds are used as the training set, returning the \emph{training error}. The procedure is repeated $K$ times, with each of the $K$ folds used as the validation set exactly once, to produce the \emph{average training error} and the \emph{CV error}, the average of the $K$ prediction errors. Ideally, the original sample would be used exclusively for training and validation would be carried out using samples (of the same size) selected repeatedly from the population (called \emph{test sets}). However, scarcity of data typically forces the researcher to split the sample into training and validation sets, with the latter used as `pseudo' test sets. Thus, the CV error is the analog of the \emph{average test error}, the average prediction error of the model on different test sets.

\subsection{Relevant literature}

Researchers in statistics, machine learning and biostatistics have uncovered several problems with $K$-CV. One problem is the high variance of the CV error of a given model. \citet{efron1983estimating} and \citet{breiman1996heuristics} show that, while $K$-CV gives a nearly unbiased estimate of the CV error, it often comes at the cost of an unacceptably high variance, leading to unreliable estimates of the model. In finite sample analysis, \citet{cawley2010over} argue that while unbiasedness \emph{per se} is relatively unimportant for the purposes of model selection, high variance, which affects the reliability of model selection, remains a fundamental concern. Given the model, \citet{cawley2010over} verify the high variance problem with $K$-CV in large-scale simulations. \citet{friedman2001elements} explain that autocorrelation is a major cause of the unreliability of the CV error. In $K$-CV, training sets in any two rounds contain a proportion $(K-2)/(K-1)$ of common data points. Hence, given a model and sample size, training errors will be autocorrelated across rounds with autocorrelation being more severe the larger is $K$. In this case, the estimated models will be similar across training sets. As a result, given the sample size, the CV error will exhibit a higher variance for larger $K$, referred to as the high variance problem \citep[242-244]{friedman2001elements}. Due to the high variance problem, the point estimate of the CV error may not be very reliable. Further studies \citep{kohavi1995study,kearns1999algorithmic,blum1999beating,varma2006bias} confirm the high variance problem with $K$-CV for a range of learning algorithms, such as linear regression, $K$-nearest-neighbors, support vector machines, classification trees, regression trees and shrunken centroids.

An important consequence of the high variance problem is that $K$-CV may not be relied upon for model selection. $K$-CV is widely used for model and variable selection: the CV error for each model (or combination of variables) is computed and the model with minimal CV error, referred to as the \emph{CV-error minimizer}, is selected. However, owing to the high variance in the CV error for each model, the order of the CV errors for different models will vary across samples and the CV-error minimizer will be unstable across samples. \citet{nan2014variable,lim2016estimation} refer to this problem as a `stability issue of model selection' and point out that it becomes more severe in high-dimensional spaces. \citet{cawley2010over} confirm that the model selection stability problem is a fundamental concern for $K$-CV. \citet{lim2016estimation} shows that variable selection algorithms like the lasso may lead to models that are unstable in high dimensions and consequently ill-suited for interpretation and generalization.

Thus, to improve the stability and accuracy of model selection, it is essential to quantify the variability of the CV error of the model. To do that, the most direct way is to estimate the variance of the CV error of the given model. However, computing the CV error variance represents a theoretical and empirical challenge, as detailed by \citet{kohavi1995study}, \citet{dietterich1998approximate}, and \citet{nadeau2000inference}. \citet{dietterich1998approximate} shows that accurately evaluating a model based on CV errors, requires the assumption that the performance of the model changes smoothly with the size of the training set. While smoothness can be checked experimentally, it is hard to check empirically and in any case it is often violated in practice \citep{haussler1996rigorous}. \citet{kohavi1995study} shows that variance estimation of the CV error is biased for any $K$ and, using large-scale simulations, that the standard deviation of the CV error is $\sqrt{50}$ times larger than the population standard deviation of the average test error. \citet{nadeau2000inference} criticize traditional methods more generally on the grounds that research on CV errors does not typically take into account autocorrelation arising from the choice of training set, that methods to compute the variance of the CV error are biased or unstable, and that the expected value for the variance estimate of the CV error exceeds the actual variance. Lastly, \citet{bengio2004no} proves that, given a model in $K$-CV, there is no universal (valid under all distributions) unbiased estimator for the variance of the CV error, even if the variance exists.

Given the difficulties estimating the CV error variance directly, an alternative is to construct an upper bound to quantify the variation of the CV error along the lines of \citet{vc68} or \citet{bartlett2002rademacher}. However, existing bounds for the CV error \citep{devroye2013probabilistic} are specific to locally-defined classifications (such as nearest-neighbor), which are either heuristic or derived under relatively restrictive assumptions. Another way to construct an upper bound for the CV error is to estimate the percentile of the corresponding error distribution via the bootstrap. However, bootstrapping in $K$-CV is computationally expensive, especially for high-dimensional, sparse modelling.

\subsection{Contribution}

In this paper, we use concentration inequalities to establish a general upper bound for the CV error that can be adapted to many $K$-CV-based estimators and learning algorithms. Based on Rademacher complexity of the model and the Orlicz-$\Psi_v$ norm of the error process, the CV error upper bound applies to both light-tail and heavy-tail error distributions. We also extend the CV error upper bound to $\beta$-mixing data using the technique of independent blocking.

We implement the theoretical results in simulations using a Python package (\texttt{CVbound}) that computes the CV error upper bound for a $K$-CV-based algorithm. Using the lasso as an example, we demonstrate that the behavior of the upper bounds in simulations is tight and stable across different parameter settings and random seeds.

As well as effectively bounding the CV errors and average test errors for the lasso, the new upper bounds may also be used as a criterion for model or variable selection. Compared with the CV-error minimizer, our simulations show that tuning the lasso penalty parameter according to the minimizer of the upper bound yields a more sparse and stable model that retains all of the relevant variables.

The paper is organized as follows. In section~2, we list our definitions and assumptions. In section~3, we derive the upper bounds of the CV errors under traditional i.i.d.\ settings. In section~4, using the independent blocking technique, we construct upper bounds for the CV error under a $\beta$-mixing scenario. In section~5, using the lasso as an application, we construct upper bounds for the CV errors, illustrating the shape and tightness of the new bound. The simulations also reveal that the upper-bound minimizer improves the accuracy and sparsity of variable selection in the lasso. Appendix~A contains the relevant proofs and Appendix~B contains additional plots of the simulations.


\section{Definitions and assumptions}

\begin{definition}
[Subsamples, errors and the Orlicz-$\Psi_\nu$ norm]\label{def:notation}
\end{definition}
  \begin{enumerate}
    \item   Let $(y_i, \mathbf{x}_i)$ denote a sample point from $F(y, \mathbf{x})$, the joint distribution of $(y,\mathbf{x})$. Given a sample $(Y,X)$ of size $n$, the \textit{training set} is $(Y_{t},\,X_{t})\in\mathbb{R}^{n_t \times \left( 1 + p \right) }$ and the \textit{validation set} is $(Y_{s},\,X_{s})\in\mathbb{R}^{n_s \times \left( 1 + p \right) }$ where $n_t$ is the size of the training set and $n_s$ is the size of the validation set. For $K$-CV, $n_s = n - n_t = n/K$.
    \item   Let there be $L$ models in $K$-CV, each of which is defined as a different model class $\Lambda_l$, $ 1 \leqslant l \leqslant L $. For $b \in \Lambda_l$, the \textit{loss function} is $Q \left( b, y_i,\mathbf{x}_i \right),\,i=1,\ldots,n$, the \textit{population error} is
        \[ \mathcal{R} \left( b, Y, X \right) = \int Q \left( b, y,\mathbf{x} \right) \mathrm{d} F \left(y ,\mathbf{x} \right)\]
        and the \textit{empirical error} is
        \[\mathcal{R}_{n} \left( b, Y, X \right) = \frac{1}{n}\;\sum_{i=1}^n \; Q \left( b, y_i, \mathbf{x}_i \right).\]
    \item  For $b \in \Lambda_l$, the \textit{training error} is $\mathcal{R}_{n_t} \left( b, Y_{t}, X_{t} \right)$ and the \textit{prediction error} is $\mathcal{R}_{n_s} \left( b, Y_{s}, X_{s} \right)$.
    \item For round $q$ in $K$-CV, the training set is $\left( Y_t^q, X_t^q \right)$ and the validation set is $ \left( Y_s^q, X_s^q \right)$. Thus, for round $q$, the training error on the training set is $\mathcal{R}_{n_t}\left(b, Y_t^q, X_t^q \right)$ and the prediction error on the validation set is $\mathcal{R}_{n_s}\left( b, Y_s^q, X_s^q \right)$.
    \item Define the Orlicz-$\Psi_\nu$ norm for an empirical process $Z$ to be
      \[
        \left\Vert Z \right\Vert_{\Psi_\nu} :=
        \inf_{u > 0} \left\{ u \; \left\vert \; \mathbb{E}
        \left[ \exp \left\{ \frac{\left\vert Z \right\vert^\nu } { u^\nu } \right\} \right]
        < 2 \right\}. \right.
      \]
      Also define the empirical process, $\forall b \in \Lambda_l, \; \forall q \in \left[1,K\right]$,
      \begin{align}
        U_q & := \sup_{b \in \Lambda_l} \; \left\vert
          \mathcal{R}_{n_s} \left( b, Y_{s}^q, X_{s}^q \right) - \mathcal{R}_{n_t} \left( b, Y_{t}^q, X_{t}^q \right)
          \right\vert \notag \\
        T_q & := U_q - \mathbb{E} \left[ U_q \right]. \notag
      \end{align}
  \end{enumerate}

A key step in our analysis is to use the Orlicz-$\Psi_\nu$ norm as opposed to the Lebesgue-$p$ ($L^p$) norm. The Orlicz-$\Psi_\nu$ norm encodes the heaviness of the tail for a random variable. Moreover, the Orlicz-$\Psi_\nu$ norm offers a method to derive exponential concentration inequalities for random variables for which the $L^2$ norm does not exist.

\subsection{Complexity measures for a model class}
The traditional method for bounding out-of-sample performance yields the complexity measures of a model class, developed by \citet{vc68}, \citet{bartlett2002rademacher} and others. We show that on average the CV error of a model may be bounded by the sum of a complexity measure of a given model class, the training error and a measure of sample variation. The key step to establishing the bound for the CV error is to use the Rademacher measure of model complexity as defined by \citet{bartlett2002rademacher}.
%
%

\begin{definition}[Rademacher complexity]
\label{defn:rad_complexity}
  Let $ \left\{ \mathbf{x}_1, \ldots, \mathbf{x}_n \right\}$ denote $n$ observations sampled from the distribution of $X$. Let $\left\{ \omega_1, \ldots, \omega_n \right\}$ denote $n$ independent observations sampled from the Rademacher distribution, which are also independent from $\left\{ \mathbf{x}_1, \ldots, \mathbf{x}_n \right\}$. Let $\Lambda_l$ be the model class and, for $b \in \Lambda_l$, let $b\left( \mathbf{x}_i \right)$ denote the predicted value for $\omega_i$ based on $\mathbf{x}_i$. The \emph{empirical} Rademacher complexity is defined to be
  \begin{equation}
    \widehat{ \mathrm{ RC } } \left( \Lambda_l \right)
      =
      \mathbb{E}_{\omega} \left[ \sup_{b \in \Lambda_l} \left\vert \frac{2}{n} \sum_{i=1}^{n} \omega_i b \left( \mathbf{x}_i \right) \right\vert \; \vert \; \mathbf{x}_1, \ldots, \mathbf{x}_n \right]
  \end{equation}
  The \emph{Rademacher complexity} of $\Lambda_l$ is
  \begin{equation}
    \mathrm{RC}_{n} \left( \Lambda_l \right) =
      \mathbb{E}_{\mathbf{x}} \left[ \widehat{ \mathrm{RC}}_{n} \left( \Lambda_l \right) \right].
  \end{equation}
\end{definition}

To compute the Rademacher complexity of $\Lambda_l$, we use $ \left\{ \mathbf{x}_1, \ldots, \mathbf{x}_n \right\}$ and the (independent) $\left\{ \omega_1, \ldots, \omega_n \right\}$. First, we choose $b$ from $\Lambda_l$ to maximize the correlation between the actual value of $\omega_i$ and its predicted value $b \left( \mathbf{x}_i \right)$. To remove any sample variation in $\left\{ \mathbf{x}_i , \omega_i \right\}_{i=1}^{n}$, we integrate out $\omega_i$ and the $\mathbf{x}_i$. Rademacher complexity measures the maximum extent the model in class $\Lambda_l$ overfits the data: given the model $\Lambda_l$ and data $\left\{ \mathbf{x}_i , \omega_i\right\}_{i=1}^{n}$, the larger Rademacher complexity, the more likely the model overfits the training data. Rademacher complexity can also be used to measure by how much (at most) the sample error deviates from the population error. Empirically implementing Rademacher complexity requires solving several computational problems, which we return to in the simulations section below.

To construct an upper bound for the out-of-sample performance of $K$-CV, we must also define one-round Rademacher complexity.

\begin{definition}[One-round Rademacher complexity for $K$-CV]
\label{defn:one-round_RC}
  Given a sample size $n$ and the number of folds $K$, we define the \emph{one-round Rademacher complexity} of the model class $\Lambda_l$, $\mathrm{RC} \left( \Lambda_l, n, K \right)$ to be
  \[
    \mathrm{RC} \left( \Lambda_l, n, K \right) =
        \tfrac{1}{2} \mathrm{RC}_{n_s} \left( \Lambda_l \right)
      + \tfrac{1}{2} \mathrm{RC}_{n_t} \left( \Lambda_l \right)
  \]
\end{definition}
In this paper, we use Rademacher complexity, the average training error of the model and a measure of sample variation to derive a bound for the CV error. The main assumptions for our analysis are as follows.

\subsection{Main assumptions}

\begin{enumerate}
    \item[\textbf{A1.}]   Let $\left( \Omega, \mathcal{F}, P \right)$ be a probability space, where $\Omega$ is the sample space for $X$ and $Y$, $\mathcal{F}$ is its sigma algebra and $P$ is a probaility measure on $\mathcal{F}$. We assume the loss function $Q$: $\Omega \rightarrow \mathbb{R}^+$ is $\mathcal{F}$-measurable, $\forall b \in \Lambda_l$. We assume the population error $\mathcal{R} \left( b, Y, X \right)$ is well defined for any $b \in \Lambda_l$. Specifically, we assume that the Orlicz-$\Psi_\nu$ norms of all the loss processes are well-defined for all $\nu \geqslant 1$.
    \item[\textbf{A2.}]   The data $ \left( Y, X \right)$ are independently sampled from the same  population. All points in each fold of $K$-CV are randomly partitioned.

    \item[\textbf{A3.}]   \citep{vapnik1998statistical} Each of the $L$ models in $K$-CV belongs to a different model class $\Lambda_l$, $1 \leqslant l \leqslant L$. A linear order exists within the set $ \left\{ \mathrm{RC}_{n/K} \left( \Lambda_l \right) \; \vert \; l = 1, \ldots, L \right\}$.
    \item[\textbf{A4.}]   \citep{vapnik1998statistical} For all the empirical processes in this paper, the VC entropy in model class $\Lambda_l$, $H^{\Lambda_l} \left( \epsilon, n \right)$, satisfies $\lim_{n \rightarrow \infty} H^{\Lambda_l} \left( \epsilon, n \right) / n = 0,\; \forall \epsilon > 0$.
    \item[\textbf{A5.}] Given the model class $\Lambda_l$,
      \begin{align}
          \frac{ \mathbb{E} \left[ \sup_{b \in \Lambda_l} \left( \mathcal{R}_n \left( b, Y, X \right) - \mathcal{R} \left( b, Y, X \right) \right)^2 \right] }
          { \sup_{b \in \Lambda_l} \left\{ \mathbb{E} \left[ \left( \mathcal{R}_n \left( b, Y, X \right) - \mathcal{R} \left( b, Y, X \right) \right)^2 \right] \right\} }
          = \theta \in \mathbb{R}^+.
          \label{a5:theta}
      \end{align}
\end{enumerate}

Several remarks apply to the assumptions. \textbf{A1} defines the class of loss distributions for our analysis. For $\nu \geqslant 2$, if the Orlicz-$\Psi_\nu$ norm is well-defined, the loss distribution is in the subgaussian family and large values of the loss do not occur very often. For $\nu \geqslant 1$, if the Orlicz-$\Psi_\nu$ norm is well-defined, the loss distribution is in the subexponential family and the tails are heavier. If $\nu \in \left(0 , 1\right)$, the Orlicz-$\Psi_\nu$ norm is actually a pseudo norm and some of our results cannot be generalized. \textbf{A2}~applies only for section~3; we modify the assumption in section~4 to allow for $\beta$-mixing in the data generating process. \textbf{A3} originates in the work of \citet{vapnik1998statistical} and rules out cases where the Rademacher complexities of different models cannot be pairwise compared. As shown in \citet{vapnik1998statistical}, \textbf{A4}~ensures the empirical error of any $b \in \Lambda_l$ converges to the population error as $n \rightarrow \infty$. Lastly, \textbf{A5}~is known to be a general condition for the partial interchangeability of $\sup \left[ \cdot \right]$ and $\mathbb{E} \left[ \cdot \right]$, which is useful in the proofs below.

\section{Upper bounds for CV errors with i.i.d.\ data}

Before deriving the upper bound for the CV errors of any $b \in \Lambda_l$, we need to construct an upper bound for the round-$q$ prediction error of $b$, $\forall q$. Here we show that the prediction error in each round for $b \in \Lambda_l$ is bounded by eq.~(\ref{eq:VC_bound}).
%
%
\begin{theorem}[Upper bounds for the round-$q$ prediction error]
  Under \textbf{A1}-\textbf{A5}, in round $q$ of $K$-CV, the following upper bound for the prediction error holds with probability at least $ 1 - \varpi \in \left( 0 , 1 \right]$, $\forall b \in \Lambda_l$.
  \begin{equation}
    \mathcal{R}_{n_s} \left( b, Y_s^q, X_s^q \right) \leqslant \mathcal{R}_{n_t}
      \left( b, Y_t^q, X_t^q \right) + 2 \cdot \mathrm{RC}_{n/K} \left( \Lambda_l \right) + \varsigma,
    \label{eq:VC_bound}
  \end{equation}
  \noindent
  where
  \[
    \varsigma =
      \left\{
        \begin{array}{ll}
          2 \cdot M \cdot \sqrt{ \theta \cdot \frac{ \log \left( 1 / \varpi \right) } { n / K } },
          & \mbox{ if } \sup_{ b \in \Lambda_l } \left( Q \right) \leqslant M \\[8pt]
          2 \cdot B \cdot \sqrt{ \theta \cdot \frac{ \log \left( 1 / \varpi \right) } { n / K } }
          & \mbox{ if } \rho_j \mbox{ is subgaussian } \\[8pt]
          \left\Vert \rho_j \right\Vert_{\Psi_1} \cdot \log \sqrt[c] {\left( 2 / \varpi \right) } ,
          & \mbox{ if } \rho_j \mbox{ is subexponential and } ~ 2 e^{-2c} > \varpi  >  0  \\
          \left\Vert \rho_j \right\Vert_{\Psi_1} \cdot \left( 2 \cdot \log \sqrt[c] { \left( 2 / \varpi \right) } \right)^{\frac{1}{2}} ,
          & \mbox{ if } \rho_j \mbox{ is subexponential and } ~ 2 e^{-2c} \leqslant \varpi \leqslant 1
        \end{array}\right.
  \]
    and, for the i\textsuperscript{th} point in the j\textsuperscript{th} fold, $\left( y^{j,i}, \mathrm{x}^{j,i} \right)$,
  \[
  \rho_j = \sup_{b \in \Lambda_l} \left\vert \sum_{i = 1}^{ n/K } Q \left( b, y^{j,i}, \mathrm{x}^{j,i} \right) / ( n/K ) - \mathcal{R} \left( b, Y, X \right) \right\vert,
  \]
  $B^2$ is $\sup_{b \in \Lambda_l} \left\{ \mathrm{var} \left[ Q \left( b, y^{j,i}, \mathrm{x}^{j,i} \right) \right] \right\}$ if $\rho_j$ is subgaussian, and $c$ is an absolute constant in the exponential inequality \citep{Lecue09tool}.
\label{thm:one_round_VC_bound}
\end{theorem}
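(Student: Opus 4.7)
The plan is to combine a population-error decomposition of the prediction-minus-training gap with symmetrization (to bring in Rademacher complexity) and Orlicz-$\Psi_\nu$-based tail concentration (to produce $\varsigma$), uniformly over $b \in \Lambda_l$. First, for each $b$ I insert and subtract $\mathcal{R}(b, Y, X)$ to write
\[
\mathcal{R}_{n_s}(b, Y_s^q, X_s^q) - \mathcal{R}_{n_t}(b, Y_t^q, X_t^q) = \bigl[\mathcal{R}_{n_s}(b) - \mathcal{R}(b)\bigr] + \bigl[\mathcal{R}(b) - \mathcal{R}_{n_t}(b)\bigr],
\]
and then take $\sup_{b \in \Lambda_l}$ of both sides to obtain the uniform bound $\sup_b[\mathcal{R}_{n_s} - \mathcal{R}_{n_t}] \leq \rho_s + \rho_t$, where $\rho_s := \sup_b|\mathcal{R}_{n_s}(b) - \mathcal{R}(b)|$ is the supremum deviation on the validation fold (this is the $\rho_j$ of the statement at $j=q$), and $\rho_t := \sup_b|\mathcal{R}_{n_t}(b) - \mathcal{R}(b)|$ is its training-side analog, which is dominated by $\max_{j \neq q}\rho_j$ via the triangle inequality applied fold-by-fold.

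Second, to handle the expectations I apply the classical symmetrization lemma: introducing an independent ghost sample and a Rademacher sequence $\{\omega_i\}$ gives $\mathbb{E}[\rho_n] \leq \mathrm{RC}_n(\Lambda_l)$, because the factor $2/n$ built into Definition~\ref{defn:rad_complexity} already absorbs the standard symmetrization constant. Since $n_s = n/K \leq n_t$ and Rademacher complexity is non-increasing in sample size, $\mathbb{E}[\rho_s] + \mathbb{E}[\rho_t] \leq 2\,\mathrm{RC}_{n/K}(\Lambda_l)$, recovering the Rademacher term in (\ref{eq:VC_bound}).

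Third, I control $\rho_j - \mathbb{E}[\rho_j]$ by tail concentration, case by case, with a union bound over the folds involved that gives total failure probability at most $\varpi$. When $\sup Q \leq M$, a bounded-differences or Talagrand-type inequality for the empirical-process supremum $\rho_j$ yields an exponential tail at rate $(n/K)/M^2$; Assumption~A5 then replaces the expected supremum of the squared deviation by $\theta$ times the worst-case pointwise variance, producing the $\sqrt{\theta}$ factor. When $\rho_j$ is subgaussian, the analogous subgaussian concentration with variance proxy $B^2$, combined with A5, delivers the second form of $\varsigma$. When $\rho_j$ is subexponential, I invoke the Orlicz-$\Psi_1$ tail inequality of \citet{Lecue09tool}, essentially $P(|\rho_j| > t) \leq 2\exp\{-\min(ct/\|\rho_j\|_{\Psi_1},\, ct^2/\|\rho_j\|_{\Psi_1}^2)\}$; inverting against $\varpi$ transitions at precisely $\varpi = 2e^{-2c}$, yielding the two piecewise expressions in the statement.

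The main obstacle I foresee is the subexponential case: matching constants between the Orlicz-$\Psi_1$ norm of Definition~\ref{def:notation} and a usable Bernstein-type tail inequality, and splicing the linear and square-root tail regimes together at the cutoff $\varpi = 2 e^{-2c}$, is the most delicate bookkeeping step and is where the absolute constant $c$ enters the statement. A secondary technicality is verifying that A5 applies with a single $\theta$ at both sample sizes $n_s$ and $n_t$, and that $\mathrm{RC}_n(\Lambda_l)$ is genuinely non-increasing in $n$ under Definition~\ref{defn:rad_complexity}'s normalization; both are standard but worth checking once so that the final bound can be written in terms of the single quantity $\mathrm{RC}_{n/K}(\Lambda_l)$.
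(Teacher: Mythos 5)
Your plan follows essentially the same route as the paper: insert and subtract the population error, use the triangle inequality to reduce the gap to the two supremum deviations $\Phi_{n_s}$ and $\Phi_{n_t}$, bound the expectation of each fold-level deviation by $\mathrm{RC}_{n/K}(\Lambda_l)$ via symmetrization, use \textbf{A5} to trade the expected supremum of the squared deviation for $\theta$ times the worst-case pointwise variance, and invert a Chernoff (subgaussian/bounded) or Bernstein-type Orlicz-$\Psi_1$ (subexponential) tail with the split at $\varpi = 2e^{-2c}$. All of the main ideas are present.

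The one step where your plan diverges in a way that would not reproduce the stated $\varsigma$ is the treatment of the training-side supremum. You dominate $\rho_t$ by $\max_{j\neq q}\rho_j$ and then invoke ``a union bound over the folds involved.'' The paper instead writes $\Phi_{n_t} \leqslant \frac{1}{K-1}\sum_{j\neq q}\rho_j$, i.e.\ the \emph{average} of the fold-level deviations, and then applies a single concentration inequality to the weighted sum $\frac{1}{K-1}\sum_{j\neq q}\rho_j + \rho_K$ with weight vector $\mathbf{a}=[1/(K-1),\ldots,1/(K-1),1]$, so that $\left\Vert \mathbf{a}\right\Vert_2^2 = K/(K-1)\leqslant 2$ delivers exactly the constants $2\sigma\sqrt{\log(1/\varpi)}$ (resp.\ the two subexponential expressions). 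A max plus a union bound over $K-1$ folds would instead insert a $\log(K-1)$ (or a $\varpi/(K-1)$ inside the logarithm), giving a strictly weaker bound than the theorem claims. Relatedly, because the paper works entirely with fold-level quantities at the common size $n/K$, it never needs the monotonicity of $\mathrm{RC}_n$ in $n$ that you flag as a technicality to verify; that monotonicity is in fact not automatic for arbitrary classes, so the fold-level averaging is doing real work in avoiding it. If you replace your max-plus-union-bound step with the averaging-plus-weighted-sum step, the rest of your outline goes through and matches the paper.
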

%

Eq.~(\ref{eq:VC_bound}) quantifies the variability of the prediction error for the model trained on the round-$q$ training set.  Since eq.~(\ref{eq:VC_bound}) holds with probability at least $1 - \varpi$, the right-hand-side (RHS) may be interpreted as the upper bound for the $(1 - \varpi) \times 100$\textsuperscript{th} percentile of the round-$q$ prediction error distribution. Since the prediction error is always non-negative, the lower bound for the distribution of the prediction error is $0$. Thus, given the model, the interval between $0$ and the RHS of eq.~(\ref{eq:VC_bound}) forms a confidence interval for the empirical prediction error with sample size $n/K$.

The first two terms on the RHS of eq.~(\ref{eq:VC_bound}) are the training error $\mathcal{R}_{n_t}$ and the measure of model complexity $\mathrm{RC}_{n/K} \left( \Lambda_l \right)$, respectively. Given $n$ and $K$, a more complex model is more likely to overfit the data, resulting in a training error that is smaller than the prediction error, on average. On the other hand, a more complex model results in a higher value for $\mathrm{RC}_{n/K} \left( \Lambda_l \right)$, ceteris paribus. Thus, the bound in eq.~(\ref{eq:VC_bound}) captures the trade-off between model complexity and overfitting and determines, ceteris paribus, how the prediction error of a model changes across samples.

The third term on the RHS of eq.~(\ref{eq:VC_bound}), $\varsigma$, is related to the size of each fold and the tail heaviness of the error distribution. The larger the size of each fold, the smaller $\varsigma$. Further, if $\rho_j$ is subgaussian, the larger the variance of $\rho_j$, the larger $\varsigma$ and the higher the upper bound, ceteris paribus. If $\rho_j$ is subexponential, the heavier the tail of the error distribution, the larger the Orlicz-$\Psi_1$ norm of $\rho_j$, the larger $\varsigma$ and the higher the upper bound.

$\varpi$ also affects the location of the upper bound. $1 - \varpi$ is the probability that the bound holds. Since the RHS of eq.~(\ref{eq:VC_bound}) bounds the $(1 - \varpi) \times 100$\textsuperscript{th} percentile of the round-$q$ prediction error distribution, intuitively a change in $\varpi$ should change the location of the bound. For example, if $\varpi$ increases from $0.10$ to $0.15$, the $(1 - \varpi) \times 100$\textsuperscript{th} percentile changes from the $90$\textsuperscript{th} percentile to the $85$\textsuperscript{th} percentile, which shifts the bound downwards. Such intuition is reflected in eq.~(\ref{eq:VC_bound}): ceteris paribus, a larger $\varpi$ reduces the magnitude of the last term in eq.~(\ref{eq:VC_bound}), shifting the bound downwards. In the simulations, we set $1 - \varpi = 0.9$ (see the discussion in section~5).

The next step is to convolute the prediction error and training error in each round and establish the upper bound for the CV error. If the empirical processes $\left\{ \mathcal{R}_{n_t} - \mathcal{R}_{n_s} \right\}$ in all rounds of $K$-CV are independent, we can directly apply concentration inequalities, such as the Hoeffding or Bernstein inequalities, and approximate the probability of the upper bounds. However, with $K>2$, $\left\{ \mathcal{R}_{n_t} - \mathcal{R}_{n_s} \right\}$ is autocorrelated, and the straightforward i.i.d.\ concentration inequalities may not apply. Thus, to establish the bound for the CV error, we first present the Chebyshev inequality for a generic dependent process $\left\{ W_i \right\}$.

%
%
\begin{lemma}
Assume $\left\{ W_i \right\}_{i=1}^{n}$ is sampled from a stationery process and that the autocovariance function $\mathrm{cov} \left(W_{i+l},\; W_{i} \right) := \gamma_l < \infty,\, \forall l \in \mathbb{R}$. The following inequality holds for any $\varpi \in \left[0,1\right)$,

  \begin{equation}
    \mathrm{Pr} \left( \left\vert \overline{W} - \mathbb{E} \left( W \right) \right\vert \leqslant \epsilon \right)
    \geqslant 1 - \frac{ \gamma_0 }{ \epsilon^2 n} \cdot \left( 1 + 2 V_n \left[ W \right] \right),
    \label{Cheby_ineq}
  \end{equation}

\noindent
where $V_n \left[ W \right] := \sum_{l=1}^{n-1} \left\vert \gamma_l\right\vert$ and $\varpi = \gamma_0 / ( n \cdot \epsilon^2 ) \cdot \left( 1 + 2 \cdot V_n \left[ W \right] \right)$.
\label{lem:Cheby_ineq}
\end{lemma}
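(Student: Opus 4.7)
My approach is simply Chebyshev's inequality applied to the sample mean $\overline{W} = n^{-1}\sum_{i=1}^{n} W_i$, with the caveat that $\mathrm{Var}(\overline{W})$ must be bounded carefully because the $W_i$ are not independent. Chebyshev gives $\Pr(|\overline{W} - \mathbb{E}(W)| > \epsilon) \leqslant \mathrm{Var}(\overline{W})/\epsilon^2$, so the entire task reduces to producing a bound of the form $\mathrm{Var}(\overline{W}) \leqslant (\gamma_0/n)\cdot(1 + 2V_n[W])$ using stationarity.

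The main calculation is the standard double-sum expansion
\[
\mathrm{Var}(\overline{W}) = \frac{1}{n^2}\sum_{i=1}^{n}\sum_{j=1}^{n}\mathrm{cov}(W_i, W_j).
\]
Stationarity lets me replace $\mathrm{cov}(W_i, W_j)$ by $\gamma_{|i-j|}$. Re-indexing by the lag $l = |i-j|$ and counting the $(n-l)$ ordered pairs at each lag yields
\[
\mathrm{Var}(\overline{W}) = \frac{1}{n^2}\Big[\,n\gamma_0 + 2\sum_{l=1}^{n-1}(n-l)\gamma_l\,\Big].
\]
From here I would bound $(n-l) \leqslant n$ and $\gamma_l \leqslant |\gamma_l|$ to obtain $\mathrm{Var}(\overline{W}) \leqslant \gamma_0/n + (2/n)\sum_{l=1}^{n-1}|\gamma_l| = \gamma_0/n + 2V_n[W]/n$. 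Substituting this into Chebyshev and rearranging gives $\Pr(|\overline{W}-\mathbb{E}(W)|\leqslant \epsilon)\geqslant 1 - (\gamma_0/(n\epsilon^2))\cdot(1+2V_n[W])$, matching the statement; reading off the equality in the right-hand side identifies $\varpi$ as claimed.

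There is no real obstacle here: once stationarity is used to collapse the double sum into a sum over lags, the remaining steps are mechanical. The one conceptual point worth flagging is that the absolute values in the definition of $V_n[W]$ are essential because, without sign assumptions on the autocovariances, one cannot absorb $\gamma_l$ directly into a variance-style upper bound; taking $|\gamma_l|$ pays the worst-case price at each lag and makes the inequality valid uniformly in the dependence structure. The only place the argument is even slightly delicate is the factorization that pulls $\gamma_0$ outside the bracket, which is a cosmetic rearrangement once the lag-sum bound is in hand.
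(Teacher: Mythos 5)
Your route is the same as the paper's: Chebyshev applied to $\overline{W}$, the double-sum expansion of $\mathrm{Var}(\overline{W})$, reindexing by lag under stationarity, and taking absolute values of the autocovariances. The intermediate bound you reach, $\mathrm{Var}(\overline{W}) \leqslant \gamma_0/n + (2/n)\sum_{l=1}^{n-1}\vert\gamma_l\vert$, is correct and in fact your counting (bounding $n-l$ by $n$ directly) is slightly cleaner than the paper's nested sum over $j$ and then $l$.

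However, the step you dismiss as ``a cosmetic rearrangement'' is exactly where the argument breaks. With the statement's definition $V_n[W] := \sum_{l=1}^{n-1}\vert\gamma_l\vert$, your bound reads $\gamma_0/n + 2V_n[W]/n$, whereas the target is $(\gamma_0/n)\left(1 + 2V_n[W]\right) = \gamma_0/n + 2\gamma_0 V_n[W]/n$. These differ by a factor of $\gamma_0$ on the second term, so your derived bound implies the claimed one only when $\gamma_0 \geqslant 1$; if $\gamma_0 < 1$ the claimed right-hand side is strictly smaller than what you actually proved, and the final inequality does not follow. This is not your invention: the lemma statement and the paper's own proof are mutually inconsistent on this point. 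In the proof the authors silently redefine $V_n[W]$ by setting $\gamma_0 \cdot V_n[W] := \sum_{l=1}^{n-1}\vert\gamma_l\vert$, i.e., $V_n[W]$ becomes the sum of absolute auto\emph{correlations}, and with that normalization the factorization $(\gamma_0/n)(1+2V_n[W])$ is legitimate. To close the gap you should either adopt that normalized definition explicitly, or keep the unnormalized one and state the conclusion as $\mathrm{Pr}\left(\vert\overline{W}-\mathbb{E}(W)\vert\leqslant\epsilon\right) \geqslant 1 - \left(\gamma_0 + 2V_n[W]\right)/(n\epsilon^2)$. As written, your last sentence asserts an identity that is false in general.
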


Based on Definition~\ref{defn:one-round_RC} and using Lemma~\ref{lem:Cheby_ineq} and Theorem~\ref{thm:one_round_VC_bound}, an upper bound for the CV error of the $l$\textsuperscript{th} model in $\Lambda_l$ is now established.
%
%
\begin{theorem}[Upper bounds for the CV error]
Under \textbf{A1}-\textbf{A5} and Lemma~\ref{lem:Cheby_ineq}:

  \begin{enumerate}
  \item If \;$\left\Vert \rho_j \right\Vert_{\Psi_2} \leqslant \infty$, $\forall b \in \Lambda_l$, the following upper bound holds with probability at least $ \left( 1 - \kappa \right)^+$,

    \begin{align}
      \frac{ 1 }{ K } \sum^{ K }_{ q = 1 } \mathcal{R}_{n_s} \left( b, Y_s^q, X_s^q \right)
        \leqslant \frac{ 1 } { K } \sum^{ K }_{ q = 1 }
        \mathcal{R}_{n_t} \left( b, Y_t^q, X_t^q \right)
          + 2 \cdot \mathrm{RC} \left( \Lambda_l, n, K \right)
          + \varsigma
      \label{eq1:convoluted_VC_bound}
    \end{align}

    \noindent
    where
    \begin{equation}
      \kappa = \frac{ \left(\theta - 1 \right)/\theta + 1  }
        { \left( 2 \cdot K \right) \cdot  \log \left( 1 / \varpi \right)}
        \cdot \left( 1 + 2 V_K \left[ T_q \right] \right),
      \label{eq:kappa}
    \end{equation}
    and $\varsigma$ is defined in Theorem~\ref{thm:one_round_VC_bound}.

  \medskip
  \item If \;$\left\Vert \rho_j \right\Vert_{\Psi_1} \leqslant \infty$ and the process $T_q$ has a finite variance, the following upper bound holds with probability at least $ \left( 1 - \kappa \right)^+ $,

    \begin{align}
      \frac{ 1 }{ K } \sum^{ K }_{ q = 1 } \mathcal{R}_{n_s} \left( b, Y_s^q, X_s^q \right)
        \leqslant \frac{ 1 } { K } \sum^{ K }_{ q = 1 }
        \mathcal{R}_{n_t} \left( b, Y_t^q, X_t^q \right)
          + 2 \cdot \mathrm{RC} \left( \Lambda_l, n, K \right)
          + \varsigma
      \label{eq2:convoluted_VC_bound}
    \end{align}
    where

    \[
    \kappa =
      \left\{
        \begin{array}{ll}
          \frac{ 4 \cdot \left[ \frac { \theta - 1 } { \theta } + 1 \right] }
          { K \cdot \log \sqrt[ c ] { 2 / \varpi } }
          \cdot \left( 1 + 2 V_K \left[ T_q \right] \right) ,
          & \mbox{ if } \varpi \in \left[ 2 \exp \left\{ -2c \right\}, 1 \right]. \\
          \frac{ 8 \cdot \left[ \frac { \theta - 1 } { \theta } + 1 \right] }
          { K \cdot \left( \log \sqrt[ c ] { 2 / \varpi } \right)^2 }
          \cdot \left( 1 + 2 V_K \left[ T_q \right] \right) ,
          & \mbox{ if } \varpi \in \left(0 , 2 \exp \left\{ -2c \right\} \right).
        \end{array}\right.
    \]
\label{thm:convoluted_VC_bound}
\end{enumerate}
\end{theorem}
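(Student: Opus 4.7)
The plan is to lift the one-round bound of Theorem~\ref{thm:one_round_VC_bound} to the $K$-round average by applying the Chebyshev-type inequality of Lemma~\ref{lem:Cheby_ineq} to the dependent sequence $\{U_q\}_{q=1}^{K}$, whose centered version $T_q = U_q - \mathbb{E}[U_q]$ is stationary under A2 and the random fold construction. Because $U_q = \sup_{b \in \Lambda_l}|\mathcal{R}_{n_s}(b, Y_s^q, X_s^q) - \mathcal{R}_{n_t}(b, Y_t^q, X_t^q)|$, the uniform-in-$b$ pointwise inequality
\[
  \frac{1}{K}\sum_{q=1}^{K} \mathcal{R}_{n_s}\!\left(b, Y_s^q, X_s^q\right) - \frac{1}{K}\sum_{q=1}^{K} \mathcal{R}_{n_t}\!\left(b, Y_t^q, X_t^q\right) \leqslant \bar{U} := \frac{1}{K}\sum_{q=1}^{K} U_q
\]
reduces the problem to controlling $\bar{U}$, and Lemma~\ref{lem:Cheby_ineq} gives, for every $\epsilon > 0$,
\[
  \Pr\!\left(\bar{U} \leqslant \mathbb{E}[U_q] + \epsilon\right) \geqslant 1 - \frac{\mathrm{var}(U_q)\bigl(1 + 2 V_K[T_q]\bigr)}{\epsilon^{2} K}.
\]

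Next I would bound the mean $\mathbb{E}[U_q]$ by inserting $\pm \mathcal{R}(b, Y, X)$, applying the triangle inequality, and invoking the standard Rademacher symmetrization on each summand to obtain
\[
  \mathbb{E}[U_q] \leqslant \mathbb{E}\!\sup_{b \in \Lambda_l}\bigl|\mathcal{R}_{n_s}^q(b) - \mathcal{R}(b)\bigr| + \mathbb{E}\!\sup_{b \in \Lambda_l}\bigl|\mathcal{R}(b) - \mathcal{R}_{n_t}^q(b)\bigr| \leqslant \mathrm{RC}_{n_s}(\Lambda_l) + \mathrm{RC}_{n_t}(\Lambda_l) = 2\,\mathrm{RC}(\Lambda_l, n, K),
\]
where the last equality is Definition~\ref{defn:one-round_RC}. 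Choosing $\epsilon = \varsigma$ with $\varsigma$ taken from Theorem~\ref{thm:one_round_VC_bound} then reproduces the displays in eqs.~(\ref{eq1:convoluted_VC_bound}) and (\ref{eq2:convoluted_VC_bound}); what remains is to rewrite the failure probability $\mathrm{var}(U_q)(1 + 2 V_K[T_q])/(\varsigma^{2}K)$ as the declared $\kappa$, the $(1-\kappa)^{+}$ notation absorbing the degenerate case in which the crude variance bound exceeds unity.

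For part~1, A5 combined with $(a+b)^{2} \leqslant 2(a^{2}+b^{2})$ bounds $\mathrm{var}(U_q) \leqslant \mathbb{E}[U_q^{2}] \leqslant 2\theta \sup_{b}\mathrm{var}(Q)\,(1/n_s + 1/n_t)$; substituting the subgaussian $\varsigma^{2} = 4B^{2}\theta \log(1/\varpi)\,K/n$ with $n_s = n/K$ cancels $B^{2}$ and partially cancels $\theta$, yielding eq.~(\ref{eq:kappa}) in the form $[(\theta-1)/\theta + 1]\,(1+2V_K[T_q])/(2K\log(1/\varpi))$. Part~2 is analogous: the two subexponential expressions of $\varsigma$ from Theorem~\ref{thm:one_round_VC_bound} are plugged into the Chebyshev quotient, with the Orlicz-$\Psi_{1}$ control of $\rho_j$ providing the second-moment bound on $U_q$, and the piecewise $\kappa$ (with coefficients $4$ and $8$) arises from the two regimes $\varpi \gtrless 2 e^{-2c}$ at which the linear-versus-square-root forms of $\varsigma$ switch. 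The main obstacle is this final bookkeeping: one must control $\mathrm{var}(U_q)$ uniformly in $b$ via A5 in the subgaussian case and via the Orlicz-$\Psi_{1}$ tail bound in the subexponential case, and thread the absolute constant $c$ correctly through the squaring of $\varsigma$ and the Chebyshev ratio so that the stated coefficients line up exactly.
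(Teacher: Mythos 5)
Your overall architecture is exactly the paper's: reduce to $\bar U = \tfrac{1}{K}\sum_q U_q$, apply Lemma~\ref{lem:Cheby_ineq} to $T_q = U_q - \mathbb{E}[U_q]$, bound $\mathbb{E}[U_q]\leqslant 2\,\mathrm{RC}(\Lambda_l,n,K)$, set $\epsilon=\varsigma$, and massage the Chebyshev ratio into $\kappa$. The gap is in the variance bookkeeping, and it is not merely cosmetic: your bound $\mathrm{var}(U_q)\leqslant\mathbb{E}[U_q^2]\leqslant 2\theta\sup_b\mathrm{var}(Q)\,(1/n_s+1/n_t)$, obtained by dropping $-(\mathbb{E}[U_q])^2$ and using $(a+b)^2\leqslant 2(a^2+b^2)$, does \emph{not} yield the stated $\kappa$. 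Carrying your estimate through with $\varsigma^2 = 4B^2\theta\log(1/\varpi)K/n$ and $1/n_s+1/n_t = (K/n)(1+1/(K-1))\leqslant 2K/n$ gives a failure probability of at most $\frac{2}{2K\log(1/\varpi)}(1+2V_K[T_q])$, i.e.\ the constant $2$ in place of $(\theta-1)/\theta+1 = 2-1/\theta$. Since $2-1/\theta<2$ for every $\theta>0$, your argument proves the bound only with a strictly larger $\kappa$ than the theorem asserts, so the statement as written is not established.

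The missing ingredient is the treatment of the subtracted term $(\mathbb{E}[U_q])^2$. The paper expands $\mathbb{E}[U_q^2]$ with the cross term handled by H\"older (giving $\theta(v_s+v_t)+2\theta v_s$ with $v_s,v_t$ the fold-level variances, after using that the smaller-sample variance dominates), and then \emph{lower}-bounds $(\mathbb{E}[U_q])^2$ by $\sup_b\mathrm{var}(\mathcal{R}_{n_s}-\mathcal{R}_{n_t}) = v_s+v_t$ via Jensen, an $L^1$--$L^2$ comparison, and the independence of the training and validation folds under \textbf{A2}. Retaining this term converts one factor of $\theta$ into $\theta-1$, which is exactly where $(\theta-1)/\theta+1$ comes from. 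The same issue propagates to part~2: there you additionally need the norm comparison $\left\Vert\rho_j\right\Vert_2\leqslant 2\left\Vert\rho_j\right\Vert_{\Psi_1}$ (the paper's Lemma~\ref{lemma:a1}) to relate $\gamma_0[T_q]$ in the numerator to the $\left\Vert\rho_j\right\Vert_{\Psi_1}^2$ sitting inside $\varsigma^2$, and again the refined variance bound $\gamma_0[T_q]\leqslant\bigl[(\theta-1)(v_s+v_t)+2\theta v_s\bigr]$ to land on the coefficients $4$ and $8$. Your sketch of ``Orlicz-$\Psi_1$ control of $\rho_j$ providing the second-moment bound'' gestures at this but does not supply the comparison inequality that makes the cancellation work.
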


As mentioned in the introduction, ideally the average test error for a model would be derived from validating the model on one or more samples (test sets) that are not used for $K$-CV. The CV error is the in-sample analog of the average test error. Due to the resampling inherent in $K$-CV, the CV error is neither reliable nor stable. Eq.~(\ref{eq2:convoluted_VC_bound}) places a bound on the variation of the CV error. As with eq.~(\ref{eq:VC_bound}), eq.~(\ref{eq2:convoluted_VC_bound}) may be interpreted as the $\left( 1 - \kappa \right) \times 100$\textsuperscript{th} percentile of the CV error distribution. Put another way, given a model in $K$-CV, the interval between $0$ and the RHS of eq.~(\ref{eq2:convoluted_VC_bound}) forms a confidence interval for the CV error. Naturally, for each model in a different model class, eq.~(\ref{eq2:convoluted_VC_bound}) returns a different value for the upper bound of the confidence interval. By graphing these upper bounds, we get the upper bound for the CV errors of all models.

Obviously, eq.~(\ref{eq2:convoluted_VC_bound}) inherits the structure and parameters of eq.~(\ref{eq:VC_bound}). Similar to eq.~(\ref{eq:VC_bound}), $\varsigma$ affects the location of the CV error bound. Intuitively, the more variable the prediction error in each round, the more variable the CV error. In eq.~(\ref{eq2:convoluted_VC_bound}), the more volatile the prediction error in each round (due to a change in $n/K$, variance or anything else), the larger $\varsigma$, which shifts the upper bound of the CV error upwards.

The new parameter $\kappa$ in Theorem~\ref{thm:convoluted_VC_bound} is worthwhile discussing. Similar to the comment for $\varpi$ in eq.~(\ref{eq:VC_bound}), the value of $1 - \kappa$ determines the percentile of the average prediction error distribution in the upper bound. The upper bound holds with probability $(1 - \kappa)^+$, the value of which is determined by $V_K \left[ T_q \right]$, $\varpi$ and other parameters. From eq.~(\ref{eq:VC_bound}), the smaller $\varpi$, the larger the value of $\varsigma$. Ceteris paribus, a larger $\varsigma$ shifts the RHS of eq.~(\ref{eq:VC_bound}) upwards, the upper bound for the prediction error in each round. Since the upper bound for the CV error is a convolution of the upper bounds of prediction errors in $K$ rounds, a shift in the upper bound in each round causes a shift in the upper bound of the CV error. Thus, in eq.~(\ref{eq2:convoluted_VC_bound}), a smaller $\varpi$ increases the value of $\varsigma$, which shifts the bound for the CV error upwards. Since the upper bound for the CV error shifts upwards, the bound tolerates more variation in the CV error. Consequently, the value of $1 - \kappa$ increases and eq.~(\ref{eq2:convoluted_VC_bound}) holds with a larger probability.

\section{Upper bounds for CV errors with $\beta$-mixing data}

In this section, we show that the upper bounds for i.i.d.\ data extend to the non-i.i.d.\ case. We consider a $\beta$-mixing stationary processes for $\left( Y, X \right)$. Thus, assumption \textbf{A2} is modified.

\begin{enumerate}
    \item [\textbf{A2$^\prime$.}]  The data points of $ \left( Y, X \right)$ are sampled from the same stationery $\beta$-mixing data generating process.
\end{enumerate}

Ideally, to ensure that the model chosen with the training set is evaluated accurately on the validation set, the validation and training sets should be independent and drawn from the same population. However, if the data-generating process is dependent across time, the sample may be temporally correlated. As a result, independence between the validation and training sets fails, and the results in section~3 cannot be directly generalized to non-i.i.d.\ data. To address dependence between training and validation sets for non-i.i.d.\ data, we employ the `independent blocks' technique \citep{bernstein1927extension, yu1994rates}.

\begin{figure}[ht]
  \centering
  \subfloat[\label{fig:IB1} original data]
  {\includegraphics[width=0.42\paperwidth]{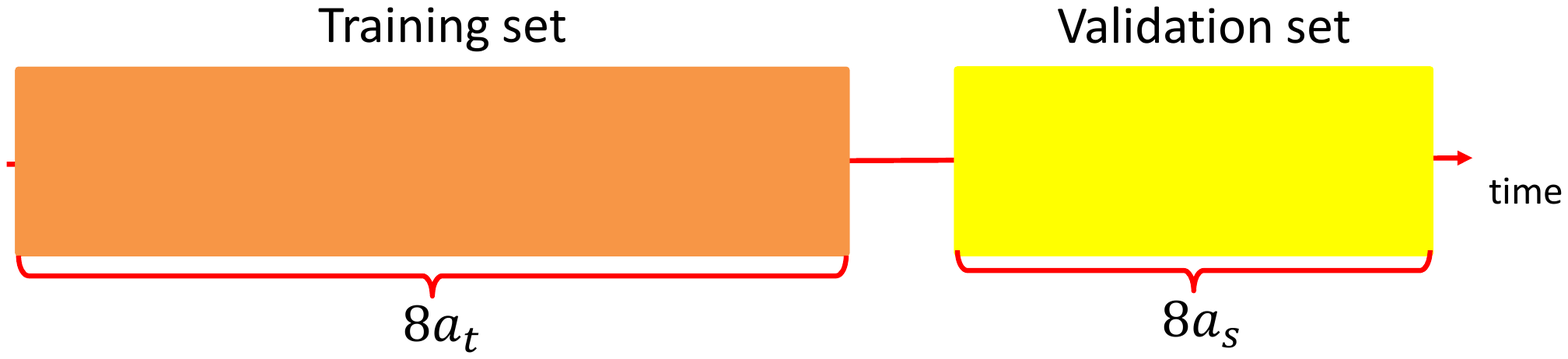}}

  \subfloat[\label{fig:IB2} blocking on original data]
  {\includegraphics[width=0.42\paperwidth]{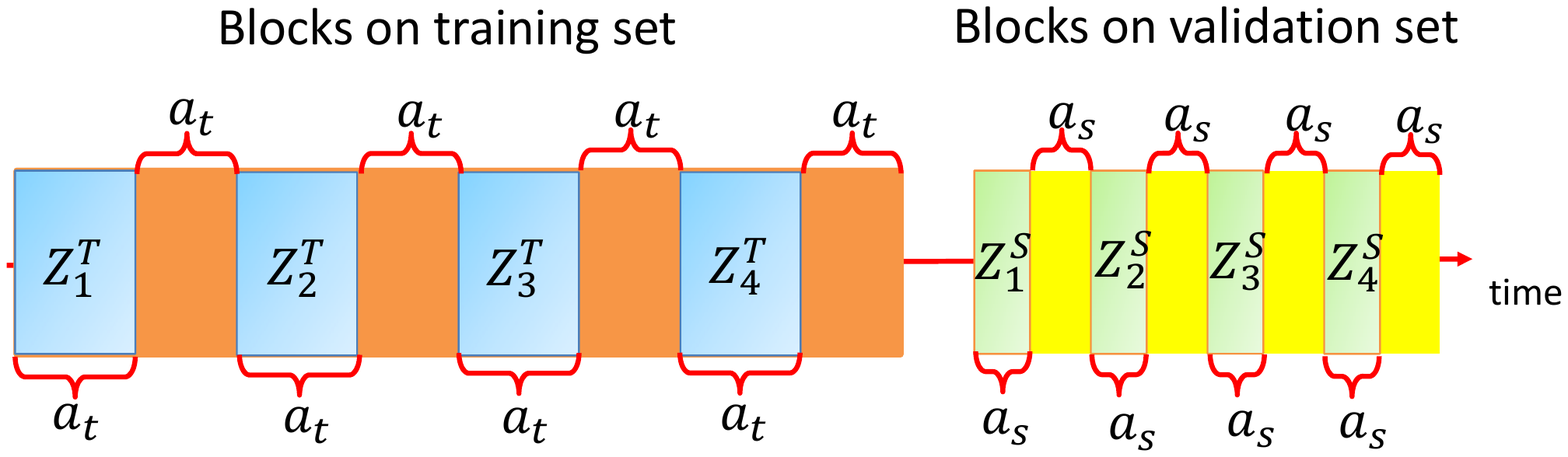}}

  \subfloat[\label{fig:IB3} independent blocks]
  {\includegraphics[width=0.42\paperwidth]{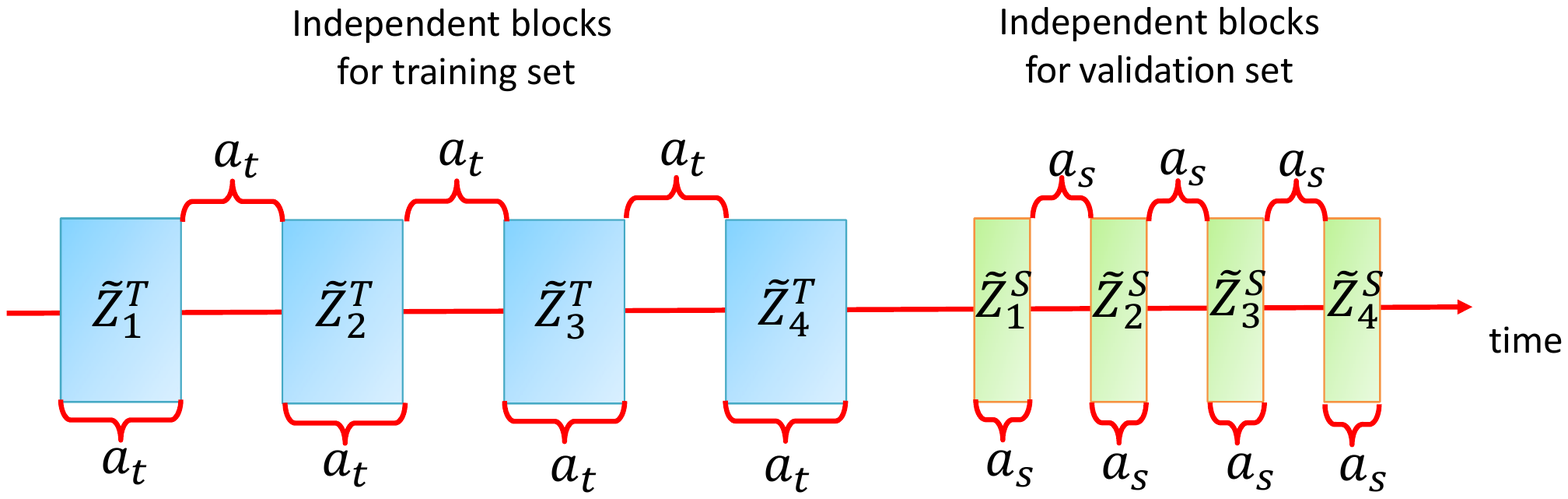}}

  \caption{Illustration of independent blocks with $\mu = 4$}
  \label{fig:IB}
\end{figure}

\subsection{Independent blocks}

Figure~\ref{fig:IB} illustrates the idea of independent blocks. The technique consists of first splitting a sequence of training data, say $S^T$, into two subsequences $S^T_0$ and $S^T_1$, each comprising $\mu$ blocks of $a_t$ consecutive points. Likewise, the validation data $S^S$ are split into $S^S_0$ and $S^S_1$, each comprising $\mu$ blocks of $a_s$ consecutive points. If $z_i = (y_i,\mathbf{x}_i)$, then given a sequence $S^T = \left( z^T_1, z^T_2, \ldots, z^T_{n_t}\right)$ with $n_t = 2 a_t \mu$ and $S^S = \left(z^S_1, z^S_2, \ldots, z^S_{n_s}\right)$ with $n_s = 2 a_s \mu$, $S^T_0$ and $S^S_0$ are constructed as follows
\noindent
\begin{eqnarray}
  S^T_0 = & \left( Z^T_1, Z^T_2, \ldots, Z^T_{\mu}\right), & \mbox{ where } Z^T_i = \left( z^T_{2(i-1)a_t+1}, \ldots , z^T_{(2i-1)a_t}\right) \\
  S^S_0 = & \left( Z^S_1, Z^S_2, \ldots, Z^S_{\mu}\right), & \mbox{ where } Z^S_i = \left( z^S_{2(i-1)a_s+1}, \ldots , z^S_{(2i-1)a_s}\right)
\end{eqnarray}
\noindent
If $\left\{z_i\right\}$ is $\beta$-mixing and the mixing coefficient decays rapidly enough, large $a_t$ and $a_s$ make each block in $S^T_0$ and $S^S_0$ `almost' independent while each block is still drawn from the same distribution due to stationarity.

Next, we create two new sequences of blocks: $\widetilde{S}^T_0 = ( \widetilde{Z}^T_1, \ldots, \widetilde{Z}^T_\mu )$ and $\widetilde{S}^S_0 = ( \widetilde{Z}^S_1, \ldots, \widetilde{Z}^S_\mu )$. The key feature of $\widetilde{S}^T_0$ is that, unlike in $S^T_0$,  $\widetilde{Z}^T_i$ and $\widetilde{Z}^T_j$ are independent for any $i,j$. Also, $\widetilde{Z}^T_i$ and $Z^T_i$ are identically distributed for any $i$, which implies that the `information' encoded in $\widetilde{Z}^T_i$ on average should be identical to the `information' encoded in $Z^T_i$. Hence, the technique creates a sequence of independent, equally-sized blocks, to which standard i.i.d.\ techniques may be applied. Likewise, all blocks in $\widetilde{S}^S_0$ are independent while $\widetilde{Z}^S_i$ and $Z^S_i$ are identically distributed for any $i$. Also, $\widetilde{Z}^T_i$ is independent from $\widetilde{Z}^S_j$, for any $i$ and $j$. This ensures the independent blocks within the training data and within the validation data are also mutually independent. Since the original blocks, $S^T_0$ and $S^S_0$, are `very similar' to their corresponding independent blocks, $\widetilde{S}^T_0$ and $\widetilde{S}^S_0$, we can approximate the performance of $K$-CV on $\left(S^T_0\;,\;S^S_0\right)$ by its performance on $(\widetilde{S}^T_0\;,\;\widetilde{S}^S_0)$. Eventually, the $K$-CV performance on $\left( \widetilde{S}^T_0\;,\; \widetilde{S}^S_0\right)$ may be used as an approximation of its performance on the original sample.

To avoid any ambiguity, we also modify \textbf{A1} slightly. Since we study the out-of-sample performance of any $b \in \Lambda_l$ on a $\beta$-mixing process with independent blocks, we assume measurability in the product probability space $\left(\Pi_{i=1}^{\mu} \Omega_i, \Pi_{i=1}^{\mu} \mathcal{F}_i, \Pi_{i=1}^{\mu} P_i \right)$.

\begin{enumerate}
  \item [\textbf{A1$^\prime$.}]  Let $\left( \Pi_{i=1}^{\mu} \Omega_i, \Pi_{i=1}^{\mu} \mathcal{F}_i, \Pi_{i=1}^{\mu} P_i \right)$ be a probability space. We assume the loss function $Q$ : $\Pi_{i=1}^{\mu} \Omega_i \rightarrow \left[ 0, M \right]$ is $\mathcal{F}$-measurable, $\forall b \in \Lambda_l$. We assume the population error is well defined for any $b \in \Lambda_l$. Specifically, we assume that the Orlicz-$\Psi_\nu$ norms of all the loss processes are well-defined for all $\nu \geqslant 1$.
\end{enumerate}

The following theorem by \citet{yu1994rates} illustrates that, given $a_t$ and $a_s$ large enough, the mean of the bounded and measurable function on the independent blocks are very similar to those of the original blocks.


\begin{theorem}[\citet{yu1994rates}]

Let $\mu > 1$ and assume that $h$ (a generic function) is real-valued, bounded by $\widetilde{M} \geqslant 0$ and measurable in the product probability space $\left(\Pi_{i=1}^{\mu} \Omega_i, \Pi_{i=1}^{\mu} \mathcal{F}_i \right)$. Then, for any $S^T_0$ and $S^S_0$ drawn from a stationery $\beta$-mixing process,
  \begin{eqnarray}
    \left\vert \mathbb{E}_{S^T_0} \left[ h \right] - \mathbb{E}_{\widetilde{S}^T_0} \left[ h \right]\right\vert
    & \leqslant & \left( \mu - 1 \right) \widetilde{M} \beta_{a_t} \\
    \left\vert \mathbb{E}_{S^S_0} \left[ h \right] - \mathbb{E}_{\widetilde{S}^S_0} \left[ h \right]\right\vert
    & \leqslant & \left( \mu - 1 \right) \widetilde{M} \beta_{a_s}
  \end{eqnarray}
where, respectively, $\mathbb{E}_{S^T_0}$ and $\mathbb{E}_{S^S_0}$ are the expectation w.r.t.\ $S^T_0$ and $S^S_0$; $\mathbb{E}_{\widetilde{S}^T}$ and $\mathbb{E}_{\widetilde{S}^S}$ is the expectation w.r.t.\ $\widetilde{S}^T$ and $\widetilde{S}^S$. $\beta_{a_t}$ is the mixing coefficient on each block with $a_t$ consective points across time and $\beta_{a_s}$ is the mixing coefficient on each block with $a_s$ consective points across time.
\label{thm:Yu94}
\end{theorem}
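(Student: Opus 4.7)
The plan is to reproduce the coupling argument of \citet{yu1994rates}, whose essence is a total-variation comparison between the true joint distribution of $\mu$ blocks and the product distribution under which they are independent. Once that comparison is in hand, integrating a bounded function against the difference of two probability measures is controlled by the total variation of that difference, which yields the stated inequality.

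First I would invoke the standard total-variation characterization of the $\beta$-mixing coefficient: for a stationary process, $\beta_a = \sup_t \mathbb{E}[\sup_{B \in \mathcal{F}_{t+a}^{\infty}} |P(B \mid \mathcal{F}_{-\infty}^t) - P(B)|]$, i.e.\ the expected total-variation distance between the conditional law of the future and its unconditional marginal. By construction of $S^T_0$, consecutive blocks $Z^T_i$ and $Z^T_{i+1}$ are separated by a gap of exactly $a_t$ discarded points (those sitting in $S^T_1$), so the dependence between any block and the sigma-algebra generated by its predecessors is controlled by $\beta_{a_t}$.

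Next, let $P^T_\mu$ denote the joint law of $(Z^T_1,\dots,Z^T_\mu)$ and let $\widetilde P^T_\mu$ denote the product of its one-block marginals; by stationarity $\widetilde P^T_\mu$ is exactly the law of $\widetilde S^T_0$. I would prove by induction on $\mu$ that $\|P^T_\mu - \widetilde P^T_\mu\|_{TV} \leq (\mu-1)\beta_{a_t}$. The $\mu=1$ case is trivial since $Z^T_1$ and $\widetilde Z^T_1$ are identically distributed. For the induction step, introduce the hybrid measure $P^T_{\mu-1} \otimes P_{Z^T_\mu}$ and apply the triangle inequality in total variation. The first piece, $\|P^T_\mu - P^T_{\mu-1} \otimes P_{Z^T_\mu}\|_{TV}$, is at most $\beta_{a_t}$ by the sigma-algebra form of $\beta$-mixing applied to the $\sigma$-algebra generated by the first $\mu-1$ blocks together with the tower property. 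The second piece equals $\|P^T_{\mu-1} - \widetilde P^T_{\mu-1}\|_{TV}$ by the tensor property of total variation and is at most $(\mu-2)\beta_{a_t}$ by the inductive hypothesis.

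Finally, for any measurable $h$ bounded by $\widetilde M$, the elementary estimate $|\mathbb{E}_P[h] - \mathbb{E}_Q[h]| \leq \widetilde M \cdot \|P-Q\|_{TV}$ (with the TV normalization matched to Yu's convention for $\beta_a$), applied to $(P,Q) = (P^T_\mu, \widetilde P^T_\mu)$, delivers $|\mathbb{E}_{S^T_0}[h] - \mathbb{E}_{\widetilde S^T_0}[h]| \leq (\mu-1)\widetilde M \beta_{a_t}$. Repeating the argument verbatim with $a_s$ replacing $a_t$ and $(S^S_0,\widetilde S^S_0)$ replacing $(S^T_0,\widetilde S^T_0)$ gives the validation bound. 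The main technical obstacle is the induction step: one must verify that $\beta_{a_t}$ controls not merely the dependence between two single blocks but the dependence between the next block and the entire $\sigma$-algebra generated by all preceding blocks. This is precisely what the sigma-algebra formulation of $\beta_a$ gives, and no additional constants are needed so long as the convention for total variation is kept consistent with the convention defining $\beta_a$.
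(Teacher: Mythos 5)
The paper itself offers no proof of this statement: it is imported verbatim as a known result of \citet{yu1994rates} (their Lemma~4.1), so there is no in-paper argument to compare yours against. Judged against the original source, your reconstruction is essentially the standard proof: Yu's argument is exactly a telescoping comparison through hybrid measures in which one block at a time is replaced by an independent copy, each replacement costing at most $\widetilde{M}\beta_{a}$ by the total-variation characterization of the $\beta$-mixing coefficient; your induction on $\mu$ is just that telescoping sum written recursively. The two points you flag as the technical crux are handled correctly: the gap between consecutive retained blocks $Z^T_{i}$ and $Z^T_{i+1}$ is indeed exactly $a_t$ points (block $i$ ends at index $(2i-1)a_t$ and block $i+1$ begins at $2ia_t+1$), and the $\sigma$-algebra form of $\beta_{a_t}$ does control the dependence between the $\mu$-th block and the whole $\sigma$-algebra generated by its predecessors, since that $\sigma$-algebra sits inside $\mathcal{F}_{1}^{(2\mu-3)a_t}$. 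The tensoring identity $\|P^T_{\mu-1}\otimes P_{Z^T_\mu}-\widetilde P^T_{\mu-1}\otimes P_{Z^T_\mu}\|_{TV}=\|P^T_{\mu-1}-\widetilde P^T_{\mu-1}\|_{TV}$ is also correct. The only loose end is the one you already acknowledge: whether $\vert\mathbb{E}_P[h]-\mathbb{E}_Q[h]\vert\leqslant\widetilde{M}\,\|P-Q\|_{TV}$ holds without an extra factor of $2$ depends on normalizing total variation as the total mass of $\vert P-Q\vert$ rather than as $\sup_B\vert P(B)-Q(B)\vert$, and on matching that to the convention defining $\beta_a$; to make the constant $(\mu-1)\widetilde{M}\beta_{a_t}$ come out exactly as stated you should fix these conventions explicitly at the outset rather than deferring the bookkeeping. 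With that done, the proof is complete and faithful to Yu's original.
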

Theorem~\ref{thm:Yu94} allows us to approximate the performance of $K$-CV on $\beta$-mixing data.

\subsection{Upper bounds with $\beta$-mixing data}

Using Theorem~\ref{thm:Yu94} and the McDiarmid inequality, we derive the upper bound for the round-$q$ prediction error in $K$-CV as follows.


\begin{theorem}
Under \textbf{A1$^\prime$},\textbf{A2$^\prime$} and \textbf{A3}-\textbf{A5}, if $\left( \mu - 1 \right) \left[ \beta_{a_t} + \beta_{a_s} \right] < 1$ and the loss function $Q$ is bounded by $M > 0$, the following bound holds, $\forall \varpi \in \left( \; \left( \mu - 1 \right) \left[ \beta_{a_t} + \beta_{a_s} \right] \; , \; 1 \; \right]$, with probability at least $ 1 - \varpi $,
  \begin{align}
    \mathcal{R}_{n_s} \left( b, X_s, Y_s \right)
    \leqslant
    \mathcal{R}_{n_t} \left( b, X_t, Y_t \right)
    + 2\,\mathrm{RC}_{S^S_0} \left( \Lambda_l \right)
    +  M \sqrt{ \frac{ \log \left( 4 / \varpi' \right) } { 2 \mu } }
  \label{eqn:one-round_Rademacher_CV}
  \end{align}
where $\varpi' = \varpi - \left( \mu - 1 \right) \left[ \beta_{a_t} + \beta_{a_s} \right]$ and  $\mathrm{RC}_{\widetilde{S}^S_0} \left( \Lambda_l \right)$ is the upper bound of the model class $\Lambda_l$ on the block $\widetilde{S}^S_0$.
\label{thm:one_round_RC_bound}
\end{theorem}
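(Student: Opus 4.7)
The plan is to reduce the $\beta$-mixing setting to an independent-block analogue via Theorem~\ref{thm:Yu94}, bound the resulting quantity by a Rademacher-symmetrization-plus-McDiarmid argument in the spirit of Theorem~\ref{thm:one_round_VC_bound}, and then absorb the mixing penalty $(\mu - 1)[\beta_{a_t} + \beta_{a_s}]$ into the overall failure probability. Concretely, I would define the one-sided uniform deviation
\[
  \phi(S^T_0, S^S_0) := \sup_{b \in \Lambda_l}\Bigl\{ \mathcal{R}_{n_s}(b, Y^q_s, X^q_s) - \mathcal{R}_{n_t}(b, Y^q_t, X^q_t) \Bigr\},
\]
and write $\phi^{\mathrm{ib}}$ for the same functional evaluated on the independent-block sequences $(\widetilde{S}^T_0, \widetilde{S}^S_0)$. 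Because $\mathbf{1}\{\phi > c\}$ is a measurable function bounded by $1$, applying Theorem~\ref{thm:Yu94} to the training and validation blocks yields
\[
  \bigl| P(\phi > c) - P(\phi^{\mathrm{ib}} > c) \bigr| \leqslant (\mu - 1)\bigl[\beta_{a_t} + \beta_{a_s}\bigr],
\]
so it suffices to control the tail of $\phi^{\mathrm{ib}}$ at the shifted level $\varpi' = \varpi - (\mu - 1)[\beta_{a_t} + \beta_{a_s}]$, which is strictly positive by hypothesis.

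On the independent-block law, $\phi^{\mathrm{ib}}$ is a function of $2\mu$ mutually independent blocks, which makes it amenable to the usual i.i.d.\ machinery. First, a ghost-sample symmetrization, combined with block-level identical distribution between $\widetilde{S}^S_0$ and $S^S_0$ (justified by stationarity), gives $\mathbb{E}[\phi^{\mathrm{ib}}] \leqslant 2\,\mathrm{RC}_{\widetilde{S}^S_0}(\Lambda_l) = 2\,\mathrm{RC}_{S^S_0}(\Lambda_l)$. Second, since $Q \in [0,M]$ and each of the $2\mu$ blocks contributes weight $1/\mu$ to one of the two empirical means, swapping a single block perturbs $\phi^{\mathrm{ib}}$ by at most $M/\mu$. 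McDiarmid's bounded-differences inequality applied over these $2\mu$ blocks with constants $M/\mu$ therefore delivers an exponential tail of the form $\exp(-c'\mu t^2/M^2)$; inverting this at level $\varpi'$ produces the deviation term $M\sqrt{\log(4/\varpi')/(2\mu)}$. Unpacking $\phi^{\mathrm{ib}} \leqslant 2\,\mathrm{RC}_{S^S_0}(\Lambda_l) + M\sqrt{\log(4/\varpi')/(2\mu)}$ into a statement for an arbitrary $b \in \Lambda_l$, and transferring back to the original $\beta$-mixing data via the coupling inequality, yields~(\ref{eqn:one-round_Rademacher_CV}).

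The main obstacle will be the probability and constant bookkeeping: lining up the McDiarmid level, the Yu94 coupling penalty, and the definition of $\varpi'$ so that the deviation exponent reproduces $\log(4/\varpi')/(2\mu)$ exactly rather than a nearby variant, while simultaneously ensuring that the two-sided splitting between training and validation blocks does not inflate the constant in front of $\mu$. Subsidiary issues are verifying that the bounded-differences constants reflect the block-wise (rather than per-observation) structure, justifying the replacement $\mathrm{RC}_{\widetilde{S}^S_0} = \mathrm{RC}_{S^S_0}$ from block-level stationarity, and confirming that the hypothesis $(\mu - 1)[\beta_{a_t} + \beta_{a_s}] < 1$ leaves a nontrivial probability budget $\varpi'$ over which the exponential inequality can be inverted. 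Once these technicalities are handled, the argument is a clean composition of Theorem~\ref{thm:Yu94} with the i.i.d.\ Rademacher inequality that underlies Theorem~\ref{thm:one_round_VC_bound}.
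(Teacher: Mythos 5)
Your overall architecture matches the paper's: reduce to independent blocks via Theorem~\ref{thm:Yu94}, bound the expectation of the uniform deviation by Rademacher complexity, apply McDiarmid block-wise, and absorb the coupling penalty into $\varpi'$. However, there is a concrete gap at the very first step. Theorem~\ref{thm:Yu94} transfers expectations only for functions that are measurable on the product space $\Pi_{i=1}^{\mu}\Omega_i$ generated by the $\mu$ blocks of $S^T_0$ (respectively $S^S_0$); the independent-block construction produces i.i.d.\ surrogates only for these alternating blocks, with the blocks of $S^T_1$ and $S^S_1$ acting as separators. Your functional $\phi$ is built from the empirical errors over the \emph{entire} training and validation sequences (all $2\mu$ blocks of each), so it is not a function on that product space, and there is no independent-block version of it to which the coupling inequality applies. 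Relatedly, your $\phi^{\mathrm{ib}}$, defined on $(\widetilde{S}^T_0,\widetilde{S}^S_0)$, averages over only $n_t/2$ and $n_s/2$ points, so it is not ``the same functional'' as $\phi$ and the claimed inequality $\lvert P(\phi>c)-P(\phi^{\mathrm{ib}}>c)\rvert\leqslant(\mu-1)[\beta_{a_t}+\beta_{a_s}]$ does not follow from Theorem~\ref{thm:Yu94}. The paper closes exactly this gap by first writing $\mathcal{R}_{n_t}=\tfrac12[\mathcal{R}_{n_t/2}(\cdot,S^T_0)+\mathcal{R}_{n_t/2}(\cdot,S^T_1)]$ (and similarly for the validation error), using convexity and a union bound together with the identical distribution of $(S^T_0,S^S_0)$ and $(S^T_1,S^S_1)$ to reduce the tail of the full-sample deviation to twice the tail of the deviation on $S_0$ alone, which \emph{is} a function on the product space.

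This missing reduction is also where the constants you flagged come from: one factor of $2$ from the $S_0$/$S_1$ split and another from separating $\Phi(S^T_0)$ and $\Phi(S^S_0)$ at level $\epsilon/2$ each (after centering each at the population error $\mathcal{R}$, which is what makes the symmetrization bound $\mathbb{E}[\Phi]\leqslant 2\,\mathrm{RC}$ applicable) produce the $4$ inside the logarithm, and applying McDiarmid to each $\Phi(\widetilde{S}^{\,\cdot}_0)$ separately over $\mu$ coordinates with increment $M/\mu$ gives the exponent $2\mu\epsilon_i^2/M^2$ and hence the $2\mu$ in the denominator. Your single McDiarmid application over $2\mu$ joint coordinates would, even if the coupling step were repaired, yield a different exponent and would not reproduce $M\sqrt{\log(4/\varpi')/(2\mu)}$. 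So the proposal needs (i) the $S_0$/$S_1$ decomposition before invoking Theorem~\ref{thm:Yu94}, and (ii) the triangle-inequality centering at $\mathcal{R}(b,Y,X)$ with separate treatment of the training and validation deviations, before the stated constants can be recovered.
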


Theorem~\ref{thm:one_round_RC_bound} shows how to construct the upper bounds for the prediction error in each round of $K$-CV with $\beta$-mixing data. Given the size of each block, the larger the sample size, the larger $\mu$ and the smaller the last two terms of eq.~(\ref{eqn:one-round_Rademacher_CV}). The magnitude of the $\beta$-mixing coefficient together with the autocorrelation of $K$-CV implies the CV error may be more volatile than for the i.i.d.\ bounds. The effect of the additional variability due to $\beta$-mixing is apparent in the following derivation of the upper bound for the CV error.


\begin{theorem}
Based on Lemma~\ref{lem:Cheby_ineq} and Theorem~\ref{thm:one_round_RC_bound}, with probability at least
$1 - 2\mu \left( 1 + 2V_k \left[ T_q \right] \right) / K \log \left( 4/ \varpi' \right) $, the following bound holds
  \begin{align}
    \frac{1}{K} \sum_{q = 1}^{K} \mathcal{R}_{n_s} \left( b, X_s^q, Y_s^q \right)
    \leqslant
    \frac{1}{K} \sum_{q = 1}^{K} \mathcal{R}_{n_t} \left( b, X_t^q, Y_t^q \right)
    + 2\, \mathrm{RC}_{ S^S_0 } \left( \Lambda_l \right)
    + M \sqrt{ \frac{ \log \left( 4 / \varpi' \right) } { 2\mu } }
  \label{eqn:CV_RC_bound}
  \end{align}
\noindent
where
  \begin{align}
    \varpi' \in
     \left(0, 4 \exp \left\{ - \frac{ 2\mu \left( 1 + 2 V_K \left[ T_q \right] \right) } { K } \right\} \right]
  \end{align}
to ensure $1 - 2\mu \left( 1 + 2V_k \left[ T_q \right] \right) / K \log \left( 4/ \varpi' \right) \in \left( 0 , 1 \right]$.
\label{thm:RC_bound_CV}
\end{theorem}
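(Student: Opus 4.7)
The plan is to follow the template of Theorem~\ref{thm:convoluted_VC_bound}, combining the one-round deviation bound from Theorem~\ref{thm:one_round_RC_bound} with the dependent-sample Chebyshev inequality of Lemma~\ref{lem:Cheby_ineq} applied to the $K$ autocorrelated folds. Under \textbf{A2$^\prime$} the draws across folds come from the same stationary $\beta$-mixing process, so the quantities $U_q = \sup_{b\in\Lambda_l}\bigl|\mathcal{R}_{n_s}(b,Y_s^q,X_s^q)-\mathcal{R}_{n_t}(b,Y_t^q,X_t^q)\bigr|$ and their centred versions $T_q = U_q - \mathbb{E}[U_q]$ form stationary sequences across $q$. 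Lemma~\ref{lem:Cheby_ineq} therefore yields
\[
\mathrm{Pr}\bigl(|\overline{T}|\leq \epsilon\bigr) \;\geq\; 1 - \frac{\gamma_0\,(1+2V_K[T_q])}{K\,\epsilon^2},
\]
where $\overline{T} = K^{-1}\sum_q T_q$ and $\gamma_0 = \mathrm{Var}(T_q)$.

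The calibration step is to match the Chebyshev tolerance to the tail term appearing in Theorem~\ref{thm:one_round_RC_bound}. Setting
\[
\epsilon \;=\; M\,\sqrt{\log(4/\varpi')/(2\mu)}
\]
and using the boundedness $Q\leq M$ from \textbf{A1$^\prime$}, which forces $U_q\in[0,M]$ and hence $\gamma_0 \leq M^2$, the Chebyshev bound collapses to
\[
\frac{\gamma_0\,(1+2V_K[T_q])}{K\,\epsilon^2} \;\leq\; \frac{2\mu\,(1+2V_K[T_q])}{K\,\log(4/\varpi')},
\]
which reproduces the failure probability stated in the theorem. The explicit constraint $\varpi'\in\bigl(0,\,4\exp\{-2\mu(1+2V_K[T_q])/K\}\bigr]$ is exactly what makes this failure probability lie in $[0,1)$, so that $(1-\kappa)^+$ is non-degenerate.

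To close the argument, on the intersection event $\{|\overline{T}|\leq\epsilon\}$ stationarity gives $K^{-1}\sum_q U_q = \mathbb{E}[U_q] + \overline{T} \leq \mathbb{E}[U_q]+\epsilon$, while Theorem~\ref{thm:one_round_RC_bound}, pushed from a high-probability deviation bound to an expectation bound by integrating the tail, supplies $\mathbb{E}[U_q]\leq 2\,\mathrm{RC}_{S^S_0}(\Lambda_l)+\epsilon$. Chaining these with $\mathcal{R}_{n_s}(b,Y_s^q,X_s^q)-\mathcal{R}_{n_t}(b,Y_t^q,X_t^q)\leq U_q$ for each $q$ produces eq.~(\ref{eqn:CV_RC_bound}).

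The main obstacle is the bookkeeping around the single symbol $\varpi'$, which plays two roles: it controls the Chebyshev tolerance and simultaneously absorbs the blocking offset $(\mu-1)[\beta_{a_t}+\beta_{a_s}]$ inherited from Theorem~\ref{thm:Yu94} via Theorem~\ref{thm:one_round_RC_bound}. One must verify that translating the one-round tail into the $\mathbb{E}[U_q]$-bound does not cost any additional probability beyond what is already charged to the $\beta_{a_t}+\beta_{a_s}$ correction, and that the crude variance estimate $\gamma_0\leq M^2$ is indeed the quantity that drops out of the calibration cleanly. A sharper McDiarmid-style variance bound on the independent blocks $\widetilde{S}_0^T,\widetilde{S}_0^S$ would tighten the constant, but the looser $M^2$ estimate is what matches the form of $\kappa$ stated in the theorem.
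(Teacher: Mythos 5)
Your overall route is the same as the paper's: apply Lemma~\ref{lem:Cheby_ineq} to the centred, stationary process $T_q$, calibrate the Chebyshev tolerance to $\varsigma=M\sqrt{\log(4/\varpi')/(2\mu)}$ from Theorem~\ref{thm:one_round_RC_bound}, use the crude bound $\gamma_0[T_q]\leqslant M^2$ (valid since $Q\in[0,M]$ forces $U_q\in[0,M]$), and read off the failure probability $2\mu(1+2V_K[T_q])/(K\log(4/\varpi'))$ together with the stated admissible range for $\varpi'$. That part of your calibration matches the paper exactly, including your observation that $M^2$ is precisely the variance estimate that makes the constant come out right.

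The gap is in your final chaining step. You propose to obtain $\mathbb{E}[U_q]\leqslant 2\,\mathrm{RC}_{S^S_0}(\Lambda_l)+\epsilon$ by ``integrating the tail'' of Theorem~\ref{thm:one_round_RC_bound}. First, this is not what integrating that tail bound gives: the one-round bound holds only for $\varpi>(\mu-1)[\beta_{a_t}+\beta_{a_s}]$, so integration produces $2\,\mathrm{RC}+\int_0^1\varsigma(\varpi)\,d\varpi$ plus a non-vanishing term proportional to the blocking offset, not $2\,\mathrm{RC}+\epsilon$ for your particular $\epsilon$. Second, even granting your claim, the chain $K^{-1}\sum_q U_q\leqslant\mathbb{E}[U_q]+\epsilon\leqslant 2\,\mathrm{RC}+2\epsilon$ delivers $2M\sqrt{\log(4/\varpi')/(2\mu)}$ as the last term, twice what eq.~(\ref{eqn:CV_RC_bound}) asserts. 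The paper avoids both problems by not routing through the one-round tail at all at this point: it invokes the symmetrization inequality $\mathbb{E}[U_q]\leqslant 2\,\mathrm{RC}$ directly (as in the proof of Theorem~\ref{thm:convoluted_VC_bound}, citing the standard Rademacher bound on the expected supremum), so that the only additive slack is the single Chebyshev tolerance $\varsigma$. Replacing your expectation step with that direct bound repairs the argument; your concern about double-charging the $\beta$-mixing correction then also evaporates, since no probability is spent on the expectation step.
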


While the approach to deriving the non-i.i.d.\ upper bounds for the CV error is similar to the i.i.d.\ case, the mathematical challenge is different. To ensure $ 1 - \kappa > 0$ for i.i.d.\ data, we need only to consider the magnitude of the autocorrelation of training errors across rounds (encoded in $V_k \left[ T_q \right]$). In contrast, for non-i.i.d.\ data, we need to consider the magnitude of the autocorrelation of training errors across rounds \emph{and} the correlation inherent in the $\beta$-mixing data-generating process (encoded in $\varpi'$). Hence, everything else equal, $\varpi$ is typically larger in non-i.i.d.\ cases. Clearly, stability of the CV error is weaker in non-i.i.d.\ cases.


\section{Application to the lasso}

In sections~3 and~4, we obtain the theoretical results for the upper bound of the CV error (referred to simply as `upper bound' below) under different assumptions. In this section, we demonstrate that the upper bound is empirically computable using a dedicated Python package \texttt{CVbounds}(documented in a supplement to the paper). We apply the upper bound to the lasso on i.i.d. data and show through simulations that the upper bound quantifies the variability of the CV error and improves the sparsity of variable selection while retaining all the relevant variables.

\subsection{Computation of the upper bound with the lasso}

To compute the upper bound of the CV error for the lasso with given $\lambda$, we need to solve several issues including, in particular, the definition of the model class. First, we define the lasso:
\begin{equation}
  \min_{\beta} \frac{1}{n_t} \left\Vert Y - X\beta \right\Vert_{2}^{2} + \lambda \left\Vert \beta \right\Vert_{1}
  \label{lasso}
\end{equation}
where $\lambda$ is the penalty parameter, $\left\Vert \cdot \right\Vert_{1}$ and $\left\Vert \cdot \right\Vert_{2}$ are the $L^1$ and $L^2$ norms, respectively.

The definition of a model class in the lasso, for a given value of $\lambda$, is illustrated in Figure~\ref{fig:model_class}. Given the data-generating process (DGP) of $Y$ and $X$, a typical OLS class is
\begin{align}
  \{
     Y = & X\beta_{\mbox{ols}} + \epsilon
     \;\vert\;
     \beta_{\mbox{ols}} \in \mathbb{R}^{p \times 1} \mbox{ is the L2 error minimizer on }  
     \left(Y, X\right), \notag \\ 
     & \forall \left(Y, X\right) \in \mathbb{R}^{n \times (p + 1)}
  \}.
\end{align}
We avoid the more general linear model class definition $\{ Y = X \beta + \epsilon \vert \beta \in \mathbb{R}^{p \times 1} \}$ that represents all regression methods (logit, LAD regression, etc.) because we are interested only in the properties of OLS estimators. Further, to construct a tight upper bound requires tailoring the analysis to a given DGP. Thus, we define the OLS class in terms of the DGP.

By contrast, a lasso class depends on the penalty parameter $\lambda$ in eq.~(\ref{lasso}). Thus, given the DGP of $Y$ and $X$,
\begin{align}
  \{
     Y = & X \beta_{\mbox{lasso}} + \epsilon
     \;\vert\;
     \beta_{\mbox{lasso}} \mbox{ is the minimizer of eq.~(\ref{lasso}) on } \left(Y, X\right) \notag \\
      & \mbox{ and } \left\Vert \beta_{\mbox{lasso}} \right\Vert_1 = \alpha(\lambda), \forall \left(Y, X\right) \in \mathbb{R}^{n \times (p + 1)}, \mbox{ given } \lambda 
  \},\notag
\end{align}
where $\alpha$ is determined by the value of $\lambda$. Since, given $\lambda > 0$, the estimated lasso regression coefficients are always on the boundary of the corresponding $L^1$ constraint, the class of the lasso given the DGP of $Y$ and $X$ is a subset of the boundary points of the feasible area (the orange rectangle in Figure~\ref{fig:mc1}). However, not all points on the boundary are in the lasso class. In sparse modeling and variable selection, especially in high-dimensional spaces, variables are often eliminated due to the $L^1$ penalty. Thus, only a subset (typically a proper subset) of the boundary points is relevant to the sample estimate of the lasso for $Y$ and $X$. The lasso class, given $\lambda$, is typically distributed around one (or more) corner(s) of the feasible area (the red segment in Figure~\ref{fig:mc1}). It is this subset that is relevant for computing the upper bound of the CV error of the lasso.

\begin{figure}[h]
  \centering
  \subfloat[\label{fig:mc1}lasso class]
  {\includegraphics[width=0.22\paperwidth]{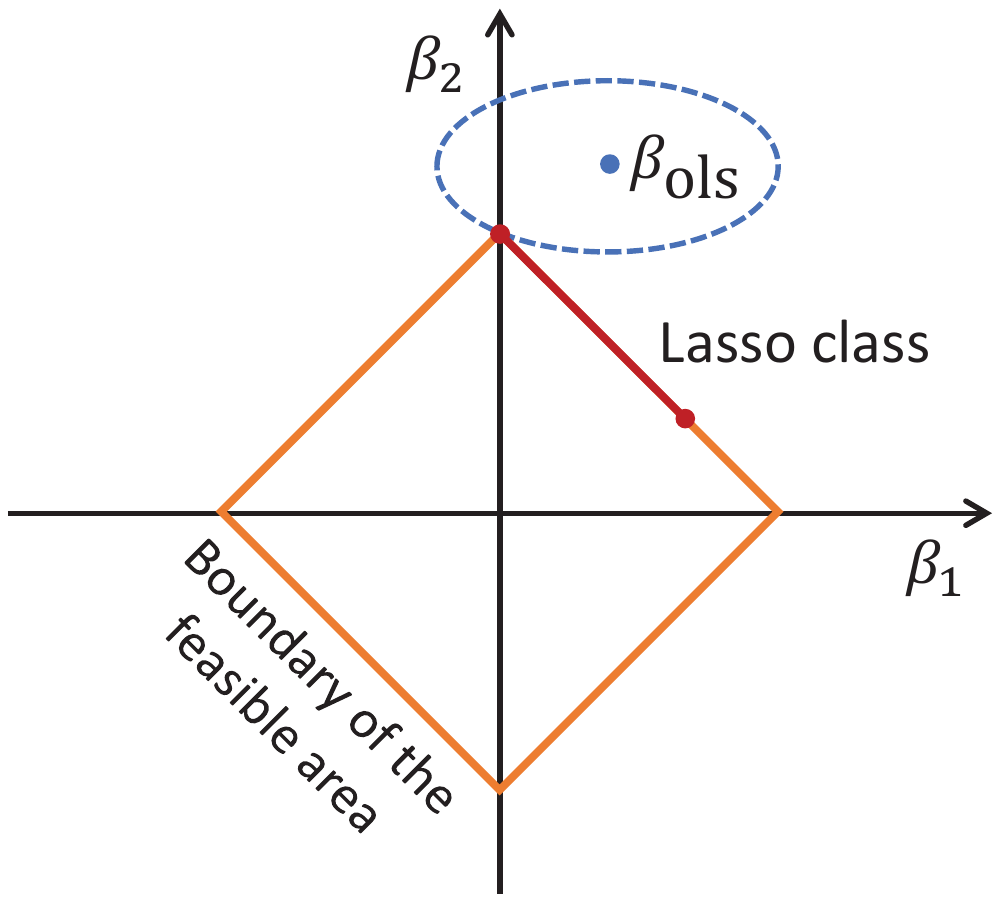}}
  \hfill
  \subfloat[\label{fig:mc2}bootstrap estimate]
  {\includegraphics[width=0.22\paperwidth]{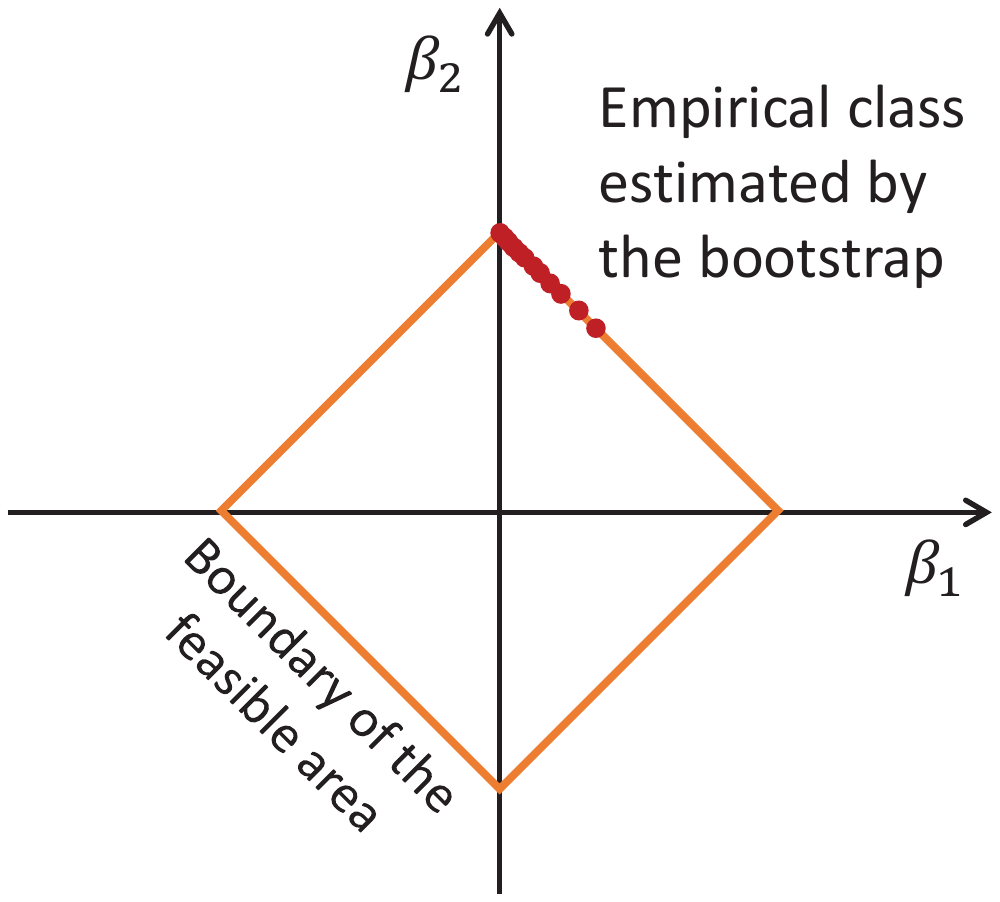}}
  \hfill
  \subfloat[\label{fig:mc3}$K$-CV estimate ($K=5$)]
  {\includegraphics[width=0.22\paperwidth]{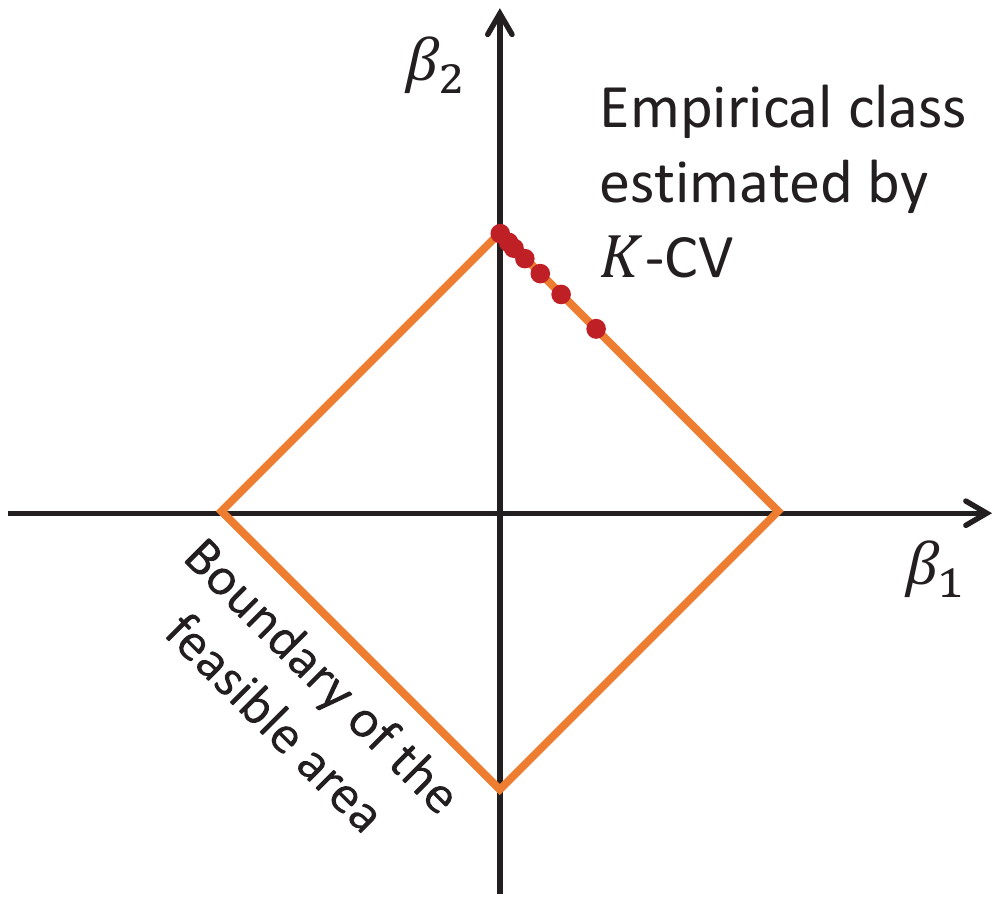}}

  \caption{Illustration of a lasso class and empirical estimates}
  \label{fig:model_class}
\end{figure}

Unfortunately, in empirical applications it is difficult to know a priori the location of the lasso class. One solution is to estimate the lasso class by resampling. In principle, bootstrapped samples could be used to train the lasso for a given $\lambda$ and the estimated regression coefficients used as the empirical class, as in Figure~\ref{fig:mc2}. Since the random sample in simulations is i.i.d., we expect the lasso estimates for a given $\lambda$ on each bootstrapped sample to be similar on average. Alternatively, as illustrated in Figure~\ref{fig:mc3}, the $K$ estimates of the lasso from $K$-CV could be used as the empirical class for the given value of $\lambda$. In simulations, the random sample is i.i.d. and the training sets in any two rounds have a number of points in common. As a result, given $\lambda$, we expect the lasso estimates from training sets in any two rounds to be similar. The larger $K$, the more similar the model estimates from the training sets in any two rounds. Thus, given the data, we expect each element in the empirical class to return similar loss distributions for the bootstrap and $K$-CV, on average.

A working definition of the lasso class has two direct implications. First, it ensures the upper bound is tailored to the DGP of $Y$ and $X$, resulting in a tight upper bound. Moreover, under i.i.d. settings we expect that on average $\theta \approx 1$ for the empirical class. Specifically, if there is only one element in the empirical class (or if all the elements are the same), then the numerator and denominator in eq.~(\ref{a5:theta}) will be equal for the empirical class and $\theta = 1$. Correspondingly, if the elements in the empirical class return very similar loss distributions on given data, the numerator and denominator in eq.~(\ref{a5:theta}) for the empirical class will be close to one another and $\theta \approx 1$. The lasso simulations below support both tight upper bounds and $\theta \approx 1$.

To compute the upper bound, we also need to choose the value of $K$. In terms of the reliability of model selection in $K$-CV, \citet{cawley2010over} and \citet{nadeau2000inference} argue that reducing the variance of the CV error is more important than unbiasedness. We also know that, given $n$, a larger $K$ leads to a higher variance. Thus, we choose $K=2$ and use 2-CV to compute the CV error. With 2-CV, the round~1 training sets do not share any common points with the round~2 training sets, implying the training errors are not autocorrelated across rounds and $V_K \left[ T_q \right] = 0$. Lastly, to set $\kappa$ we need to specify $\varpi$. We choose $\varpi=0.10$, implying from eq.~(\ref{eq:kappa}) that $\kappa\approx 0.10$ for the subgaussian case, which in turn implies a 90\% upper bound for the CV error. 

\subsection{Simulations}

We compute the upper bounds for the lasso CV errors in three settings. In all three settings, the number of variables $p = 100$. The number of observations $n = 100$ in the first setting, $n = 200$ in the second setting and $n = 400$ in the third setting. All data points are identically and independently distributed. The outcome variable $Y \in \mathbb{R}^{n \times 1}$ is generated by
\begin{align}
                     Y & = X_1 \beta_{nonzero} + X_0 \beta_{zero} + e   \notag \\
  \mbox{where }\beta_{nonzero} & = \left[ 3, 4, 5, 6, 7\right]^{T} \notag \\
  \mbox{and }  \beta_{zero} & = \left[ 0, \ldots, 0\right]^{T}  \notag
\end{align}
where $X_1 \in \mathbb{R}^{n \times 5}$, $ X_0 \in \mathbb{R}^{n \times 95}$ and $\left[ X_1, X_0 \right]$ are generated from a zero-means multivariate Gaussian distribution with covariance matrix consisting of 1s~on the main diagonal and 0.5 for the remaining elements. Each variable in $\left[ X_1, X_0 \right]$ is independent from the noise term $e$, which is Gaussian with mean $0$ and variance $1$. Because in lasso regression analysis the mean square error for each validation set is distributed $\chi^2$ with $n/K$ degrees of freedom, we choose the subgaussian bound, which converges to the population error at the rate $1/\sqrt{n}$ \citep{bartlett2005local}.

Ideally, the upper bound for the lasso should demonstrate two key properties. First, since the purpose of variable selection with the lasso is to avoid underfitting and overfitting on the training set, the upper bound should be able to distinguish between underfitting and overfitting for all models. Second, since the upper bound quantifies the worst-case out-of-sample performance of each combination of variables, the upper-bound minimizer in a sense performs `better' than the other models. In the context of the lasso, `better' means a variable selection that is more sparse, accurate and stable. In the following simulations, we demonstrate the shape, location and variable selection properties of the upper bound.

\subsubsection{Simulation 1: shape and location of the upper bound}

In the first simulation, we investigate the shape and location of the upper bound for the lasso under different settings. As noted in section~1.1, the ideal measure of a model's out-of-sample performance is the average test error. Simulation, of course, allows us to obtain the distribution of the average test error for any given model. The simulation is conducted as follows. To derive detailed error plots, we choose a sequence of values for the lasso penalty parameter: $\lambda = \left\{ 0.10, 0.15, 0.20, \dots, 0.45 \right\}$. For each $\lambda$, we estimate the lasso on $\left( Y, X \right)$, resulting in 8~different lasso regression models, average training errors and CV errors. For each model, we simulate 20,000 new samples from the same DGP, each with the same size as the validation set in $2$-CV, producing 20,000 test errors generated outside $2$-CV for each estimated model. Next, we combine the 20,000 test errors into 10,000 pairs, obtaining 10,000 average test errors for each model. Lastly, we plot for each $\lambda$ the 90\% upper bound and the empirical 90\textsuperscript{th} percentile of the average test error distribution (used to approximate the population 90\textsuperscript{th} percentile of the average test error distribution). For the 90\% upper bound to be empirically useful, it must be a tight upper bound, i.e., one that is close to the 90\textsuperscript{th} percentile of the average test error distribution.

The lasso simulation results are plotted in Figure~\ref{fig:sim3}: the average training error is shown in blue, the CV error in red, the empirical 90\textsuperscript{th} percentile in black and the 90\% upper bound in green. The figures illustrate the location, shape and convergence tendency of the upper bound using the same scale. In all three figures, the average training error is the lowest, indicating that the models tend to overfit the training data.

\begin{figure}
  \subfloat[\label{fig:sim31} $n/K = 50$]
  {\includegraphics[width=0.22\paperwidth]{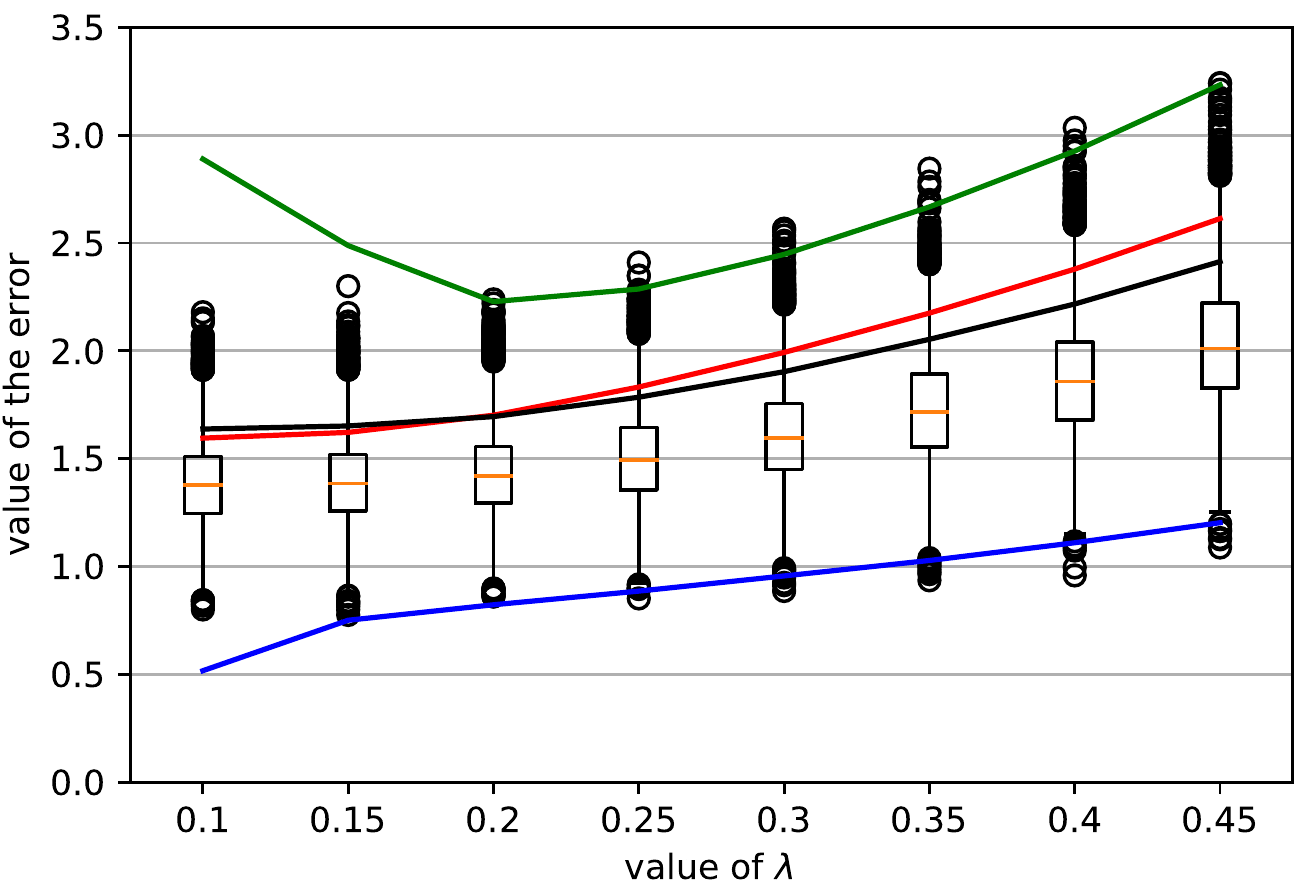}}
  \subfloat[\label{fig:sim32} $n/K = 100$]
  {\includegraphics[width=0.22\paperwidth]{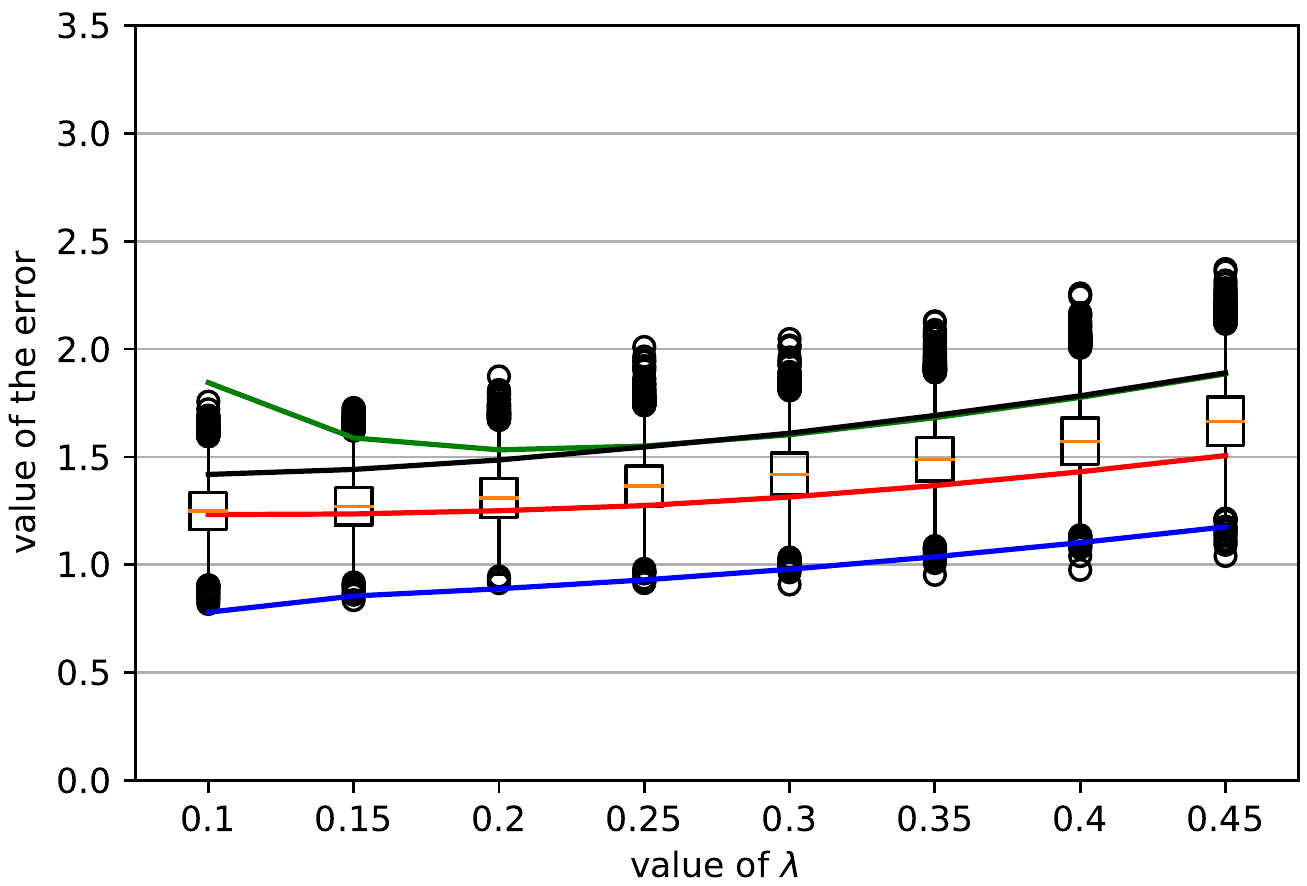}}
  \subfloat[\label{fig:sim33} $n/K = 200$]
  {\includegraphics[width=0.22\paperwidth]{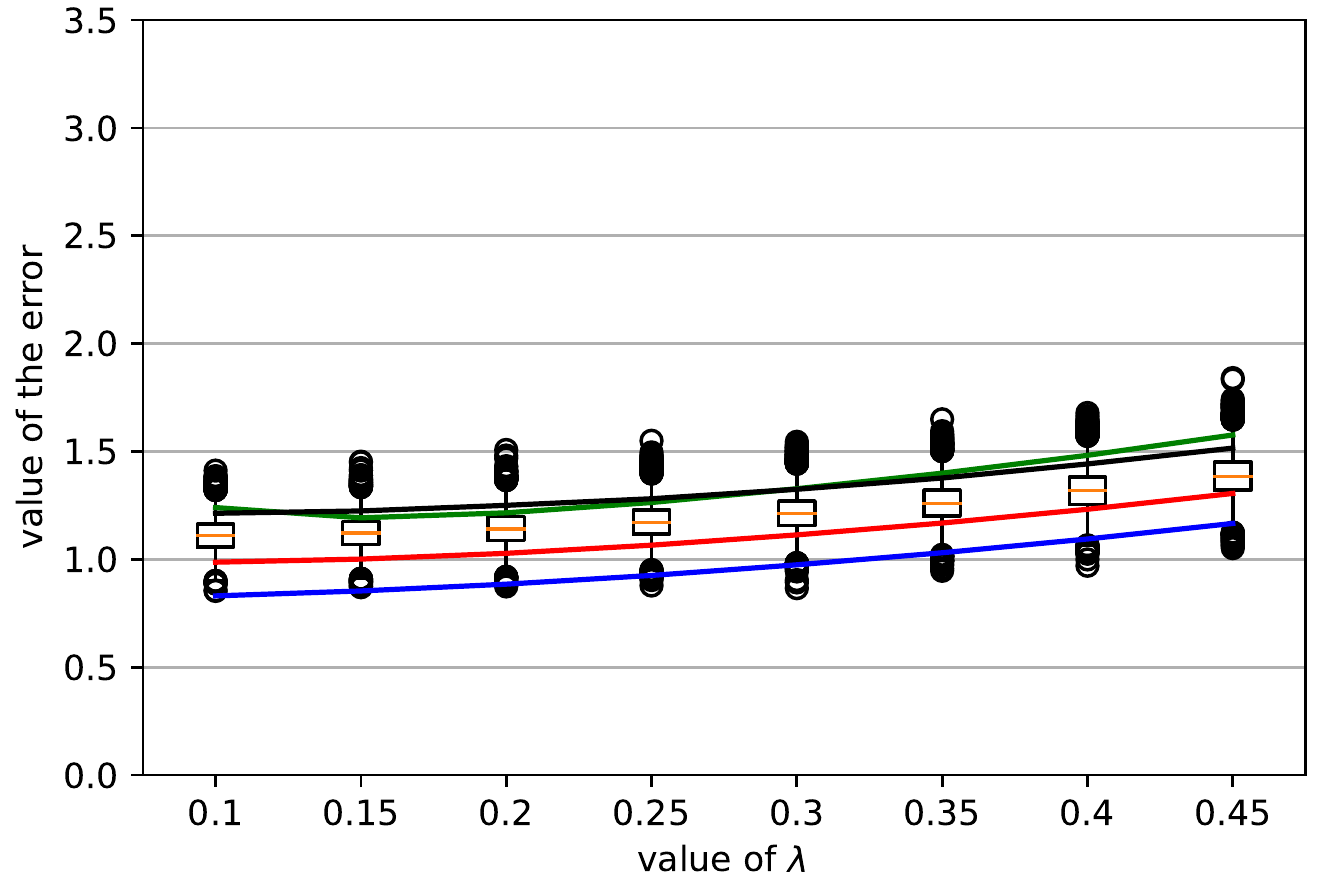}}

  {\includegraphics[width=0.7\paperwidth]{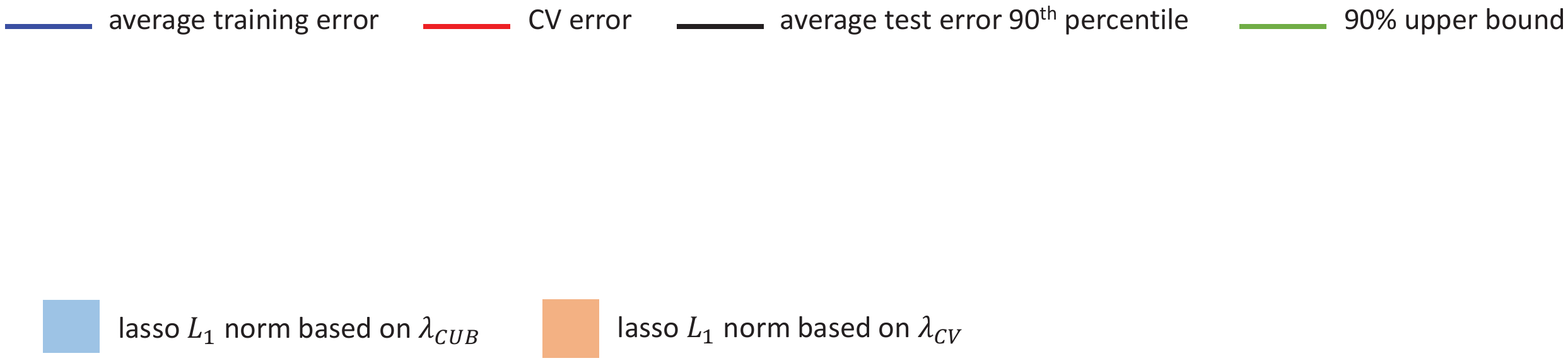}}

  \caption{Upper bound of the CV error for the lasso regression $(p = 100)$}
  \label{fig:sim3}
\end{figure}

Figure~\ref{fig:sim31} plots the simulation for $n/K = 50$. Since there are 50 data points in each fold, the lasso chooses at most 50 from the $p = 100$ variables. Since $n/K < p$ in this case, the degrees of freedom for the training error and the prediction error are low. Moreover, the lasso regression coefficients for each $\lambda$ are more unstable given the high-dimensional setting, resulting in training and CV errors with high variances. Owing to these two problems, the 90\% upper bound in Figure~\ref{fig:sim31} is not especially tight, although it does have approximately the same shape as the 90\textsuperscript{th} percentile across models. Figure~\ref{fig:sim32} plots the simulation for $n/K = 100$. In this case, the upper bound is much closer to the 90\textsuperscript{th} percentile compared with Figure~\ref{fig:sim31}. For $n/K = 200$ in Figure~\ref{fig:sim33}, the upper bound for a given $\lambda$ is more stable, returning an even tighter bound than Figure~\ref{fig:sim32}. Figure~\ref{fig:sim3} thus shows that the upper bounds and 90\textsuperscript{th} percentiles have similar shapes and that, with the exception of the high-dimensional case, the upper bound is close to or above the 90\textsuperscript{th} percentile. The convergence of the upper bound to the average test error as $n/K$ increases is clearly displayed in Figures~\ref{fig:sim31} to~\ref{fig:sim33}.

Overall, the simulations show that the upper bound is a good finite-sample approximation for the 90\textsuperscript{th} percentile of the average test error. Since the average test error is a reliable measure of the out-of-sample performance of the model, the upper bound is a reliable measure of the maximum-extent variation (with 90\% probability) of the out-of-sample performance of the model. Thus, provided that these simulations are robust and representative, the upper bound will be a reliable estimator of the 90\textsuperscript{th} percentile of the average test error.

\subsubsection{Simulation 2: robustness checks}

In the second set of simulations, we check the robustness of the upper bound to sampling randomness, referred to simply as \emph{robustness} below. Sampling randomness is implemented in the simulations through different random seeds.

\subsubsection*{Robustness of variable selection}

Firstly, we demonstrate the robustness of the upper-bound minimizer in variable selection. We use the same values as above for the parameters, repeating the simulations 120 times with different random seeds. We compare the performance of the CV-error minimizer ($\lambda_{CV}$) with the upper-bound minimizer ($\lambda_{CUB}$) in terms of the sparsity, stability and accuracy of variable selection. Sparsity is summarized empirically by the average number of the variables selected by $\lambda_{CV}$ (or $\lambda_{CUB}$). Stability is summarized empirically by the variance of the number of the variables selected by $\lambda_{CV}$ (or $\lambda_{CUB}$). To measure accuracy quantitatively, we use the following definition.
\begin{definition}[90\% accuracy of variable selection]
For a variable-selection algorithm, define \emph{90\% accuracy} to be when all the variables in $X_1$ (the variables with non-zero regression coefficients in the population) are selected with at least 90\% probability.
  \label{def:90-accurate}
\end{definition}

\noindent
Because we construct 90\% upper bounds, we expect the upper bound to perform well in 90\% of the simulations. Hence, we also choose 90\% to define empirical accuracy. Since we repeat each simulation 120 times, $\lambda_{CV}$ (or $\lambda_{CUB}$) will be empirically 90\% accurate if all 5 of the $X_1$ variables are selected in at least 108 repetitions. Table~\ref{table:simulation} compares the accuracy, sparsity and stability levels for $\lambda_{CV}$ and $\lambda_{CUB}$ across the different $n/K$ settings.

\begin{table}
\centering
  \caption{Accuracy, sparsity and stability of variable selection for the lasso}
  \label{table:simulation}
  \begin{tabular}{|r|c|r|r|c|r|r|}
     \hline
            & \multicolumn{3}{c|}{$\lambda_{CV}$} & \multicolumn{3}{c|}{$\lambda_{CUB}$} \\
     \cline{2-7}
            & & \multicolumn{2}{c|}{\footnotesize total number of}
            & & \multicolumn{2}{c|}{\footnotesize total number of} \\
            & \footnotesize 90\% & \multicolumn{2}{c|}{\footnotesize variables selected}
            & \footnotesize 90\% & \multicolumn{2}{c|}{\footnotesize variables selected} \\
     \cline{3-4} \cline{6-7}
     $n/K$  & \footnotesize accuracy
            & \footnotesize \ average & \footnotesize variance
            & \footnotesize accuracy
            & \footnotesize \ average & \footnotesize variance \\
     \hline\hline
      50 & Yes & 11.59 & 10.44 & Yes & 11.16 & 9.48 \\ \hline
     100 & Yes &  9.01 &  5.61 & Yes &  8.12 & 4.59 \\ \hline
     200 & Yes &  6.75 &  2.54 & Yes &  6.11 & 1.28 \\ \hline
  \end{tabular}
\end{table}

When $n/K = 100$, Table~\ref{table:simulation} shows that $\lambda_{CV}$ selects on average roughly one more variable compared with $\lambda_{CUB}$. Since $\lambda_{CV}$ and $\lambda_{CUB}$ are both 90\% accurate, the extra variable selected by $\lambda_{CV}$ is highly likely to be redundant. Put another way, with a high probability, $\lambda_{CUB}$ reduces the number of selected redundant variables by roughly 25\% (from 4 to 3). Moreover, $\lambda_{CUB}$ reduces the variance of the number of the selected variables by approximately 17\% compared with $\lambda_{CV}$. Thus, maintaining 90\% accuracy, $\lambda_{CUB}$ on average delivers more sparsity and superior stability in variable selection relative to $\lambda_{CV}$.

When $n/K = 200$, Table~\ref{table:simulation} shows that both $\lambda_{CV}$ and $\lambda_{CUB}$ satisfy 90\% accuracy. However, with a high probability, $\lambda_{CUB}$ reduces the number of selected redundant variables by 38\% (from 1.75 to 1.11) and the variance of the number of selected variables by almost 50\%. Thus, relative to $\lambda_{CV}$, $\lambda_{CUB}$ again delivers superior sparsity and stability along with a similar level of accuracy in variable selection.

When $n/K = 50$, the lasso is forced to select among 100 variables using just 50 observations. Due to the high-dimensional setting, both the average training error and the CV error have significantly higher variances compared with $n/K = 100$ or $200$, given $\lambda$. Despite being 90\% accurate, the sparsity of variable selection for both $\lambda_{CV}$ and $\lambda_{CUB}$ is reduced: both select roughly 11 variables, implying, with a high probability, that 6 redundant variables are selected by each method. However, compared with $\lambda_{CV}$, $\lambda_{CUB}$ reduces the variance of the number of variables selected by roughly 10\%. Thus, while maintaining 90\% accuracy, the stability of variable selection for $\lambda_{CUB}$ remains slightly superior to that of $\lambda_{CV}$.

\subsubsection*{$L^1$ shrinkage}

Another perspective on the robustness of variable selection is provided in Figure~\ref{fig:L1norm}, which shows the distributions of the $L^1$ norms of the lasso regression coefficients for $\lambda_{CV}$ ($\left\Vert \beta_{CV} \right\Vert_1$) and $\lambda_{CUB}$ ($\left\Vert \beta_{CUB} \right\Vert_1$). Figures~\ref{fig:hist1}-\ref{fig:hist3} use the same scale to illustrate the convergence in performance for $\lambda_{CV}$ and $\lambda_{CUB}$. Figure~\ref{fig:hist1} illustrates the high variance problem associated with high-dimensional space ($n/K = 50$): both $\left\Vert \beta_{CV} \right\Vert_1$ and $\left\Vert \beta_{CUB} \right\Vert_1$ are disperse, consistent with the large variances for $\lambda_{CUB}$ and $\lambda_{CV}$ in Table~\ref{table:simulation}. With $n/K = 100$ in Figure~\ref{fig:hist2}, the distributions of both $\left\Vert \beta_{CV} \right\Vert_1$ and $\left\Vert \beta_{CUB} \right\Vert_1$ become more compact, implying much smaller variances for $\left\Vert \beta_{CV} \right\Vert_1$ and $\left\Vert \beta_{CUB} \right\Vert_1$. The convergence is clearly shown with $n/K = 200$ in Figure~\ref{fig:hist3}, where the distributions of $\left\Vert \beta_{CUB} \right\Vert_1$ and $\left\Vert \beta_{CV} \right\Vert_1$ are closer to 25, the $L^1$ norm of $\beta_{nonzero}$ (the population regression coefficients).

\begin{figure}[ht]
  \centering

  \subfloat[\label{fig:hist1} $n/K = 50$]
  {\includegraphics[width=0.24\paperwidth]{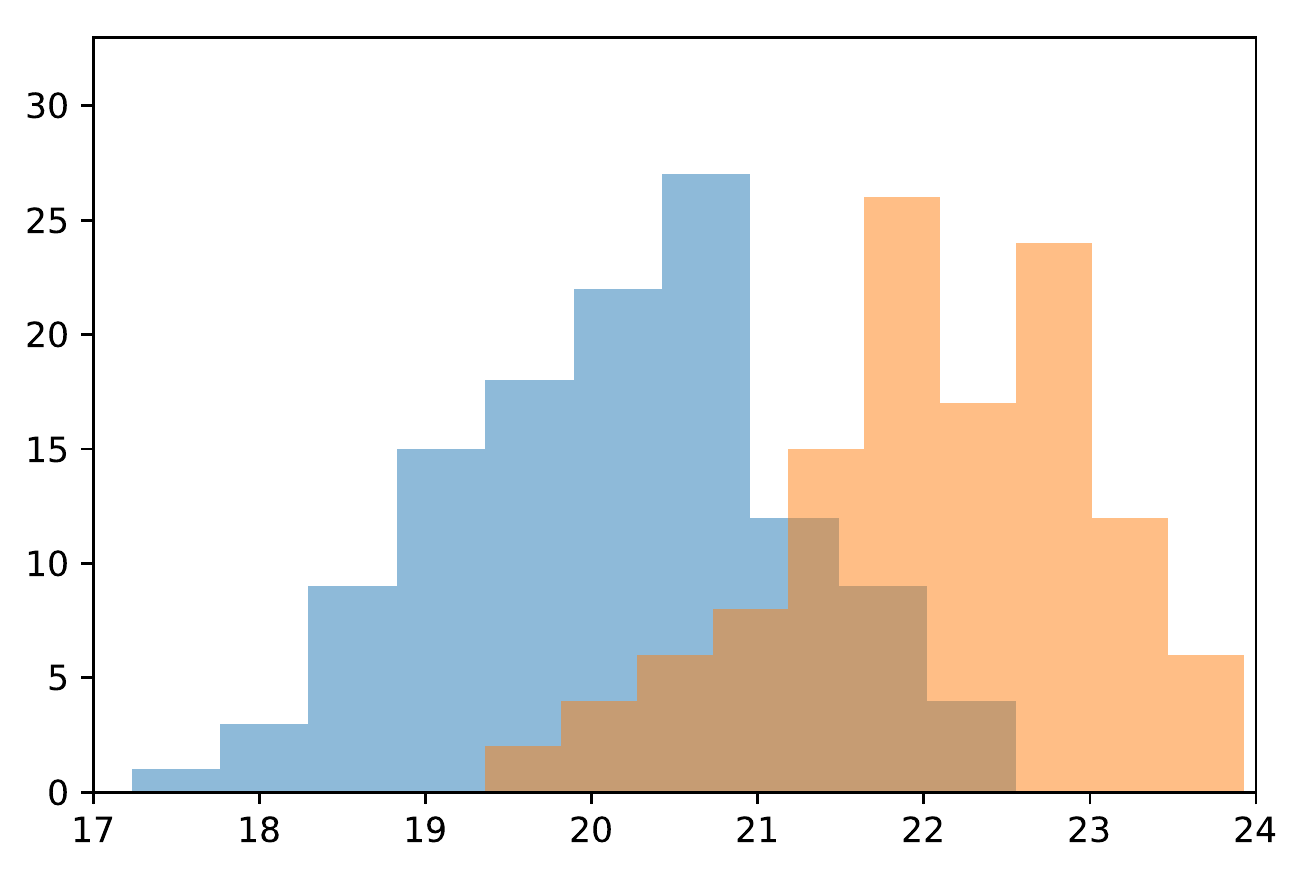}}
  \subfloat[\label{fig:hist2} $n/K = 100$]
  {\includegraphics[width=0.24\paperwidth]{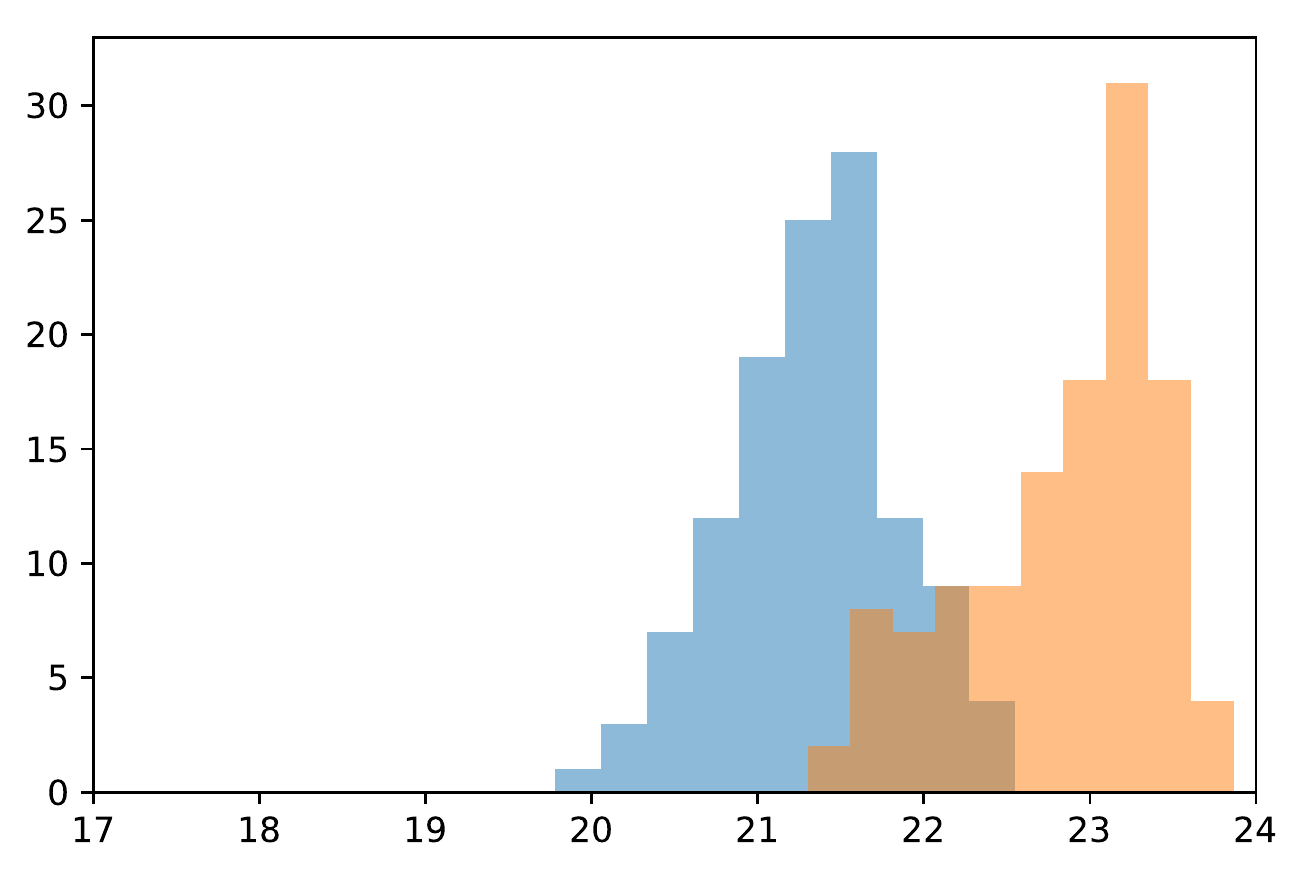}}
  \subfloat[\label{fig:hist3} $n/K = 200$]
  {\includegraphics[width=0.24\paperwidth]{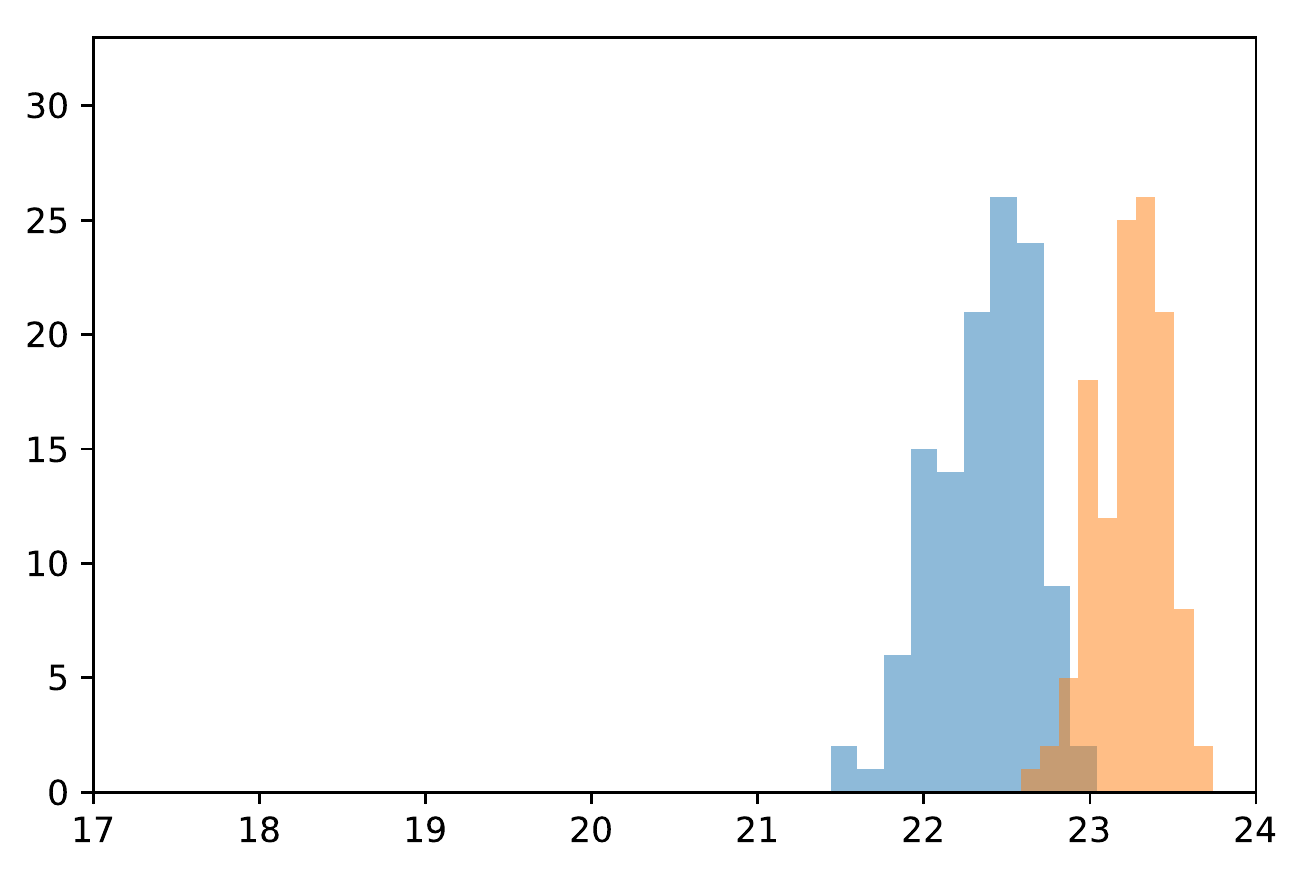}}

  {\includegraphics[width=0.2\paperwidth]{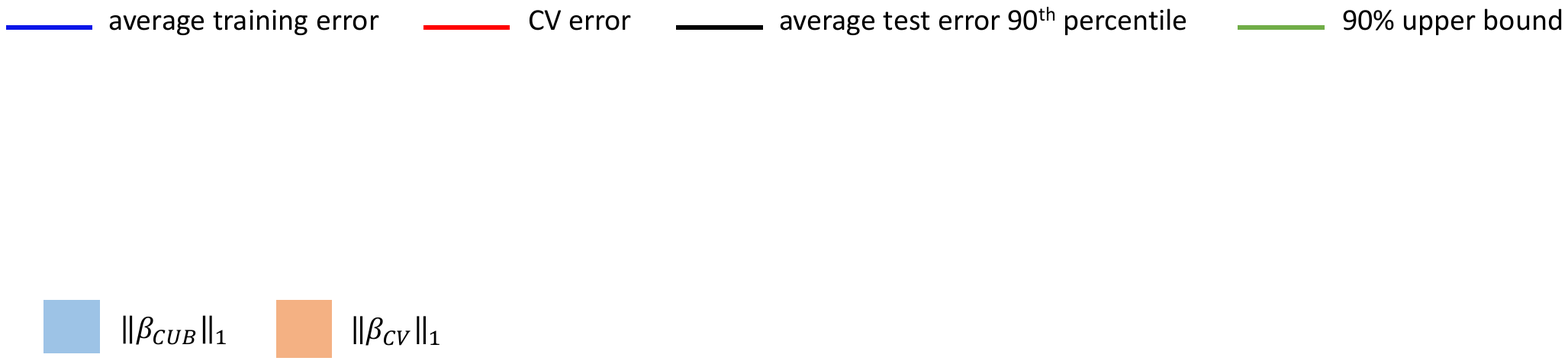}}

  \caption{$L^1$ norms of the lasso for $\lambda_{CUB}$ and $\lambda_{CV}$ given $n/K$}
  \label{fig:L1norm}

\end{figure}

All three panels in Figure~\ref{fig:L1norm} also show a distribution for $\left\Vert \beta_{CUB} \right\Vert_1$ that lies to the left of the distribution for $\left\Vert \beta_{CV} \right\Vert_1$, indicating that $\lambda_{CUB}$ shrinks the regression coefficients more aggressively than $\lambda_{CV}$ and, hence, that $\lambda_{CUB}$ offers more sparsity. Essentially, tuning the value of $\lambda$ represents a trade-off between the sparsity and the accuracy of variable selection. The larger $\lambda$, the sparser the variable selection and the larger the probability we eliminate variables with non-zero regression coefficients. The overall impression from Table~\ref{table:simulation} and Figure~\ref{fig:L1norm} is that $\lambda_{CUB}$ hits a sweet spot in terms of the trade-off between sparsity and accuracy: it increases the level of shrinkage without breaching the 90\% accuracy level.

\subsubsection*{Robustness of shape and location}

Now we focus on the robustness of the shape and location of the upper bounds. First, we consider the relative positions of the upper bound and the 90\% percentile of the average test error for given $\lambda$. Thus, we plot the curve of the 90\% upper bounds (CUB) and the curve of the 90\textsuperscript{th} percentiles of the average test errors (CAT). For comparison we also plot the CV error curve (CCV). Since the 90\% upper bound is also the upper bound of the 90\textsuperscript{th} percentile, we expect the CUB to be close to or above the CAT. Second, we consider the shapes of the CUB and CAT. If they have similar shapes, their properties (such as the locations of the minima) will also be similar. Third, we consider whether $\lambda_{CUB}$ improves the sparsity-accuracy trade-off relative to $\lambda_{CV}$, in a similar vein to heuristic decision rules for the lasso (such as the one-standard-error-rule).

Figures~\ref{fig:sim4}-\ref{fig:sim6} in Appendix~B plot for each setting the CUB, CAT and CCV from the lasso simulations repeated for 120 different random seeds. To keep the number of plots manageable, we report only the first 20~results for each repetition, corresponding to random seeds from 5 to 100. Generally speaking, the 20 plots in each of Figures~\ref{fig:sim4}-\ref{fig:sim6} are representative of the 120 repetitions in each setting. Note that while the vertical scale is the same across all plots for each figure, it changes slightly between figures.

The plots for $n/K = 100$ in Figure~\ref{fig:sim4} show that the location and shape of CCV fluctuates across random seeds. The vertical position of the CCV varies between $1.0$ and $2.0$ while the CUB varies between $1.5$ and $2.0$, revealing the inferior reliability of the CV error. The U-shape of the CUB also remains stable across the 20~plots. As expected (given we construct a 90\% upper bound), the CUB is close to or above the CAT in most of the Figure~\ref{fig:sim4} plots. The 4 exceptions (out of 20 plots) are when CUB is below CAT (Figures~\ref{fig:sim411} and~\ref{fig:sim413}) and some way above CAT (Figures~\ref{fig:sim48}, ~\ref{fig:sim412}).

Figure~\ref{fig:sim4} also shows that $\lambda_{CUB}$ is more robust than $\lambda_{CV}$ across different random seeds. In 18 of the 20 plots, $\lambda_{CUB} = 0.2$; by contrast, the value of $\lambda_{CV}$ fluctuates between $0.1$, $0.15$ and $0.2$ across the 20 plots. The robustness of $\lambda_{CUB}$ is explained intuitively as follows. The CUB measures (with 90\% probability) the \emph{maximum} level of underfitting or overfitting for a regression model on \emph{any} sample from the DGP whereas the CCV only measures (with bias and a high variance) the \emph{average} level of underfitting or overfitting for the same regression model on \emph{some} sample. As a result, any deviation from the $\lambda_{CUB}$ causes CUB to increase by at least the same magnitude as CCV. Thus, as shown in the Figure~\ref{fig:sim4} plots, CUB has a more pronounced U-shape than CCV, making $\lambda_{CUB}$ more stable. (This is similar to the relation between the minimum eigenvalue of the Gram matrix and the variance of the regression coefficients in OLS, in which the larger the minimum eigenvalue, the more convex the linear space and the lower the variance of the regression coefficients.) In most of the Figure~\ref{fig:sim4} plots, the CCV and CAT are reasonably flat between $0$ and $0.20$, implying a variable $\lambda_{CV}$ across random seeds. Reflecting the conclusion from Table~\ref{table:simulation}, Figure~\ref{fig:sim4} shows empirically that $\lambda_{CUB}$ improves the sparsity-accuracy trade-off, similar to the effect of the one-standard-error rule. Of course, in addition to being a useful empirical tool, $\lambda_{CUB}$ has a sound theoretical basis---supported by the use of Rademacher complexity and the Orlicz-$\Psi_v$ norm---in contrast with heuristic decision rules for the lasso.

Figure~\ref{fig:sim5} reports the repeated simulations when $n/K = 200$. Since $n/K > p$ in this case, CCV, CAT and CUB exhibit a tendency to converge: the shapes and locations of the three curves are stable and similar while the high variance problem associated with $K$-CV is reduced. Overall in Figure~\ref{fig:sim5}, while $\lambda_{CUB} = 0.15 > \lambda_{CV} = 0.10$, the accuracy, sparsity and stability of variable selection for $\lambda_{CUB}$ and $\lambda_{CV}$ are quite similar, reflecting the conclusion from Table~\ref{table:simulation}.

Figure~\ref{fig:sim6} reports the repeated simulations when $n/K = 50$. In this case, $n/K < p$, and the location and shape of the CCV are more volatile across different random seeds than the CUB. The CUB is also close to or above CAT in most repetitions. The 3 exceptions (out of 20 plots) are when CUB intersects CAT (Figures~\ref{fig:sim611} and \ref{fig:sim615}) and below CAT (Figure \ref{fig:sim616}). This implies that, even in high-dimensional spaces, it is still feasible to establish a reliable and stable upper bound for the CV error. It also important to note that the U-shape of the CUB is more pronounced than the (almost flat) CCV in Figure~\ref{fig:sim6} when the sample size is much smaller than the previous cases. As a result, $\lambda_{CV}$ is more unstable across repetitions than $\lambda_{CUB}$, as shown in the plots. Thus, while both $\lambda_{CV}$ and $\lambda_{CUB}$ fail to retain the sparsity of the $n/K \geqslant p$ cases, $\lambda_{CUB}$ outperforms $\lambda_{CV}$ in terms of sparsity and stability. In conclusion, the improvement in variable-selection stability due to $\lambda_{CUB}$ noted in Table~\ref{table:simulation} also apparent from Figure~\ref{fig:sim4} to Figure~\ref{fig:sim6}.

\section{Conclusion}

In this paper, we derive a new upper bound for the cross-validation errors from any model in $K$-fold cross-validation. Use of Orlicz-$\Psi_\nu$ space in the analysis means the upper bound applies for different types of error processes. As well as serving as a confidence interval, the bound may be used as a criterion for model selection. In simulations, we show that the 90\% confidence band is tight for the lasso regression in high-dimensional space. The new bounds also have a similar shape to the 90\textsuperscript{th} percentiles for the distributions of the CV errors for the lasso. Bounds for $K$-CV based on other complexity measures, such as covering numbers or Mallow's $C_p$, can also be retrieved from our framework. Since the theoretical results are derived in a general framework, potentially they may be applied to many different $K$-CV-based learning algorithms and estimators (such as decision trees, nearest neighbors, the lasso, etc.) for hyperparameter tuning and model selection.

Two caveats of our approach are worth mentioning. For the upper bounds of the prediction error in each round, the concentration inequality based on the Orlicz-$\Psi_\nu$ norm quantifies the exponential concentration tendency without requiring a Lebesgue $p$-norm. However, because the classical techniques of time series typically rely on a well-defined Lebesgue $2$-norm, we need to assume that $T_q$ has a finite Lebesgue $2$-norm, weakening the power of Orlicz-$\Psi$ space. Also, independent blocks require a well-defined envelope function for the random variable of interest, which confines our analysis for $\beta$-mixing data to bounded losses.

\newpage
\appendix
\section{Mathematical proofs}
%
%
\begin{proof}
\textbf{Theorem}~\ref{thm:one_round_VC_bound}

To prove Theorem~\ref{thm:one_round_VC_bound}, it is equivalent to quantify the following probability:
\begin{equation}
  \mathrm{ Pr } \left[ \sup_{b \in \Lambda_l} \left\vert \mathcal{R}_{n_s} \left( b, Y_s^q, X_s^q \right) - \mathcal{ R }_{ n_t } \left( b, Y_t^q, X_t^q \right) \right\vert > \epsilon \right],
  \mbox{ for given } \epsilon \in \mathbb{R}^+.
  \label{eq:proof_1_0}
\end{equation}
Eq.~(\ref{eq:proof_1_0}) could be rewritten as
\begin{align}
  \phantom{ = } & ~ \mathrm{ Pr } \left[ \sup_{ b \in \Lambda_l } \left\vert \mathcal{ R }_{ n_s } \left( b, Y_s^q, X_s^q \right) - \mathcal{ R }_{ n_t } \left( b, Y_t^q, X_t^q \right) \right\vert > \epsilon \right] \notag \\
  = & ~ \mathrm{Pr} \left[ \sup_{b \in \Lambda_l} \left\vert \mathcal{R}_{n_s} \left( b, Y_s^q, X_s^q \right) - \mathcal{R} \left( b, Y, X \right) + \mathcal{R} \left( b, Y, X \right) - \mathcal{R}_{n_t} \left( b, Y_t^q, X_t^q \right) \right\vert > \epsilon \right].
  \label{eq:proof_1_1}
\end{align}
Due to the convexity of the norm, eq.~(\ref{eq:proof_1_1}) implies that
\begin{align}
  \phantom{ = } & ~ \mathrm{ Pr } \left[ \sup_{ b \in \Lambda_l } \left\vert \mathcal{ R }_{ n_s } \left( b, Y_s^q, X_s^q \right) - \mathcal{ R } \left( b, Y, X \right) + \mathcal{ R } \left( b, Y, X \right) - \mathcal{ R }_{ n_t } \left( b, Y_t^q, X_t^q \right) \right\vert > \epsilon \right] \notag \\
  \leqslant & ~ \mathrm{ Pr } \left[ \sup_{ b \in \Lambda_l } \left\vert \mathcal{ R }_{ n_s } \left( b, Y_s^q, X_s^q \right) - \mathcal{ R } \left( b, Y, X \right) \right\vert + \sup_{ b \in \Lambda_l } \left\vert \mathcal{ R } \left( b, Y, X \right) - \mathcal{ R }_{ n_t } \left( b, Y_t^q, X_t^q \right) \right\vert > \epsilon \right].
  \label{eq:proof_1_2}
\end{align}

\noindent
If we further define
\[
  \Phi_{ n_t } \mbox{ as } \sup_{ b \in \Lambda_l } \left\vert \mathcal{ R } \left( b, Y, X \right) - \mathcal{ R }_{ n_t } \left( b, Y_t^q, X_t^q \right) \right\vert
\]
and
\[
  \Phi_{ n_s } \mbox{ as } \sup_{ b \in \Lambda_l } \left\vert \mathcal{ R }_{ n_s } \left( b, Y_s^q, X_s^q \right) - \mathcal{ R } \left( b, Y, X \right) \right\vert,
\]

\noindent
the following derivation holds for eq.~(\ref{eq:proof_1_2}),

\noindent
\begin{align}
  \phantom{ = } & ~ \mathrm{ Pr } \left[ \sup_{ b \in \Lambda_l } \left\vert \mathcal{ R }_{ n_s } \left( b, Y_s^q, X_s^q \right) - \mathcal{ R } \left( b, Y, X \right) \right\vert + \sup_{ b \in \Lambda_l } \left\vert \mathcal{ R } \left( b, Y, X \right) - \mathcal{ R }_{ n_t } \left( b, Y_t^q, X_t^q \right) \right\vert > \epsilon \right] \notag \\
  = & ~ \mathrm{ Pr } \left[ \Phi_{ n_s } + \Phi_{ n_t } > \epsilon \right].
\end{align}

\smallskip
\noindent
1. \emph{ $\left\Vert \rho_j \right\Vert_{\Psi_2}$ is finite}

\smallskip
In Theorem~\ref{thm:one_round_VC_bound}, we defined the performance of the model class in each fold as
\begin{equation}
  \rho_j = \sup_{ b \in \Lambda_l } \left\vert \frac{ 1 } { n/K } \sum_{ i = 1 }^{ n/K } Q \left( b, y^{ j, i }, \mathrm{ x }^{ j, i } \right) - \mathcal{ R } \left( b, Y, X \right) \right\vert.
\end{equation}
We reserve $\rho_K$ for the fold used as the validation data. The following relation between $\Phi_{ n_t }$ and $\rho_j$ holds,
\begin{align}
  \Phi_{n_t} & \leqslant  \frac{ 1 } { K - 1 } \sum_{ j = 1 }^{ K - 1 } \sup_{ b \in \Lambda_l } \left\vert \frac{ 1 } { n/K } \sum_{ i = 1 }^{ n/K } Q \left( b, y^{ j, i }, \mathrm{ x }^{ j, i } \right) - \mathcal{ R } \left( b, Y, X \right) \right\vert   \\
  & = \frac{ 1 } { K - 1 } \sum_{ q = 1 }^{ K - 1 } \rho_j \, .
\end{align}
If $\left\Vert \rho_j \right\Vert_{ \Psi_2 }$ is finite, $\rho_j$ is subgaussian. Denote the variance of $\rho_j$ as $\sigma^2$. Define $\thickbar{ \epsilon } = \epsilon - 2 \cdot \mathbb{ E } \left( \rho_j \right)$. We also denote the vector $\mathbf{ a } = \left[ 1 / \left( K - 1 \right), \ldots, 1 / \left( K - 1 \right), 1 \right] \in \mathbb{R}^{ 1 \times K }$ as the weight and $\left\Vert \cdot \right\Vert_2$ as the $L^2$ norm. As a result, the Chernoff bound for the weighted sum of subgaussian variables implies that
\begin{align}
  \mathrm{ Pr } \left[ \Phi_{ n_t } + \Phi_{ n_s } > \epsilon \right] & \leqslant \mathrm{ Pr } \left[ \frac{ 1 } { K - 1 } \sum_{ j = 1 }^{ K - 1 } \rho_j + \rho_K > \epsilon \right] \notag \\
  & = \mathrm{ Pr } \left[ \frac{ 1 } { K - 1 } \sum_{ j = 1 }^{ K - 1 } \rho_j + \rho_K - 2 \cdot \mathbb{ E } \left( \rho_j \right) > \epsilon - 2 \cdot \mathbb{ E } \left( \rho_j \right) \right] \\
  & \leqslant \exp \left\{ - \frac{ \thickbar{ \epsilon }^2 } { 2 \cdot \left\Vert \mathbf{ a } \right\Vert_2^2 \cdot \sigma^2 } \right\} \\
  & =  \exp \left\{ - \frac{ \thickbar{ \epsilon }^2 \cdot \left( K - 1 \right)} { 2 \cdot \sigma^2 \cdot K } \right\}.
\end{align}
Since $K \in \left[ 2, n \right]$,
\begin{align}
  \mathrm{ Pr } \left[ \sup_{ b \in \Lambda_l } \left\{ \mathcal{ R }_{ n_s } \left( b, Y_s^q, X_s^q \right) - \mathcal{ R }_{ n_t } \left( b, Y_t^q, X_t^q \right) \right\} > \epsilon \right]
  & \leqslant \exp \left\{ -\frac{ \left( \epsilon - 2 \cdot \mathbb{ E } \left[ \rho_j \right] \right)^2 \cdot \left( K - 1 \right) }{ 2 \cdot \sigma^2 \cdot K } \right\} \\
  & \leqslant \exp \left\{ -\frac{ \left( \epsilon - 2 \cdot \mathbb{ E } \left[ \rho_j \right] \right)^2 }{ 4 \cdot \sigma^2 } \right\}
\end{align}
Set $ \varpi = \exp \left\{ - \left( \epsilon - 2 \cdot \mathbb{ E } \left[ \rho_j \right] \right)^2 / \left( 4 \cdot \sigma^2 \right) \right\}$. Since $\mathbb{ E } \left[ \rho_j \right] \leqslant \mathrm{ RC }_{ n/K } \left( \Lambda_l \right)$ \; \citep{mohri2009rademacher},
\begin{align}
  \epsilon & = 2 \cdot \mathbb{ E } \left[ \rho_j \right] + 2 \cdot \sigma \cdot \sqrt{ \log \left( \frac{ 1 } { \varpi } \right) } \\
  \phantom{ \epsilon } & \leqslant 2 \cdot \mathrm{ RC }_{ n/K } \left( \Lambda_l \right) + 2 \cdot \sigma \cdot \sqrt{ \log \left( \frac{ 1 } { \varpi } \right) },
\end{align}
where $\mathrm{ RC }_{ n/K } \left( \Lambda_l \right) $ is the Rademacher complexity of $\Lambda_l$ on each fold. Hence, $ \forall b \in \Lambda_l $, the following inequality holds with probability at least $ 1 - \varpi \in \left( 0 , 1  \right] $,
\begin{equation}
  \mathcal{ R }_{ n_s } \left( b, Y_s^q, X_s^q \right) \leqslant \mathcal{ R }_{ n_t } \left( b, Y_t^q, X_t^q \right) + 2 \cdot \mathrm{ RC }_{ n/K } \left( \Lambda_l \right) + 2 \cdot \sigma \cdot \sqrt{ \log \left( \frac{ 1 } { \varpi } \right) }.
\end{equation}
The definition of $\rho_i$ implies that
\begin{equation}
  \sigma^2 \leqslant ~ \mathbb{ E } \left[
    \left( ~
      \sup_{ b \in \Lambda_l } \left\vert \frac{ 1 }{ n/K } \sum_{ i = 1 }^{ n/K } \left[ Q \left( b, y^{ j, i }, \mathbf{ x }^{ j, i } \right) - \mathcal{ R } \left( b, Y, X \right) \right] \right\vert
    ~ \right)^2
  \right].
  \label{keystep_1_theorem_1}
\end{equation}
Since, $\forall x \in \mathbb{R}$, $\left[ \sup \left\vert x \right\vert \right]^2 = \left[ \sup \left\vert x \right\vert^2 \right] = \left[ \sup \left( x \right)^2 \right]$, eq.~(\ref{keystep_1_theorem_1}) implies that

\begin{align}
  \phantom{=} & ~ \mathbb{ E } \left[ \left( ~ \sup_{ b \in \Lambda_l } \left\vert \frac{ 1 }{ n/K } \sum_{ i = 1 }^{ n/K } \left[ Q \left( b, y^{ j, i }, \mathbf{ x }^{ j, i } \right) - \mathcal{ R } \left( b, Y, X \right) \right] \right\vert ~ \right)^2 \right] \notag \\
  = & ~ \mathbb{ E } \left[ \sup_{ b \in \Lambda_l } \left( \frac{ 1 }{ n/K } \sum_{ i = 1 }^{ n/K } \left[ Q \left( b, y^{ j, i }, \mathbf{ x }^{ j, i } \right) - \mathcal{ R } \left( b, Y, X \right) \right] \right)^2 \right].
  \label{keystep_2_theorem_1}
\end{align}
\text{A5} implies that we can `partially' interchange $\mathbb{ E } \left[ \cdot \right] $ and $\sup_{ b \in \Lambda_l } \left( \cdot \right)$ by multiplying a factor $\theta$. As a result, eq.~(\ref{keystep_2_theorem_1}) implies that

\begin{align}
  \phantom{=} & ~ \mathbb{ E } \left[ \sup_{ b \in \Lambda_l } \left( \frac{ 1 }{ n/K } \sum_{ i = 1 }^{ n/K } \left[ Q \left( b, y^{ j, i }, \mathbf{ x }^{ j, i } \right) - \mathcal{ R } \left( b, Y, X \right) \right] \right)^2 \right] \notag \\
  \leqslant & ~ \theta \cdot \sup_{ b \in \Lambda_l } \left\{ \mathbb{ E } \left[ ~ \left( \frac{ 1 }{ n/K } \sum_{ i = 1 }^{ n/K } \left[ Q \left( b, y^{ j, i }, \mathbf{ x }^{ j, i } \right) - \mathcal{ R } \left( b, Y, X \right) \right] ~\right)^2 \right] \right\}  \\
  = & ~ \theta \cdot \frac{ 1 }{ n/K } \cdot \sup_{ b \in \Lambda_l } \left\{ \mathrm{ var } \left[ Q \left( b, y^{ j, i }, \mathbf{ x }^{ j, i } \right) - \mathcal{ R } \left( b, Y, X \right) \right] \right\}
  \label{eqn:last_step_bounded} \\
  \leqslant & ~  \theta \cdot \frac{ B^2 }{ n/K } .
\end{align}
As a result, given the probability $ 1 - \varpi $,
\begin{equation}
  \mathcal{ R }_{ n_s } \left( b, Y_s^q, X_s^q \right) \leqslant \mathcal{ R }_{ n_t } \left( b, Y_t^q, X_t^q \right) + 2 \cdot \mathrm{ RC }_{ n/K } \left( \Lambda_l \right) + 2 \cdot B \cdot \sqrt{ \theta \cdot \frac{ \log \left( 1 / \varpi \right) } { n / K } }.
\end{equation}

\bigskip
%
\noindent
2. \emph{$ Q \left( \cdot \right) $ is bounded}

If $ Q \left( \cdot \right) $ is bounded by $M$, so is $ \rho_j $. Hence, we know that $ \rho_j $ is also subgaussian. Defining the variance proxy of $ \sup_{ b \in \Lambda_l } \left\{ \mathrm{ var } \left[ Q \left( b, y^{ j, i }, \mathbf{ x }^{ j, i } \right) - \mathcal{ R } \left( b, Y, X \right) \right] \right\}$ as $\tilde{ \sigma }^2$,  we know that $ \tilde{ \sigma }^2 < M^2 $. As a result, eq.~(\ref{eqn:last_step_bounded}) implies that
\begin{equation}
  \mathcal{R}_{n_s} \left( b, Y_s^q, X_s^q \right) \leqslant \mathcal{R}_{n_t} \left( b, Y_t^q, X_t^q \right) + 2 \cdot \mathrm{RC}_{n/K} \left( \Lambda_l \right) + 2 \cdot M \cdot \sqrt{ \theta \cdot \frac{ \log \left( 1 / \varpi \right) } { n / K } },
  \forall 1-\varpi \in \left[ 0,1 \right)
\end{equation}

\bigskip
\noindent
3. \emph{$\left\Vert \rho_j \right\Vert_{ \Psi_1 }$ is finite}

If $\left\Vert \rho_j \right\Vert_{ \Psi_1 }$ is finite, $ \rho_j $ is subexponential. As a result, the Bernstein-type inequality \citep{Lecue09tool,talagrand1994supremum} holds as follows
\begin{align}
  \mathrm{ Pr } \left[ \Phi_{ n_t } + \Phi_{ n_s } > \epsilon \right] & \leqslant \mathrm{ Pr } \left[ \frac{ 1 } { K - 1 } \sum_{ j = 1 }^{ K - 1 } \rho_j + \rho_K > \epsilon \right] \\
  & = \mathrm{ Pr } \left[ \frac{ 1 } { K - 1 } \sum_{ j = 1 }^{ K - 1 } \rho_j + \rho_K - 2 \cdot \mathbb{ E } \left( \rho_j \right) > \epsilon - 2 \cdot \mathbb{ E } \left( \rho_j \right) \right] \\
  & \leqslant 2 \cdot \exp \left\{ - c \cdot \min \left(
    \frac{ \thickbar{ \epsilon }^2 } { \left\Vert \rho_j \right\Vert_{ \Psi_1 }^2 \cdot \left\Vert \mathbf{a} \right\Vert_2^2 },
    \frac{ \thickbar{ \epsilon } } { \left\Vert \rho_j \right\Vert_{ \Psi_1 } \cdot \left\Vert \mathbf{a} \right\Vert_\infty }
    \right) \right\}  \\
  & \leqslant 2 \cdot \exp \left\{ - c \cdot \min \left(
    \frac{ \left( \epsilon - 2 \cdot \mathbb{ E } \left[ \rho_j \right] \right)^2 } { 2 \cdot \left\Vert \rho_j \right\Vert_{ \Psi_1 }^2 },
    \frac{ \epsilon - 2 \cdot \mathbb{ E } \left[ \rho_j \right] } { \left\Vert \rho_j \right\Vert_{ \Psi_1 } }
    \right) \right\}
\end{align}
If we set $ \left( \epsilon -2 \cdot \mathbb{ E } \left[ \rho_j \right] \right) / \left( 2 \cdot \left\Vert \rho_j \right\Vert_{ \Psi_1 } \right) = \tau > 0$,
\begin{align}
  \min \left(
  \frac{ \left( \epsilon - 2 \cdot \mathbb{ E } \left[ \rho_j \right] \right)^2 } { 2 \cdot \left\Vert \rho_j \right\Vert_{ \Psi_1 }^2 },
  \frac{ \epsilon -2 \cdot \mathbb{ E } \left[ \rho_j \right] } { \left\Vert \rho_j \right\Vert_{ \Psi_1 } }
  \right)
  = \left\{
  \begin{array}{lr}
    \left( \epsilon - 2 \cdot \mathbb{ E } \left[ \rho_j \right] \right) / \left\Vert \rho_j \right\Vert_{ \Psi_1 } ,   & \text{if } \tau > 1  \\
    \left( \epsilon - 2 \cdot \mathbb{ E } \left[ \rho_j \right] \right)^2 / \left( \sqrt{ 2 } \cdot  \left\Vert \rho_j \right\Vert_{ \Psi_1 } \right)^2 ,   & \text{if } \tau \leqslant 1
  \end{array}\right.
\end{align}
Hence,
\begin{align}
  \mathrm{Pr} \left[ \sup_{b \in \Lambda_l} \left\vert \mathcal{R}_{n_s} \left( b, X_s^q, Y_s^q \right) - \mathcal{R}_{n_t} \left( b, X_s^q, Y_s^q \right) \right\vert > \epsilon \right]
  \leqslant  \left\{
    \begin{array}{lr}
      2 \exp \left\{ - c \cdot \frac{ \epsilon - 2 \cdot \mathbb{E} \left[ \rho_j \right] } { \left\Vert \rho_j \right\Vert_{\Psi_1} }\right\} ,   & \text{if } \tau > 1  \\
      2 \exp \left\{ - c \cdot \frac{ \left( \epsilon - 2 \cdot \mathbb{E} \left[ \rho_j \right] \right)^2 } { 2 \cdot \left\Vert \rho_j \right\Vert_{\Psi_1}^2 } \right\} ,   & \text{if } \tau \leqslant 1
    \end{array}\right.
  \label{eqn:proof_heavy_tail}
\end{align}
If we set $ \varpi $ as the RHS of eq.~(\ref{eqn:proof_heavy_tail}), i.e.
\begin{align}
  \varpi = \left\{
    \begin{array}{lr}
      2 \exp \left\{ - c \cdot \frac{ \epsilon - 2 \cdot \mathbb{ E } \left[ \rho_j \right] } { \left\Vert \rho_j \right\Vert_{\Psi_1} } \right\} ,   & \text{if} ~~ \epsilon - 2 \cdot \mathbb{ E } \left[ \rho_j \right] >  2 \cdot \left\Vert \rho_j \right\Vert_{ \Psi_1 } > 0 \\
      2 \exp \left\{ - c \cdot \frac{ \left( \epsilon - 2 \cdot \mathbb{ E } \left[ \rho_j \right] \right)^2 } { 2 \cdot \left\Vert \rho_j \right\Vert_{ \Psi_1 }^2 } \right\} ,   & \text{if} ~~ 0 < \epsilon - 2 \cdot \mathbb{ E } \left[ \rho_j \right] \leqslant  2 \cdot \left\Vert \rho_j \right\Vert_{ \Psi_1 },
    \end{array}\right.
\end{align}
$\epsilon$ may be expressed as a function of $\varpi$,
\begin{align}
  \epsilon= &
    \begin{cases}
      2 \mathbb{ E } \left[ \rho_j \right] + \left\Vert \rho_j \right\Vert_{ \Psi_1 } \cdot \log \sqrt[c] { \left( 2 / \varpi \right) } , & \,\mathrm{if} ~~ \epsilon - 2 \cdot \mathbb{ E } \left[ \rho_j \right] >  2 \cdot \left\Vert \rho_j \right\Vert_{ \Psi_1 } > 0 \\
      2 \mathbb{ E } \left[ \rho_j \right] + \left\Vert \rho_j \right\Vert_{ \Psi_1 } \cdot \left( 2 \cdot \log \sqrt[c] { \left( 2 / \varpi \right) } \right)^{ \frac{ 1 }{ 2 } } , & \mathrm{if} ~~ 0 < \epsilon - 2 \cdot \mathbb{ E } \left[ \rho_j \right] \leqslant  2 \cdot \left\Vert \rho_j \right\Vert_{ \Psi_1 }
    \end{cases}
\end{align}
Hence, $ \forall b \in \Lambda_l $, the following inequality holds with probability at least $ 1 - \varpi \in \left( 0 , 1 \right]$
\begin{align}
  \mathcal{ R }_{ n_s } \left( b, Y_s^q, X_s^q \right) \leqslant \mathcal{ R }_{ n_t } \left( b, Y_t^q, X_t^q \right)+ 2 \cdot \mathrm{ RC }_{ n/K } \left( \Lambda_l \right) +  \left\Vert \rho_j \right\Vert_{ \Psi_1 } \cdot \log \sqrt[c] { \left( 2 / \varpi \right) } , \notag \\
  \mathrm{if} ~~ 2 \exp \left\{ -2 c \right\} > \varpi  >  0 \\
  \mathcal{ R }_{ n_s } \left( b, Y_s^q, X_s^q \right) \leqslant \mathcal{ R }_{ n_t } \left( b, Y_t^q, X_t^q \right) + 2 \cdot \mathrm{ RC }_{ n/K } \left( \Lambda_l \right) + \left\Vert \rho_j \right\Vert_{ \Psi_1 } \cdot \left( 2 \cdot \log \sqrt[c] { \left( 2 / \varpi \right) } \right)^{ \frac{ 1 }{ 2 } } , \notag \\
  \mathrm{if} ~~ 2 \exp \left\{ -2 c \right\} \leqslant \varpi \leqslant 1
\end{align}
\smallskip
\end{proof}
\bigskip

%
\begin{proof}
\textbf{Lemma}~\ref{lem:Cheby_ineq}

The Chebyshev inequality shows that
\begin{equation}
  \mathrm{ Pr } \left( \vert \overline{ W } - \mathbb{ E } \left( W \right) \vert \leqslant \epsilon \right)
  \geqslant 1 - \frac{ \mathrm{ var } \left( \overline{ W } \right) }{ \epsilon^2 }.
\end{equation}
The variance of $\overline{ W }$ may be expressed as
\begin{eqnarray}
  \mathrm{ var } \left( \overline{ W } \right) & = & \mathbb{ E } \left( \overline{ W }^2 \right) - \mathbb{ E } \left( \overline{ W } \right)^2 \\
  & = & \frac{ 1 }{ n^2 } \, \mathbb{ E } \left[ \left( \sum_{ i = 1 }^{ n } W_i \right)^2 \right] - \mathbb{ E } \left( W \right)^2 \\
  & = & \frac{ 1 }{ n^2 } \sum_{ j = 1 }^{ n } \sum_{ i = 1 }^{ n } \mathbb{ E } \left[ W_i \cdot W_j \right] - \mathbb{E} \left( W \right)^2.
\end{eqnarray}
If we define the covariance $\gamma_{ i - j } := \mathrm{ cov } \left( W_i,\; W_j \right)$ and the variance as $ \gamma_0 := \mathrm{ var } \left( W \right)$, this may be expressed as

\begin{eqnarray}
  \frac{ 1 }{ n^2 } \sum_{ j = 1 }^{ n } \sum_{ i = 1 }^{ n } \mathbb{ E } \left[ W_i \cdot W_j \right] - \mathbb{ E } \left( W \right)^2
  & = & \frac{ 1 }{ n^2 } \sum_{ j = 1 }^{ n } \sum_{ i = 1 }^{ n } \gamma_{ i - j } + \mathbb{E} \left( W \right)^2 - \mathbb{E} \left( W \right)^2 \\
  & = & \frac{ 1 }{ n^2 } \sum_{ j = 1 }^{ n } \; \sum_{ l = 1 - j }^{ n - j } \gamma_{ l }\\
  & = & \frac{ \gamma_0 } { n } + \frac{ 2 }{ n^2 } \sum_{ j = 1 }^{ n - 1 }
        \sum_{ l = 1 }^{ n - j } \gamma_{ l } \\
  & \leqslant & \frac{ \gamma_0 }{ n } + \frac{ 2 }{ n^2 } \sum_{ j = 1 }^{ n - 1 }
    \sum_{ l = 1 }^{ n - j }\left\vert \gamma_{ l } \right\vert .
\end{eqnarray}
Define $\gamma_0 \cdot V_n \left[ W \right] := \sum_{ l = 1 }^{ n - 1 } \left\vert \gamma_l \right\vert $, then
\begin{eqnarray}
  \frac{ \gamma_0 }{ n } + \frac{ 2 }{ n^2 } \sum_{ j = 1 }^{ n } \sum_{ l = 1 }^{ n - j } \left\vert \gamma_{ l } \right\vert
    & \leqslant & \frac{ \gamma_0 }{ n } + \frac{ 2 V_n \left[ W \right] \cdot \left( n - 1 \right) \cdot \gamma_0 }{ n^2 } \\
  & \leqslant & \frac{ \gamma_0 }{ n } \cdot \left( 1 + 2V_n \left[ W \right] \right).
\end{eqnarray}
Hence the Chebyshev inequality may be generalized to a stationery stochastic process as
\begin{equation}
  \mathrm{ Pr } \left( \left\vert \overline{ W } - \mathbb{E} \left( W \right) \right\vert \leqslant \epsilon \right)
  \geqslant 1 - \frac{ \gamma_0 }{ \epsilon^2 n } \cdot \left( 1 + 2 V_n \left[ W \right] \right).
  \label{eqn:semi_cheby}
\end{equation}
\end{proof}
\bigskip


\begin{proof}
\textbf{Theorem}~\ref{thm:convoluted_VC_bound}

We are going to prove Theorem~\ref{thm:convoluted_VC_bound} for two cases

\noindent
1. $ \left\Vert \rho_j \right\Vert_{ \Psi_2 } \leqslant \infty $.

Since we have already defined
\begin{align}
  T_q  & = U_q - \mathbb{ E } \left[ U_q \right] \notag \\
  & = \sup_{ b \in \Lambda_l } \; \left\vert \mathcal{ R }_{ n_s } \left( b, Y_s^q, X_s^q \right) - \mathcal{ R }_{ n_t } \left( b, Y_t^q, X_t^q \right) \right\vert \notag \\
  & \phantom{ = } - \mathbb{ E } \left[ \sup_{ b \in \Lambda_l } \; \left\vert \mathcal{ R }_{ n_s } \left( b, Y_s^q, X_s^q \right) - \mathcal{ R }_{ n_t } \left( b, Y_t^q, X_t^q \right) \right\vert \right],
\end{align}
Lemma~\ref{lem:Cheby_ineq} implies that
\begin{align}
  \mathrm{Pr} \left\{ \frac{1}{K} \sum^{K}_{q=1} T_q \geqslant \varsigma \right\}
  \leqslant
  \frac{ \gamma_0 \left[ T_q \right] }{ \varsigma^2 K} \cdot \left( 1 + 2 V_K \left[ T_q \right] \right)
\end{align}
Based on Theorem~\ref{thm:one_round_VC_bound}, we let $\varsigma = 2 \cdot B \cdot \sqrt{ \theta \cdot \frac{ \log \left( 1/\varpi \right) } { n / K } }$ when $\left\Vert \rho_j \right\Vert_{\Psi_2}$ is well defined. The definition of $\gamma_0$ implies that
\begin{align}
  \gamma_0 \left[ T_q \right]
  & = \mathbb{E} \left[ \left( U_q \right)^2 \right] - \left( \mathbb{E} \left[ U_q \right] \right)^2 \notag \\
  & = \mathbb{E} \left[ ~ \left( \sup_{ b \in \Lambda_l} \; \left\vert
  \mathcal{R}_{n_s} \left( b, Y_s^q, X_s^q\right) -
  \mathcal{R}_{n_t} \left( b, Y_t^q, X_t^q \right) \right\vert ~ \right)^2 \right]
  \label{thm2_key_step_1} \\
  & \phantom{=} ~ - \left( \mathbb{E} \left[\sup_{ b \in \Lambda_l} \; \left\vert
  \mathcal{R}_{n_s}\left( b, Y_s^q, X_s^q\right) -
  \mathcal{R}_{n_t} \left( b, Y_t^q, X_t^q \right) \right\vert \right] \right)^2.
  \notag
\end{align}
Since, $\forall x \in \mathbb{R}$, $\left[ \sup \left\vert x \right\vert \right]^2 = \left[ \sup \left\vert x \right\vert^2 \right] = \left[ \sup \left( x \right)^2 \right]$, the following derivation holds for the first right-hand side term of eq.~(\ref{thm2_key_step_1}).
\begin{align}
  & ~ \phantom{=} \mathbb{E} \left[ \left( ~ \sup_{ b \in \Lambda_l} \; \left\vert
    \mathcal{R}_{n_s} \left( b, Y_s^q, X_s^q \right) -
    \mathcal{R}_{n_t} \left( b, Y_t^q, X_t^q \right) \right\vert ~ \right)^2 \right] \notag \\
  & = \mathbb{E} \left[ \left( ~ \sup_{ b \in \Lambda_l} \; \left\vert
    \mathcal{R}_{n_s} \left( b, Y_s^q, X_s^q \right) -
    \mathcal{R} \left( b, Y, X \right) +
    \mathcal{R} \left( b, Y, X \right) -
    \mathcal{R}_{n_t} \left( b, Y_t^q, X_t^q \right) \right\vert ~ \right)^2 \right] \\
  & \leqslant \mathbb{E} \left[ \left( ~ \sup_{ b \in \Lambda_l} \; \left\vert
    \mathcal{R}_{n_s} \left( b, Y_s^q, X_s^q \right) -
    \mathcal{R} \left( b, Y, X \right) \right\vert +
    \sup_{ b \in \Lambda_l} \left\vert  \mathcal{R} \left( b, Y, X \right) -
    \mathcal{R}_{n_t} \left( b, Y_t^q, X_t^q \right)\right\vert ~ \right)^2 \right] \\
  & = \mathbb{E} \left[ \left( ~ \sup_{ b \in \Lambda_l} \; \left\vert
    \mathcal{R}_{n_s} \left( b, Y_s^q, X_s^q \right) -
    \mathcal{R} \left( b, Y, X \right)\right\vert ~ \right)^2 \right] +
    \mathbb{E} \left[ \left( ~ \sup_{ b \in \Lambda_l} \left\vert \mathcal{R} \left( b, Y, X \right) - \mathcal{R}_{n_t} \left( b, Y_t^q, X_t^q \right) \right\vert ~ \right)^2 \right]
    \label{thm2_key_step_2} \\
  & \phantom{=} ~ + 2 \cdot \mathbb{E} \left[\sup_{ b \in \Lambda_l} \; \left\vert
    \mathcal{R}_{n_s} \left( b, Y_s^q, X_s^q \right) -
    \mathcal{R} \left( b, Y, X \right) \right\vert \cdot
    \sup_{ b \in \Lambda_l} \left\vert \mathcal{R} \left( b, Y, X \right) -
    \mathcal{R}_{n_t} \left( b, Y_t^q, X_t^q \right) \right\vert \right]
    \notag
\end{align}
Consider first the third right-hand side term of eq.~(\ref{thm2_key_step_2}). Denoting
\[
  \sup_{ b \in \Lambda_l} \; \left\vert \mathcal{R}_{n_s} \left( b, Y_s^q, X_s^q \right) - \mathcal{R} \left( b, Y, X \right) \right\vert \mbox{ as } f_1
\]
and
\[
  \sup_{ b \in \Lambda_l} \;  \left\vert \mathcal{R}_{n_t} \left( b, Y_s^q, X_s^q \right) - \mathcal{R} \left( b, Y, X \right) \right\vert \mbox{ as } f_2,
\]
the Holder inequality implies $ \left\Vert \langle f_1, f_2 \rangle \right\Vert_1 \leqslant \left\Vert f_1 \right\Vert_2 \cdot \left\Vert f_2 \right\Vert_2$, i.e.,
\begin{align}
  \phantom{\leqslant} & \mathbb{E} \left[\sup_{ b \in \Lambda_l}
  \left\vert \mathcal{R}_{n_s} \left( b, Y_s^q, X_s^q \right) -
  \mathcal{R} \left( b, Y, X \right)\right\vert \cdot
  \sup_{ b \in \Lambda_l} \left\vert \mathcal{R} \left( b, Y, X \right) -
  \mathcal{R}_{n_t} \left( b, Y_t^q, X_t^q \right)\right\vert \right] \notag \\
  \leqslant & \left( \mathbb{E} \left[ \left( ~ \sup_{ b \in \Lambda_l}
  \left\vert \mathcal{R}_{n_s} \left( b, Y_s^q, X_s^q \right) -
  \mathcal{R} \left( b, Y, X \right) \right\vert ~ \right)^2 \right] \right)^{1/2}
  \left( \mathbb{E} \left[ \left( ~ \sup_{ b \in \Lambda_l} \left\vert \mathcal{R} \left( b, Y, X \right) - \mathcal{R}_{n_t} \left( b, Y_t^q, X_t^q \right) \right\vert ~ \right)^2 \right] \right)^{1/2}
  \label{thm2_key_step_3}
\end{align}
Since, $\forall x \in \mathbb{R}$, $\left[ \sup \left\vert x \right\vert \right]^2 = \left[ \sup \left\vert x \right\vert^2 \right] = \left[ \sup \left( x \right)^2 \right]$,
\begin{align}
  \phantom{\leqslant} & ~ \left( \mathbb{E} \left[ \left( ~ \sup_{ b \in \Lambda_l} \;
  \left\vert\mathcal{R}_{n_s} \left( b, Y_s^q, X_s^q \right) -
  \mathcal{R} \left( b, Y, X \right) \right\vert \right)^2 \right] \right)^{1/2}
  \left( \mathbb{E} \left[ \left( ~ \sup_{ b \in \Lambda_l} \left\vert \mathcal{R} \left( b, Y, X \right) - \mathcal{R}_{n_t} \left( b, Y_t^q, X_t^q \right) \right\vert \right)^2 \right] \right)^{1/2} \notag \\
  = & ~ \left(\mathbb{E} \left[\sup_{ b \in \Lambda_l} \;
  \left(\mathcal{R}_{n_s} \left( b, Y_s^q, X_s^q \right) -
  \mathcal{R} \left( b, Y, X \right)\right)^2 \right] \right)^{1/2}
  \left( \mathbb{E} \left[\sup_{ b \in \Lambda_l} \left( \mathcal{R} \left( b, Y, X \right) - \mathcal{R}_{n_t} \left( b, Y_t^q, X_t^q \right)\right)^2 \right] \right)^{1/2}
  \label{thm2_key_step_4}
\end{align}
Based on \textbf{A5}, we can interchange the order of $\mathbb{ E } \left[ \cdot \right] $ and $\sup_{ b \in \Lambda_l } \left\{ \cdot \right\}$ in eq.~(\ref{thm2_key_step_4}), implying that
\begin{align}
  \phantom{\leqslant} & ~ \left( \mathbb{E} \left[ \sup_{ b \in \Lambda_l} \;
  \left( \mathcal{R}_{n_s} \left( b, Y_s^q, X_s^q \right) -
  \mathcal{R} \left( b, Y, X \right) \right)^2 \right] \right)^{1/2}
  \left( \mathbb{E} \left[ \sup_{ b \in \Lambda_l} \left( \mathcal{R} \left( b, Y, X \right) - \mathcal{R}_{n_t} \left( b, Y_t^q, X_t^q \right) \right)^2 \right] \right)^{1/2} \notag \\
  \leqslant & ~ \theta \cdot \left( \sup_{ b \in \Lambda_l} \left\{\mathbb{E}
  \left[ \left( \mathcal{R}_{n_s} \left( b, Y_s^q, X_s^q \right) -
  \mathcal{R} \left( b, Y, X \right) \right)^2 \right] \right\} \cdot
  \sup_{ b \in \Lambda_l} \left\{ \mathbb{E} \left[ \left( \mathcal{R}_{n_t} \left( b, Y_s^q, X_s^q \right) -
  \mathcal{R} \left( b, Y, X \right) \right)^2 \right] \right\} \right)^{1/2} \\
  = & ~ \theta \cdot \left( \sup_{ b \in \Lambda_l} \left\{\mathrm{var}
  \left[\mathcal{R}_{n_s} \left( b, Y_s^q, X_s^q \right) -
  \mathcal{R} \left( b, Y, X \right)\right] \right\} \cdot
  \sup_{ b \in \Lambda_l} \left\{\mathrm{var} \left[\mathcal{R}_{n_t} \left( b, Y_s^q, X_s^q \right) -
  \mathcal{R} \left( b, Y, X \right)\right] \right\} \right)^{1/2}
  \label{final_key_step_thm_2}
\end{align}
Since, given any $b \in \Lambda_l$, $\mathrm{var} \left[ \mathcal{R}_{n_s} \left( b, Y_s^q, X_s^q \right) - \mathcal{R} \left( b, Y, X \right) \right]$ does not increase when sample size increases, eq.~(\ref{final_key_step_thm_2}) implies that
\begin{align}
  \phantom{=} & ~ \theta \cdot \left( \sup_{ b \in \Lambda_l} \left\{\mathrm{var}
  \left[\mathcal{R}_{n_s} \left( b, Y_s^q, X_s^q \right) -
  \mathcal{R} \left( b, Y, X \right)\right] \right\} \cdot
  \sup_{ b \in \Lambda_l} \left\{\mathrm{var} \left[\mathcal{R}_{n_t} \left( b, Y_s^q, X_s^q \right) -
  \mathcal{R} \left( b, Y, X \right)\right] \right\} \right)^{1/2} \\
  \leqslant & ~ \theta \cdot \sup_{ b \in \Lambda_l} \left\{\mathrm{var}
  \left[\mathcal{R}_{n_s} \left( b, Y_s^q, X_s^q \right) -
  \mathcal{R} \left( b, Y, X \right)\right] \right\} \\
  = & ~ \frac{\theta}{ n/K } \cdot
  \sup_{ b \in \Lambda_l} \left\{ \mathrm{var} \left[ Q \left( b, y^{i,j}, \mathbf{x}^{i,j} \right) - \mathcal{R} \left( b, Y, X \right) \right] \right\}
  \label{final_key_step_thm_5}
\end{align}
Similarly, $\mathbb{E} \left[ \cdot \right]$ and $\sup_{ b \in \Lambda_l} \left\{ \right\}$ in the first two right-hand side terms of eq.~(\ref{thm2_key_step_2}) can also be interchanged, implying
\begin{align}
  \phantom{\leqslant} & \mathbb{E} \left[ \left( ~ \sup_{ b \in \Lambda_l}
  \left\vert \mathcal{R}_{n_s} \left( b, Y_s^q, X_s^q \right) -
  \mathcal{R} \left( b, Y, X \right) \right\vert \right)^2 \right] +
  \mathbb{E} \left[ \left( ~ \sup_{ b \in \Lambda_l} \left\vert \mathcal{R} \left( b, Y, X \right) - \mathcal{R}_{n_t} \left( b, Y_t^q, X_t^q \right)\right\vert \right)^2 \right] \notag \\
  \leqslant & ~ \theta \cdot \sup_{ b \in \Lambda_l}
  \left\{ \mathrm{var} \left[\mathcal{R}_{n_s} \left( b, Y_s^q, X_s^q \right) -
  \mathcal{R} \left( b, Y, X \right)\right] \right\} +
  \theta \cdot \sup_{ b \in \Lambda_l} \left\{ \mathrm{var} \left[ \mathcal{R} \left( b, Y, X \right) -
  \mathcal{R}_{n_t} \left( b, Y_t^q, X_t^q \right)\right] \right\}
\end{align}
As a result, the following relation holds for the first right-hand side term of eq.~(\ref{thm2_key_step_1})
\begin{align}
  \phantom{\leqslant} & ~ \mathbb{E} \left[ \left( ~ \sup_{ b \in \Lambda_l}
  \left\vert\mathcal{R}_{n_s} \left( b, Y_s^q, X_s^q \right) -
  \mathcal{R}_{n_t} \left( b, Y_t^q, X_t^q \right) \right\vert ~ \right)^2 \right] \notag \\
  \leqslant & ~ \theta \cdot \sup_{ b \in \Lambda_l}
  \left\{ \mathrm{var} \left[\mathcal{R}_{n_s} \left( b, Y_s^q, X_s^q \right) -
  \mathcal{R} \left( b, Y, X \right)\right] \right\} +
  \theta \cdot \sup_{ b \in \Lambda_l} \left\{ \mathrm{var} \left[ \mathcal{R} \left( b, Y, X \right) - \mathcal{R}_{n_t} \left( b, Y_t^q, X_t^q \right)\right] \right\}  \\
  \phantom{\leqslant} & ~ + \frac{2 \cdot \theta}{ n/K } \cdot \sup_{ b \in \Lambda_l}
  \left\{ \mathrm{var} \left[ Q \left( b, y^{i,j}, \mathbf{x}^{i,j} \right) -
  \mathcal{R} \left( b, Y, X \right) \right] \right\} \notag \\
  = & ~ \theta \cdot \left( \frac{1}{n/K} + \frac{K}{n(K-1)} \right) \sup_{ b \in \Lambda_l}
  \left\{\mathrm{var} \left[ Q \left( b, y^{i,j}, \mathbf{x}^{i,j} \right) -
  \mathcal{R} \left( b, Y, X \right)\right] \right\} \label{key_result_thm2_inline3} \\
  \phantom{\leqslant} & ~ + \frac{2 \cdot \theta}{ n/K } \cdot \sup_{ b \in \Lambda_l}
  \left\{ \mathrm{var} \left[ Q \left( b, y^{i,j}, \mathbf{x}^{i,j} \right) -
  \mathcal{R} \left( b, Y, X \right) \right] \right\} \notag
\end{align}
Now focus on the second right-hand side term of eq.~(\ref{thm2_key_step_1}). Jensen inequality implies
\begin{equation}
  \mathbb{E} \left[\sup_{ b \in \Lambda_l}
  \left\vert\mathcal{R}_{n_s} \left( b, Y_s^q, X_s^q \right) -
  \mathcal{R}_{n_t} \left( b, Y_t^q, X_t^q \right)\right\vert  \right]
  \geqslant \sup_{ b \in \Lambda_l}
  \left\{ \mathbb{E} \left[  \left\vert \mathcal{R}_{n_s} \left( b, Y_s^q, X_s^q \right) - \mathcal{R}_{n_t} \left( b, Y_t^q, X_t^q \right)\right\vert \right] \right\}
\end{equation}
which further implies
\begin{align}
  \phantom{=} & ~ - \left( \mathbb{E} \left[\sup_{ b \in \Lambda_l}
  \left\vert\mathcal{R}_{n_s} \left( b, Y_s^q, X_s^q \right) -
  \mathcal{R}_{n_t} \left( b, Y_t^q, X_t^q \right)\right\vert  \right] \right)^2 \\
  \leqslant & ~ - \left( \sup_{ b \in \Lambda_l}
  \left\{ \mathbb{E} \left[  \left\vert \mathcal{R}_{n_s} \left( b, Y_s^q, X_s^q \right) - \mathcal{R}_{n_t} \left( b, Y_t^q, X_t^q \right)\right\vert \right] \right\} \right)^2 \notag
\end{align}
Since the $L^2$ norm is less than or equal to the $L^1$ norm,
\begin{align}
  \phantom{=} & ~ - \left( \sup_{ b \in \Lambda_l}
  \left\{ \mathbb{E} \left[ \left\vert\mathcal{R}_{n_s} \left( b, Y_s^q, X_s^q \right) -
  \mathcal{R}_{n_t} \left( b, Y_t^q, X_t^q \right)\right\vert \right]  \right\} \right)^2 \notag \\
  \leqslant & ~ - \left( \sup_{ b \in \Lambda_l}
  \left\{ \mathbb{E} \left[ \left( ~ \mathcal{R}_{n_s} \left( b, Y_s^q, X_s^q \right) -
  \mathcal{R}_{n_t} \left( b, Y_t^q, X_t^q \right) ~ \right)^2 \right] \right\}^{1/2} \right)^2 \\
  = & ~ - \sup_{ b \in \Lambda_l}
  \left\{ \mathbb{E} \left[ \left(\mathcal{R}_{n_s} \left( b, Y_s^q, X_s^q \right) -
  \mathcal{R}_{n_t} \left( b, Y_t^q, X_t^q \right)\right)^2 \right] \right\} \\
  = & ~ - \sup_{ b \in \Lambda_l}
  \left\{ \mathrm{var} \left(\mathcal{R}_{n_s} \left( b, Y_s^q, X_s^q \right) -
  \mathcal{R}_{n_t} \left( b, Y_t^q, X_t^q \right)\right) \right\} \\
  = & ~ - \sup_{ b \in \Lambda_l}
  \left\{ \mathrm{var} \left(\mathcal{R}_{n_s} \left( b, Y_s^q, X_s^q \right) -
  \mathcal{R} \left( b, Y, X \right) +
  \mathcal{R} \left( b, Y, X \right) -
  \mathcal{R}_{n_t} \left( b, Y_t^q, X_t^q \right)\right) \right\}
\end{align}
Since the data are i.i.d., $\mathcal{R}_{n_s} \left( b, Y_s^q, X_s^q \right)$ is independent from $\mathcal{R}_{n_t} \left( b, Y_s^q, X_s^q \right)$, implying that
\begin{align}
  \phantom{=} & ~ - \sup_{ b \in \Lambda_l}
  \left\{ \mathrm{var} \left(\mathcal{R}_{n_s} \left( b, Y_s^q, X_s^q \right) -
  \mathcal{R} \left( b, Y, X \right) +
  \mathcal{R} \left( b, Y, X \right) -
  \mathcal{R}_{n_t} \left( b, Y_t^q, X_t^q \right)\right) \right\} \notag \\
  = & ~ - \sup_{ b \in \Lambda_l}
  \left\{ \mathrm{var} \left(\mathcal{R}_{n_s} \left( b, Y_s^q, X_s^q \right) -
  \mathcal{R} \left( b, Y, X \right)\right) +
    \mathrm{var} \left(\mathcal{R} \left( b, Y, X \right) -
    \mathcal{R}_{n_t} \left( b, Y_t^q, X_t^q \right)\right) \right\} \\
  = & ~ - \left( \frac{1}{n/K} + \frac{K}{n(K-1)} \right)\sup_{ b \in \Lambda_l}
  \left\{ \mathrm{var} \left(Q \left( b, y^{i,j}, \mathbf{x}^{i,j} \right) -
  \mathcal{R} \left( b, Y, X \right)\right) \right\}
  \label{key_result_thm2_inline2}
\end{align}
As a result, eq.~(\ref{key_result_thm2_inline2}) and~(\ref{key_result_thm2_inline3}) imply that
\begin{align}
  \gamma_0 \left[ T_q \right] & \leqslant \left( \theta - 1 \right) \cdot
  \left( \frac{1}{n/K} + \frac{K}{n(K-1)} \right) \sup_{ b \in \Lambda_l}
  \left\{ \mathrm{var} \left[ Q \left( b, y^{i,j}, \mathbf{x}^{i,j} \right) -
  \mathcal{R} \left( b, Y, X \right)\right] \right\} \label{key_result_thm2_inline} \\
  & \phantom{\leqslant} ~ + \frac{2 \cdot \theta}{ n/K } \cdot
  \sup_{ b \in \Lambda_l} \left\{ \mathrm{var} \left[ Q \left( b, y^{i,j}, \mathbf{x}^{i,j} \right) - \mathcal{R} \left( b, Y, X \right) \right] \right\}
  \notag
\end{align}

\noindent
Based on the definition of $\varsigma$ in Theorem~\ref{thm:one_round_VC_bound} and eq.~(\ref{key_result_thm2_inline}),
\begin{align}
  \phantom{=} & ~ \frac{ \gamma_0 \left[ T_q \right] }{ \varsigma^2 K} \cdot \left( 1 + 2 V_K \left[ T_q \right] \right) \notag \\
  \leqslant & ~ \frac{ \left(\theta - 1 \right) \cdot \left( ~ 1 / \left( n/K \right) + K / \left[ n \cdot \left( K - 1 \right) \right] ~ \right) + 2 \cdot \theta / \left( n / K \right) }
  { \left( 4 \cdot \theta \cdot K \right) / \left( n/K \right)}  \\
  \phantom{\leqslant} & \cdot \frac{ \sup_{ b \in \Lambda_l} \left\{ \mathrm{var} \left[ Q \left( b, y^{i,j}, \mathbf{x}^{i,j} \right) - \mathcal{R} \left( b, Y, X \right) \right] \right\} \cdot
  \left( 1 + 2 V_K \left[ T_q \right] \right) }
  { \sup_{ b \in \Lambda_l} \left\{ \mathrm{var} \left[ Q \left( b, y^{i,j}, \mathbf{x}^{i,j} \right) - \mathcal{R} \left( b, Y, X \right) \right] \right\} \cdot \log \left( 1 / \varpi \right)} \notag \\
  \leqslant & ~ \frac{ \left(\theta - 1 \right) \cdot \left( ~ 1 + 1 / \left( K - 1 \right) ~ \right) + 2 \cdot \theta }
  { \left( 4 \cdot \theta \cdot K \right) \cdot  \log \left( 1 / \varpi \right)} \cdot \left( 1 + 2 V_K \left[ T_q \right] \right)
\end{align}
Since $1 + 1 / \left( K - 1 \right)$ decreases as $K$ increaes and $K \geqslant 2$,
\begin{align}
  \frac{ \gamma_0 \left[ T_q \right] }{ \varsigma^2 K} \cdot \left( 1 + 2 V_K \left[ T_q \right] \right)
  \leqslant & ~ \frac{ \left(\theta - 1 \right) \cdot \left( ~ 1 + 1 / \left( K - 1 \right) ~ \right) + 2 \cdot \theta }
  { \left( 4 \cdot \theta \cdot K \right) \cdot  \log \left( 1 / \varpi \right)} \cdot \left( 1 + 2 V_K \left[ T_q \right] \right) \notag \\
  \leqslant & ~ \frac{ \left[ 2 \cdot \left(\theta - 1 \right) + 2 \cdot \theta \right] }
  { \left( 4 \cdot \theta \cdot K \right) \cdot  \log \left( 1 / \varpi \right)} \cdot \left( 1 + 2 V_K \left[ T_q \right] \right) \\
  \leqslant & ~ \frac{ \left(\theta - 1 \right)/\theta + 1  }
  { 2 \cdot K \cdot  \log \left( 1 / \varpi \right)} \cdot \left( 1 + 2 V_K \left[ T_q \right] \right)
\end{align}
\noindent
Since $\mathbb{E} \left[ U_q \right] = \mathbb{E} \left[ \sup_{b \in \Lambda_l} \left\vert \mathcal{R}_{n_s} \left( b, Y_s^q, X_s^q \right) - \mathcal{R}_{n_t} \left( b, Y_t^q, X_t^q \right)\right\vert \right] \leqslant 2 \cdot \mathrm{RC} \left( \Lambda_l, n, K \right)$, $\forall b \in \Lambda_l$ the following holds,
\begin{align}
  \phantom{=} & \mathrm{Pr} \left\{ \frac{1}{K} \sum^{K}_{q=1} T_q \leqslant \varsigma \right\} \notag \\
  \geqslant & ~ \mathrm{Pr}\left\{ \frac{1}{K} \sum^{K}_{q = 1}
    \mathcal{R}_{n_s} \left( b, Y_s^q, X_s^q \right)
    \leqslant \frac{ 1 } { K } \sum^{ K }_{ q = 1 }\mathcal{R}_{n_t} \left( b, Y_t^q, X_t^q \right) +
    \mathbb{E} \left[ U_q \right] +
    \varsigma \right\} \\
  \geqslant & ~ \mathrm{Pr}\left\{ \frac{1}{K} \sum^{K}_{q = 1}
    \mathcal{R}_{n_s} \left( b, Y_s^q, X_s^q \right)
    \leqslant \frac{ 1 } { K } \sum^{ K }_{ q = 1 }\mathcal{R}_{n_t} \left( b, Y_t^q, X_t^q \right) +
    2 \cdot \mathrm{RC} \left( \Lambda_l, n, K \right) +
    \varsigma \right\} \\
  \geqslant & \left( 1 - \frac{ \left(\theta - 1 \right)/\theta + 1  }
    { \left( 2 \cdot K \right) \cdot  \log \left( 1 / \varpi \right)}
    \cdot \left( 1 + 2 V_K \left[ T_q \right] \right) \right)^+.
\end{align}
This completes the proof 1.
\bigskip


\noindent
2. $ \left\Vert \rho_j \right\Vert_{\Psi_1} \leqslant \infty $.

We assume $ \left\Vert \rho_j \right\Vert_{\Psi_1}$ and $\mathrm{var} \left[ T_q \right]$ are well defined. If $\varpi \in \left[ 2 \exp \left\{ -2c \right\}, 1 \right]$, based on the definition of $\varsigma$ and the fact that
\begin{align}
  \frac{ \gamma_0 \left[ T_q \right] }{ \varsigma^2 K} \cdot \left( 1 + 2 V_K \left[ T_q \right] \right)
  = & ~
  \frac{ 1 + 2 V_K \left[ T_q \right] } { K \cdot 2 \cdot \log \sqrt[ c ] { 2 / \varpi } } \cdot
    \frac{\gamma_0 \left[ T_q \right]}{\left\Vert \rho_j \right\Vert_{\Psi_1}^2}
\label{eq:p72}
\end{align}
\noindent
Since $\gamma_{0} \left[ T_q \right] / \left\Vert \rho_j \right\Vert_{\Psi_1}^2 \leqslant 8 \cdot \left[ \frac { \theta - 1 } { \theta } + 1 \right]$,\footnote{See Lemma~\ref{lemma:a1}, which is presented after this proof, for detail.}
\begin{align}
  \frac{ \gamma_0 \left[ T_q \right] }{ \varsigma^2 K} \cdot \left( 1 + 2 V_K \left[ T_q \right] \right)
  \leqslant & ~
  \frac{ 1 + 2 V_K \left[ T_q \right] } { K \cdot 2 \cdot \log \sqrt[ c ] { 2 / \varpi } }
  \cdot 8 \cdot \left[ \frac { \theta - 1 } { \theta } + 1 \right] \\
  \leqslant & ~
  \frac{ 4 \cdot \left[ \frac { \theta - 1 } { \theta } + 1 \right] }
  { K \cdot \log \sqrt[ c ] { 2 / \varpi } }
  \cdot \left( 1 + 2 V_K \left[ T_q \right] \right) .
\end{align}
Likewise, if $\varpi \in \left(0 , 2 \exp \left\{ -2c \right\} \right)$,
\begin{align}
  \frac{ \gamma_0 \left[ T_q \right] }{ \varsigma^2 K} \cdot \left( 1 + 2 V_K \left[ T_q \right] \right)
  \leqslant & ~
  \frac{ 8 \cdot \left[ \frac { \theta - 1 } { \theta } + 1 \right] }
  { K \cdot \left( \log \sqrt[ c ] { 2 / \varpi } \right)^2 }
  \cdot \left( 1 + 2 V_K \left[ T_q \right] \right) .
\end{align}
\noindent
As a result, if we set
\begin{equation}
    \kappa =
      \left\{
        \begin{array}{ll}
          \frac{ 4 \cdot \left[ \frac { \theta - 1 } { \theta } + 1 \right] }
          { K \cdot \log \sqrt[ c ] { 2 / \varpi } }
          \cdot \left( 1 + 2 V_K \left[ T_q \right] \right) ,
          & \mbox{ if } \varpi \in \left[ 2 \exp \left\{ -2c \right\}, 1 \right]. \\
          \frac{ 8 \cdot \left[ \frac { \theta - 1 } { \theta } + 1 \right] }
          { K \cdot \left( \log \sqrt[ c ] { 2 / \varpi } \right)^2 }
          \cdot \left( 1 + 2 V_K \left[ T_q \right] \right) ,
          & \mbox{ if } \varpi \in \left(0 , 2 \exp \left\{ -2c \right\} \right).
        \end{array}\right.
\end{equation}
\begin{align}
  \mathrm{Pr} \left\{ \frac{ 1 }{ K } \sum^{ K }_{ q = 1 } \mathcal{R}_{n_s} \left( b, Y_s^q, X_s^q \right)
     \leqslant \frac{ 1 } { K } \sum^{ K }_{ q = 1 }
    \mathcal{R}_{n_t} \left( b, Y_t^q, X_t^q \right) + 2 \cdot \mathrm{RC} \left( \Lambda_l, n, K \right)
    + \varsigma \right\} \notag \\
    \geqslant \left( 1 - \kappa \right)^+,
\end{align}
which completes the proof for 2.
\end{proof}

\begin{lemma}
  Denote $\gamma_0 \left[ \cdot \right]$ as the variance operator for some random variable and $\left\Vert \cdot \right\Vert_{\Psi_1}$ as the Orlicz-$\Psi_1$ norm for some random variable. Under \textbf{A1} to \textbf{A5}, the following statement holds:
  \begin{equation}
  \frac{ \gamma_0 \left[ T_q \right] } { \left\Vert \rho_j \right\Vert _{\Psi_1}^2 }
  \leqslant
  8 \cdot \left[ \frac { \theta - 1 } { \theta } + 1 \right]
  \end{equation}
\label{lemma:a1}
\end{lemma}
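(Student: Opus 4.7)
The plan is to combine the variance bound for $T_q$ already established inside the proof of Theorem~\ref{thm:convoluted_VC_bound} (Part~1) with the standard Orlicz-$\Psi_1$ moment inequality that converts $\|\rho_j\|_{\Psi_1}$ into a bound on $\mathbb{E}[\rho_j^2]$. All of the heavy supremum-of-empirical-process work has already been carried out in the earlier proofs, so this lemma reduces to identifying constants.

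First, I would reuse the intermediate inequality~(\ref{key_result_thm2_inline}) established earlier, namely
\[
  \gamma_0[T_q] \leq \left[(\theta-1)\left(\tfrac{1}{n/K} + \tfrac{K}{n(K-1)}\right) + \tfrac{2\theta}{n/K}\right] \sup_{b\in\Lambda_l} \mathrm{var}\!\left[Q(b,y^{j,i},\mathbf{x}^{j,i})\right].
\]
This expresses $\gamma_0[T_q]$ purely in terms of sample-size factors and the supremum of the loss variances, which is exactly the quantity that can be tied to $\|\rho_j\|_{\Psi_1}$.

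Second, I would relate $\sup_{b} \mathrm{var}[Q]$ to $\mathbb{E}[\rho_j^2]$. Writing $\rho_j^2 = \sup_{b} (\mathcal{R}_{n/K}(b) - \mathcal{R}(b))^2$, invoking the A5-based partial interchange of $\mathbb{E}[\cdot]$ and $\sup_{b}(\cdot)$ used to derive eq.~(\ref{final_key_step_thm_2}), and applying the i.i.d.\ scaling $\mathrm{var}(\mathcal{R}_{n/K}(b)) = \mathrm{var}(Q(b,y,\mathbf{x}))/(n/K)$, I obtain
\[
  \sup_{b\in\Lambda_l} \mathrm{var}\!\left[Q(b,y,\mathbf{x})\right] \leq \frac{n/K}{\theta}\,\mathbb{E}[\rho_j^2].
\]
Third, I would apply the standard Orlicz moment bound: from $\mathbb{E}[\exp(|\rho_j|/\|\rho_j\|_{\Psi_1})] \leq 2$ and a Taylor expansion of the exponential, one extracts the quadratic term to get $\mathbb{E}[\rho_j^2] \leq c_0\,\|\rho_j\|_{\Psi_1}^2$ for a small absolute constant $c_0$. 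Substituting these two estimates into the inequality from step one, the factor $n/K$ cancels against $1/(n/K)$, and the bound $K/(K-1)\leq 2$ (valid for all $K\geq 2$) collapses the $K$-dependent combination into a clean multiple of $[(\theta-1)/\theta + 1]\|\rho_j\|_{\Psi_1}^2$.

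The main obstacle is just constant tracking: the A5 inflation by $\theta$, the Orlicz-$\Psi_1$ moment constant, and the $K/(K-1)\leq 2$ simplification each contribute a multiplicative factor, and the $8$ in the stated bound is a conservative round-number constant that comfortably absorbs each of them while keeping the functional dependence on $\theta$ clean. No new concentration inequalities or supremum-interchange arguments are needed beyond those already invoked in Theorems~\ref{thm:one_round_VC_bound} and~\ref{thm:convoluted_VC_bound}.
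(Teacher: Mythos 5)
Your proposal is correct and follows essentially the same route as the paper's proof: start from the intermediate bound on $\gamma_0\left[T_q\right]$ in eq.~(\ref{key_result_thm2_inline}), use \textbf{A5} to convert $\sup_{b}\mathrm{var}\left[Q\right]$ into $\left(n/K\right)\mathbb{E}\left[\rho_j^2\right]/\theta$, control $\mathbb{E}\left[\rho_j^2\right]$ by a constant multiple of $\left\Vert\rho_j\right\Vert_{\Psi_1}^2$, cancel the $n/K$ factors, and absorb $1+1/(K-1)\leqslant 2$ into the final constant. The only cosmetic difference is that you obtain the moment constant from a Taylor expansion of the $\Psi_1$ condition (giving $\mathbb{E}\left[\rho_j^2\right]\leqslant 2\left\Vert\rho_j\right\Vert_{\Psi_1}^2$), whereas the paper invokes the $\left\Vert\cdot\right\Vert_p\leqslant p!\left\Vert\cdot\right\Vert_{\Psi_1}$ comparison (constant $4$); both satisfy the $c_0\leqslant 4$ needed for the stated factor of $8$.
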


%
%
\begin{proof}
\textbf{Lemma}~\ref{lemma:a1}
Based on eq.~(\ref{key_result_thm2_inline}),
\begin{align}
  \frac{ \gamma_0 \left[ T_q \right] } { \left\Vert \rho_j \right\Vert _{\Psi_1}^2 }
    \leqslant & ~ \frac{1} { \left\Vert \rho_j \right\Vert_{\Psi_1}^2 }
              \cdot \left[ \left( \theta - 1 \right) \cdot
              \left( \frac{1}{n/K} + \frac{K}{n(K-1)} \right) + \frac{2 \cdot \theta}{ n/K }
              \right] \label{eq:proof_a1_2} \\
              & ~ \cdot \sup_{ b \in \Lambda_l}
              \left\{ \mathrm{var} \left[ Q \left( b, y^{i,j}, \mathbf{x}^{i,j} \right) -
              \mathcal{R} \left( b, Y, X \right)\right] \right\} \notag
\end{align}
Using the result that the Lebesgue-$p$ norm is less than or equal to the Orlicz-$\Psi_1$ norm multiplied by $p!$, \citep{vanweak}
\begin{align}
  \frac{ \gamma_0 \left[ T_q \right] } { \left\Vert \rho_j \right\Vert _{\Psi_1}^2 }
  \leqslant & ~ \frac{4} { \left\Vert \rho_j \right\Vert_{2}^2 }
              \cdot \left[ \left( \theta - 1 \right) \cdot
              \left( \frac{1}{n/K} + \frac{K}{n(K-1)} \right) + \frac{2 \cdot \theta}{ n/K }
              \right] \\
            & ~ \cdot \sup_{ b \in \Lambda_l}
              \left\{ \mathrm{var} \left[ Q \left( b, y^{i,j}, \mathbf{x}^{i,j} \right) -
              \mathcal{R} \left( b, Y, X \right)\right] \right\} \notag
  \label{lemma_1_inline_1}
\end{align}
Since $\forall x \in \mathbb{R}$, $\left[ \sup \left\vert x \right\vert \right]^2 = \left[ \sup \left\vert x \right\vert^2 \right] = \left[ \sup \left( x \right)^2 \right]$, the definition of $\rho_j$ implies that
\begin{align}
  \left\Vert \rho_j \right\Vert_{2}^2
  = & ~
  \mathbb{E} \left[ \left( ~ \sup_{ b \in \Lambda_l } \left\vert \frac{ 1 } { n/K } \sum_{ i = 1 }^{ n/K } Q \left( b, y^{ j, i }, \mathrm{ x }^{ j, i } \right) - \mathcal{ R } \left( b, Y, X \right) \right\vert
  \right)^2 ~ \right] \\
  = & ~
  \mathbb{E} \left[ \sup_{ b \in \Lambda_l } \left( ~ \frac{ 1 } { n/K } \sum_{ i = 1 }^{ n/K } Q \left( b, y^{ j, i }, \mathrm{ x }^{ j, i } \right) - \mathcal{ R } \left( b, Y, X \right) \right)^2 ~ \right]
  \label{lemma_1_inline_2}
\end{align}
Based on \textbf{A5}, Eq.~(\ref{lemma_1_inline_2}) implies that
\begin{align}
  \left\Vert \rho_j \right\Vert_{2}^2
  = & ~ \theta \cdot
  \sup_{ b \in \Lambda_l } \left\{ ~ \mathbb{E} \left[ \frac{ 1 } { n/K } \sum_{ i = 1 }^{ n/K } Q \left( b, y^{ j, i }, \mathrm{ x }^{ j, i } \right) - \mathcal{ R } \left( b, Y, X \right) \right]^2 ~\right\}  \\
  = & ~
  \frac{ \theta } { n/K } \cdot \sup_{ b \in \Lambda_l } \left\{ ~ \mathrm{var} \left[ Q \left( b, y^{ j, i }, \mathrm{ x }^{ j, i } \right) - \mathcal{ R } \left( b, Y, X \right) \right] ~\right\}
\end{align}
As a result, alongside with the fact that $K \geqslant 2$, we can simplify eq.~(\ref{lemma_1_inline_1}) as follows,
\begin{align}
  \frac{ \gamma_0 \left[ T_q \right] } { \left\Vert \rho_j \right\Vert _{\Psi_1}^2 }
  \leqslant & ~ \frac{ 4 \cdot \left[ \left( \theta - 1 \right) \cdot
              \left( \frac{ 1 }{ n / K } + \frac{ K }{ n \left( K - 1 \right) } \right)
              + \frac{2 \cdot \theta}{ n / K } \right] }
              { \theta / \left( n / K \right) \cdot \sup_{ b \in \Lambda_l}
              \left\{ \mathrm{var}
              \left[ Q \left( b, y^{i,j}, \mathbf{x}^{i,j} \right) -
              \mathcal{R} \left( b, Y, X \right)\right] \right\}  } \\
            & ~ \cdot \sup_{ b \in \Lambda_l}
              \left\{ \mathrm{var}
              \left[ Q \left( b, y^{i,j}, \mathbf{x}^{i,j} \right) -
              \mathcal{R} \left( b, Y, X \right)\right] \right\} \notag \\
          = & ~ 4 \cdot \left[ \frac { \theta - 1 } { \theta } \cdot
              \left( 1 + \frac{1}{K-1} \right) + 2 \right] \\
  \leqslant & ~ 8 \cdot \left[ \frac { \theta - 1 } { \theta } + 1 \right]
\end{align}
\end{proof}
\bigskip

%
\begin{proof}
\textbf{Theorem}~\ref{thm:one_round_RC_bound}

To prove Theorem~\ref{thm:one_round_RC_bound}, we need to quantify the following probability:
\[
  \mathrm{Pr} \left(
    \sup_{ b \in \Lambda_l } \left\vert
    \mathcal{R}_{n_t} \left( b, X_t, Y_t \right) - \mathcal{R}_{n_s} \left( b, X_s, Y_s \right)
    \right\vert
    \geqslant \epsilon
    \right).
\]
Based on the independent blocks, we denote the empirical errors of $b$ on $S^T_0$ and $S^T_1$ as $\mathcal{R}_{ n_t / 2} \left( b, S^T_0 \right)$ and $\mathcal{R}_{ n_t / 2} \left( b, S^T_1 \right)$ respectively. The empirical errors of $b$ on $S^S_0$ and $S^S_1$ are also respectively denoted as $\mathcal{R}_{ n_s / 2} \left( b, S^S_0 \right)$ and $\mathcal{R}_{ n_s / 2} \left( b, S^S_1 \right)$. As a result,
\begin{align}
  \phantom{=} & ~ \mathrm{Pr} \left( \sup_{ b \in \Lambda_l } \left\vert
    \mathcal{R}_{n_t} \left( b, X_t, Y_t \right) - \mathcal{R}_{n_s} \left( b, X_s, Y_s \right)
    \right\vert \geqslant \epsilon \right) \notag \\
  = & ~ \mathrm{Pr} \left( \sup_{ b \in \Lambda_l } \left\vert
    \frac{1}{2} \left[ \mathcal{R}_{n_t/2} \left( b, S^T_0 \right) + \mathcal{R}_{n_t/2} \left( b, S^T_1 \right) \right]
    - \frac{1}{2} \left[ \mathcal{R}_{n_s/2} \left( b, S^S_0 \right) + \mathcal{R}_{n_s/2} \left( b, S^S_1 \right) \right]
    \right\vert \geqslant \epsilon \right) \\
  = & ~ \mathrm{Pr} \left( \sup_{ b \in \Lambda_l } \left\vert
    \left[ \mathcal{R}_{n_t/2} \left( b, S^T_0 \right) - \mathcal{R}_{n_s/2} \left( b, S^S_0 \right) \right]
    + \left[ \mathcal{R}_{n_t/2} \left( b, S^T_1 \right) - \mathcal{R}_{n_s/2} \left( b, S^S_1 \right) \right]
    \right\vert
    \geqslant 2\epsilon \right).
    \label{eq:ind_block_inline_1}
\end{align}
Due to the convexity of the norm and union bound, eq.~(\ref{eq:ind_block_inline_1}) implies that
\begin{align}
  \phantom{=} & ~ \mathrm{Pr} \left( \sup_{ b \in \Lambda_l } \left\vert
    \mathcal{R}_{n_t} \left( b, X_t, Y_t \right) - \mathcal{R}_{n_s} \left( b, X_s, Y_s \right)
    \right\vert \geqslant \epsilon \right) \notag \\
  \leqslant & ~ \mathrm{Pr} \left( \sup_{ b \in \Lambda_l } \left\vert
    \mathcal{R}_{n_t/2} \left( b, S^T_0 \right) - \mathcal{R}_{n_s/2} \left( b, S^S_0 \right)
    \right\vert +
    \sup_{ b \in \Lambda_l } \left\vert
    \mathcal{R}_{n_t/2} \left( b, S^T_1 \right) - \mathcal{R}_{n_s/2} \left( b, S^S_1 \right)
    \right\vert
    \geqslant 2\epsilon \right) \\
  \leqslant & ~ \mathrm{Pr} \left( \sup_{ b \in \Lambda_l }
    \left\vert \mathcal{R}_{n_t/2} \left( b, S^T_0 \right) - \mathcal{R}_{n_s/2} \left( b, S^S_0 \right) \right\vert
    \geqslant \epsilon \right)
    + \mathrm{Pr} \left( \sup_{ b \in \Lambda_l }
    \left\vert \mathcal{R}_{n_t/2} \left( b, S^T_1 \right) - \mathcal{R}_{n_s/2} \left( b, S^S_1 \right) \right\vert
    \geqslant \epsilon \right).
    \label{eqn:iid}
\end{align}
\noindent
Since $\left( S^S_0, S^T_0 \right)$ and $\left( S^S_1, S^T_1 \right)$ are identically distributed due to stationarity,
\begin{equation}
  \mathrm{Pr} \left( \sup_{ b \in \Lambda_l } \left\vert
    \mathcal{R}_{n_t/2} \left( b, S^T_0 \right) - \mathcal{R}_{n_s/2} \left( b, S^S_0 \right)
    \right\vert \geqslant \epsilon \right)
  =
  \mathrm{Pr} \left( \sup_{ b \in \Lambda_l } \left\vert
    \mathcal{R}_{n_t/2} \left( b, S^T_1 \right) - \mathcal{R}_{n_s/2} \left( b, S^S_1 \right)
    \right\vert \geqslant \epsilon \right).
\end{equation}
\noindent
This implies, alongside eq.~(\ref{eqn:iid}), that
\begin{align}
  \mathrm{Pr} \left( \sup_{ b \in \Lambda_l } \left\vert \
    \mathcal{R}_{n_t} \left( b, X_t, Y_t \right) - \mathcal{R}_{n_s} \left( b, X_s, Y_s \right)
    \right\vert \geqslant \epsilon \right)
  \leqslant
  2 \cdot \mathrm{Pr} \left( \sup_{ b \in \Lambda_l } \left\vert
    \mathcal{R}_{n_t/2} \left( b, S^T_0 \right) - \mathcal{R}_{n_s/2} \left( b, S^S_0 \right)
    \right\vert \geqslant \epsilon \right)
\end{align}
If we define
\begin{align}
  \Phi \left( S^S_0 \right) & = \sup_{ b \in \Lambda_l} \left\vert \mathcal{R} \left(b, Y, X \right) - \mathcal{R}_{n_s} \left(b,\;S^S_0 \right) \right\vert,  \notag \\
  \Phi \left( S^T_0 \right) & = \sup_{ b \in \Lambda_l} \left\vert \mathcal{R} \left(b, Y, X \right) - \mathcal{R}_{n_t} \left( b,\; S^T_0 \right)\right\vert, \notag
\end{align}
we obtain the following result,
\begin{align}
  \mathrm{Pr} \left( \sup_{ b \in \Lambda_l } \left\vert
    \mathcal{R}_{n_t/2} \left( b, \; S^T_0 \right) - \mathcal{R}_{n_s/2} \left( b, \; S^S_0 \right) \right\vert \geqslant \epsilon \right)
  \leqslant
  \mathrm{Pr} \left( \Phi \left( S^T_0 \right) + \Phi \left( S^S_0 \right) \geqslant \epsilon \right),
\end{align}

To compute the probability of $\mathrm{Pr} \left( \Phi \left( S^T_0 \right) + \Phi \left( S^S_0 \right) \geqslant \epsilon \right) $, we define $\epsilon_1 = \epsilon / 2 - \mathbb{E} \left[ \Phi \left( \widetilde{S}^T_0 \right) \right]$ and $\epsilon_2 = \epsilon / 2 - \mathbb{E} \left[ \Phi \left( \widetilde{S}^S_0 \right) \right]$. Hence, $\forall \epsilon / 2 > \max \left( \mathbb{E} \left[ \Phi \left( \widetilde{S}^T_0 \right) \right], \mathbb{E} \left[ \Phi \left( \widetilde{S}^S_0 \right) \right] \right)$,
\begin{align}
  \mathrm{Pr} \left( \Phi \left( S^T_0 \right) + \Phi \left( S^S_0 \right) \geqslant \epsilon \right)
  & \leqslant \mathrm{Pr} \left( \Phi \left( S^T_0 \right) \geqslant \epsilon / 2 \right) + \mathrm{Pr} \left( \Phi \left( S^S_0 \right) \geqslant \epsilon / 2 \right)  \\
  & = \mathrm{Pr} \left( \Phi \left( S^T_0 \right) - \mathbb{E} \left[ \Phi \left( \widetilde{S}^T_0 \right) \right] \geqslant \epsilon_1 \right) \\
  & \phantom{=} \, + \mathrm{Pr} \left( \Phi \left( S^S_0 \right) - \mathbb{E} \left[ \Phi \left( \widetilde{S}^S_0 \right) \right] \geqslant \epsilon_2 \right). \notag
\end{align}
\noindent
Since the probability may be considered to be the expectation of some binary indicator function, we can apply Theorem~\ref{thm:Yu94}. Thus,
\begin{align}
  \mathrm{Pr} \left( \Phi \left( S^T_0 \right) - \mathbb{E} \left[ \Phi \left( \widetilde{S}^T_0 \right) \right] \geqslant \epsilon_1 \right) & \leqslant \mathrm{Pr} \left( \Phi \left( \widetilde{S}^T_0 \right) - \mathbb{E} \left[ \Phi \left( \widetilde{S}^T_0 \right) \right] \geqslant \epsilon_1 \right)
  \label{eqn:IB_app_t} \\
  & \phantom{=} + \left( \mu - 1 \right) \beta_{a_t}, \notag \\
  \smallskip
  \mathrm{Pr} \left( \Phi \left( S^S_0 \right) - \mathbb{E} \left[ \Phi \left( \widetilde{S}^S_0 \right) \right] \geqslant \epsilon_2 \right) & \leqslant \mathrm{Pr} \left( \Phi \left( \widetilde{S}^S_0 \right) - \mathbb{E} \left[ \Phi \left( \widetilde{S}^S_0 \right) \right] \geqslant \epsilon_2 \right)
  \label{eqn:IB_app_s} \\
  & \phantom{=} + \left( \mu - 1 \right) \beta_{a_s}. \notag
\end{align}

By applying the McDiarmid inequality to the RHS of eqs.~(\ref{eqn:IB_app_t}) and~(\ref{eqn:IB_app_s}), we get the exponential inequalities for the LHS of eqs.~(\ref{eqn:IB_app_t}) and~(\ref{eqn:IB_app_s}), which yields
\begin{align}
  \mathrm{Pr} \left( \Phi \left( S^T_0 \right) - \mathbb{E} \left[ \Phi \left( \widetilde{S}^T_0 \right) \right] \geqslant \epsilon_1 \right) & \leqslant \exp \left( - \frac{ 2 \mu \left( \epsilon_1 \right)^2 }{ M^2 } \right) + \left( \mu - 1 \right) \beta_{a_t}, \\
  \mathrm{Pr} \left( \Phi \left( S^S_0 \right) - \mathbb{E} \left[ \Phi \left( \widetilde{S}^S_0 \right) \right] \geqslant \epsilon_2 \right) & \leqslant \exp \left( - \frac{ 2 \mu \left( \epsilon_2 \right)^2 }{ M^2 } \right) + \left( \mu - 1 \right) \beta_{a_s}.
\end{align}

\noindent
By denoting $\widetilde{\epsilon} = \min \left( \epsilon_1, \epsilon_2 \right) =  \epsilon / 2 - \max \left( \mathbb{E} \left[ \Phi \left( \widetilde{S}^S_0 \right) \right], \mathbb{E} \left[ \Phi \left( \widetilde{S}^T_0 \right) \right] \right)$,
\begin{align}
  \mathrm{Pr} \left( \sup_{ b \in \Lambda_l } \left\vert \mathcal{R}_{n_t} \left( b, X_t, Y_t \right) - \mathcal{R}_{n_s} \left( b, X_s, Y_s \right) \right\vert \geqslant \epsilon \right)
  \leqslant 4 \exp \left( - \frac{ 2 \mu \left(\widetilde{\epsilon} \right)^2 }{ M^2 } \right) + 2 \left( \mu - 1 \right) \left[ \beta_{a_t} + \beta_{a_s} \right],
\end{align}
\noindent
which yields
\begin{align}
  \mathrm{Pr} \left( \sup_{ b \in \Lambda_l } \left\vert \mathcal{R}_{n_t} \left( b, X_t, Y_t \right) - \mathcal{R}_{n_s} \left( b, X_s, Y_s \right) \right\vert \leqslant \epsilon \right)
  \geqslant 1 - \left[ 4 \exp \left( - \frac{ 2 \mu \left( \widetilde{\epsilon} \right)^2}{M^2} \right)
    + 2 \left( \mu - 1 \right) \left[ \beta_{a_t} + \beta_{a_s} \right] \right].
\label{eqn:quarter-result_one_round}
\end{align}

\noindent
If we set $ \varpi= 4 \exp\left( - \frac{ 2 \mu \left( \widetilde{\epsilon} \right)^2 }{ M^2 } \right) + 2 \left( \mu - 1 \right) \left[ \beta_{a_t} + \beta_{a_s} \right]$, \, $\varpi' = \varpi - \left( \mu - 1 \right) \left[ \beta_{a_t} + \beta_{a_s} \right]$ and assume $\varpi' > 0$,
\begin{align}
  \epsilon & = M \cdot \sqrt{ \frac{ \log \left( 4 / \varpi' \right) } { 2\mu } }  + 2 \cdot \max \left( \mathbb{E} \left[ \Phi \left( \widetilde{S}^T_0 \right) \right], \mathbb{E} \left[ \Phi \left( \widetilde{S}^S_0 \right) \right] \right) \\
  & \leqslant M \cdot \sqrt{ \frac{ \log \left( 4 / \varpi' \right) } { 2\mu } }  + 2 \cdot \max \left( \mathrm{RC}_{S^T_0} \left( \Lambda_l\right) , \mathrm{RC}_{S^S_0} \left( \Lambda_l\right) \right) \\
  & = M \cdot \sqrt{ \frac{ \log \left( 4 / \varpi' \right) } { 2\mu } }  + 2 \cdot \mathrm{RC}_{S^S_0} \left( \Lambda_l \right).
\end{align}

\noindent
As a result, eq.~(\ref{eqn:quarter-result_one_round}) may be respecified as, $\forall b \in \Lambda_l$,
\begin{equation}
  \mathrm{Pr} \left( \mathcal{R}_{n_s} \left( b, X_s, Y_s \right) \leqslant \mathcal{R}_{n_t} \left( b, X_t, Y_t \right) +  M \cdot \sqrt{ \frac{ \log \left( 4 / \varpi' \right) } { 2\mu } }  + 2 \cdot \mathrm{RC}_{\widetilde{S}^S_0} \left( \Lambda_l \right) \right)
  \geqslant 1 - \varpi
\label{eqn:semi-result_one_round}
\end{equation}
\end{proof}
\bigskip

\begin{proof}
\textbf{Theorem}~\ref{thm:RC_bound_CV}

Since we have already defined
\begin{align}
  T_q  = &\sup_{b \in \Lambda_l} \; \left\vert \mathcal{R}_{n_s} \left( b, Y_{s}^q, X_{s}^q \right) - \mathcal{R}_{n_t} \left( b, Y_{t}^q, X_{t}^q \right) \right\vert  \\
  & - \mathbb{E} \left[ \sup_{b \in \Lambda_l} \; \left\vert \mathcal{R}_{n_s} \left( b, Y_{s}^q, X_{s}^q \right) - \mathcal{R}_{n_t} \left( b, Y_{t}^q, X_{t}^q \right) \right\vert \right], \notag
\end{align}
Lemma~\ref{lem:Cheby_ineq} implies that
\begin{align}
  \mathrm{Pr} \left\{ \frac{1}{K} \sum^{K}_{q=1} T_q \geqslant \varsigma \right\}
  \leqslant
  \frac{ \gamma_0 \left[ T_q \right] }{ \varsigma^2 K} \cdot \left( 1 + 2 V_K \left[ T_q \right] \right)
\end{align}
Based on Theorem~\ref{thm:one_round_RC_bound}, we let $\varsigma = M \cdot \sqrt{ \frac{ \log \left( 4 / \varpi' \right) } { 2\mu } }$, which implies that
\begin{align}
  \phantom{=} & ~
    \frac{ \gamma_0 \left[ T_q \right] }{ \varsigma^2 K} \cdot \left( 1 + 2 V_K \left[ T_q \right] \right)
  \notag \\
  = & ~
  \frac{ \gamma_0 \left[ T_q \right] } { M^2 \cdot K \cdot \log \left( 4 / \varpi' \right) / \left( 2\mu \right) } \cdot \left( 1 + 2 V_K \left[ T_q \right] \right) \\
  \leqslant & ~
  \frac{ 2 \left( 1 + 2 V_K \left[ T_q \right] \right) } { \log \left( 4 / \varpi' \right) \cdot K / \mu }.
\end{align}
As a result,
\begin{align}
  \mathrm{Pr} \left\{ \frac{1}{K} \sum_{q = 1}^{K} \mathcal{R}_{n_s} \left( b, X_s^q, Y_s^q \right)
  \leqslant
  \frac{1}{K} \sum_{q = 1}^{K} \mathcal{R}_{n_t} \left( b, X_t^q, Y_t^q \right)
  + 2 \cdot \mathrm{RC}_{S^S_0} \left( \Lambda_l \right)
  + M \cdot \sqrt{ \frac{ \log \left( 4 / \varpi' \right) } { 2\mu } } \right\} \\
  \geqslant 1 - \frac{ 2 \left( 1 + 2 V_K \left[ T_q \right] \right) } { \log \left( 4 / \varpi' \right) \cdot K / \mu } \notag
\end{align}
To ensure $1 - 2\left( 1 + 2 V_K \left[ T_q \right] \right) / \left( \log \left( 4 / \varpi' \right) \cdot K / \mu \right)$ is between $0$ and $1$, we need
\begin{align}
  \frac{ 2 \left( 1 + 2 V_K \left[ T_q \right] \right) }
    { \log \left( 4 / \varpi' \right) \cdot K / \mu }
  \leqslant 1,
\end{align}
which implies that
\begin{align}
  \varpi' \in
   \left( 0, 4 \exp \left\{ - \frac{ 2 \left( 1 + 2 V_K \left[ T_q \right] \right) } { K / \mu } \right\} \right].
\end{align}
\end{proof}


\begin{landscape}
\begin{figure}
  \begin{flushleft}
  \section{Repeated lasso simulations}
  \end{flushleft}


    \subfloat[\label{fig:sim41}  $\mbox{random seed} = 5 $]
    {\includegraphics[width=0.18\paperwidth]{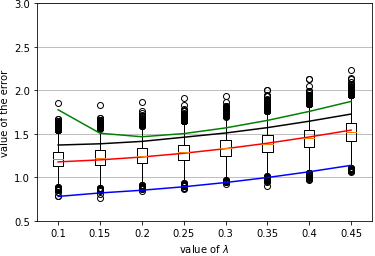}}
    \subfloat[\label{fig:sim42}  $\mbox{random seed} = 10$]
    {\includegraphics[width=0.18\paperwidth]{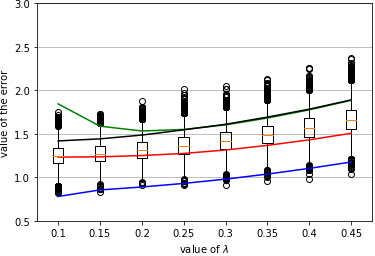}}
    \subfloat[\label{fig:sim43}  $\mbox{random seed} = 15$]
    {\includegraphics[width=0.18\paperwidth]{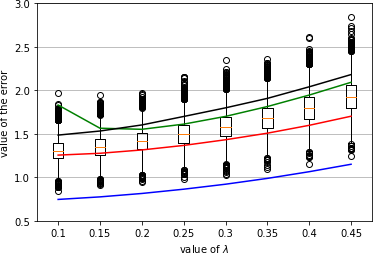}}
    \subfloat[\label{fig:sim44}  $\mbox{random seed} = 20$]
    {\includegraphics[width=0.18\paperwidth]{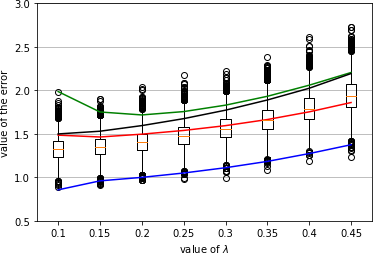}}
    \subfloat[\label{fig:sim45}  $\mbox{random seed} = 25$]
    {\includegraphics[width=0.18\paperwidth]{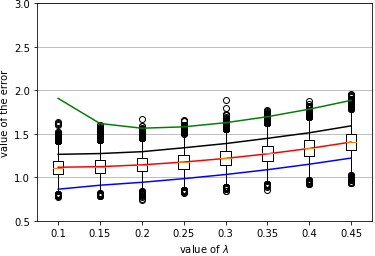}}

    \subfloat[\label{fig:sim46}  $\mbox{random seed} = 30$]
    {\includegraphics[width=0.18\paperwidth]{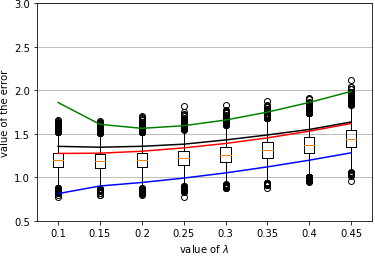}}
    \subfloat[\label{fig:sim47}  $\mbox{random seed} = 35$]
    {\includegraphics[width=0.18\paperwidth]{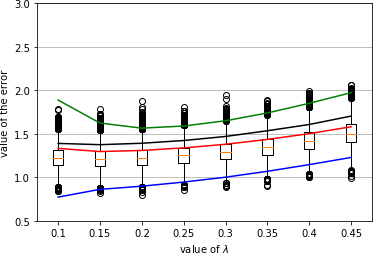}}
    \subfloat[\label{fig:sim48}  $\mbox{random seed} = 40$]
    {\includegraphics[width=0.18\paperwidth]{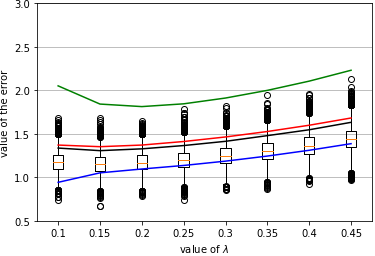}}
    \subfloat[\label{fig:sim49}  $\mbox{random seed} = 45$]
    {\includegraphics[width=0.18\paperwidth]{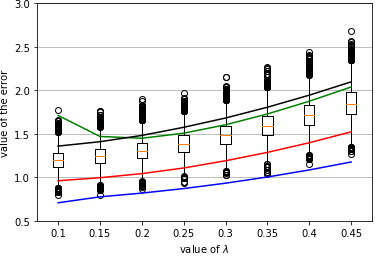}}
    \subfloat[\label{fig:sim410} $\mbox{random seed} = 50$]
    {\includegraphics[width=0.18\paperwidth]{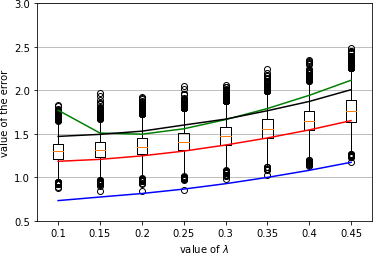}}

    \subfloat[\label{fig:sim411} $\mbox{random seed} = 55$]
    {\includegraphics[width=0.18\paperwidth]{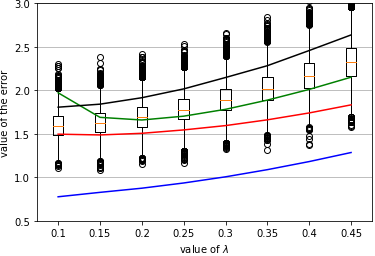}}
    \subfloat[\label{fig:sim412} $\mbox{random seed} = 60$]
    {\includegraphics[width=0.18\paperwidth]{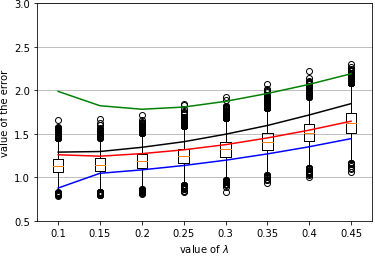}}
    \subfloat[\label{fig:sim413} $\mbox{random seed} = 65$]
    {\includegraphics[width=0.18\paperwidth]{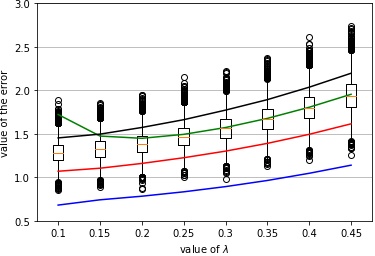}}
    \subfloat[\label{fig:sim414} $\mbox{random seed} = 70$]
    {\includegraphics[width=0.18\paperwidth]{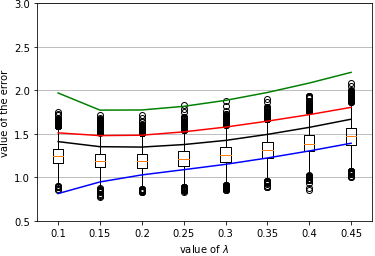}}
    \subfloat[\label{fig:sim415} $\mbox{random seed} = 75$]
    {\includegraphics[width=0.18\paperwidth]{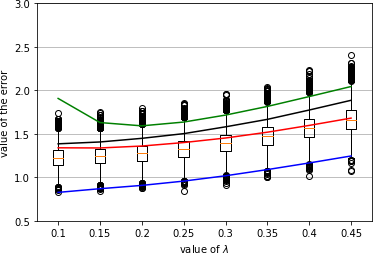}}

    \subfloat[\label{fig:sim416} $\mbox{random seed} = 80$]
    {\includegraphics[width=0.18\paperwidth]{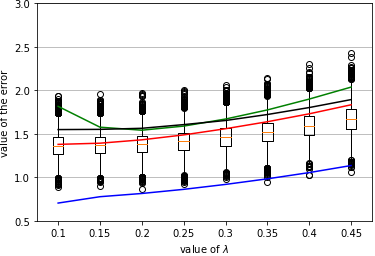}}
    \subfloat[\label{fig:sim417} $\mbox{random seed} = 85$]
    {\includegraphics[width=0.18\paperwidth]{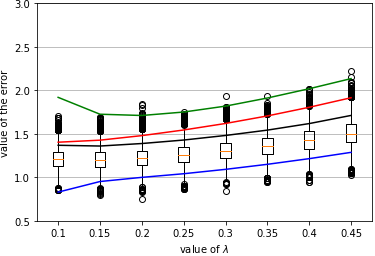}}
    \subfloat[\label{fig:sim418} $\mbox{random seed} = 90$]
    {\includegraphics[width=0.18\paperwidth]{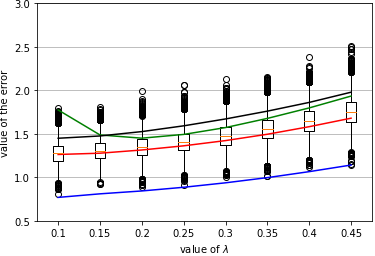}}
    \subfloat[\label{fig:sim419} $\mbox{random seed} = 95$]
    {\includegraphics[width=0.18\paperwidth]{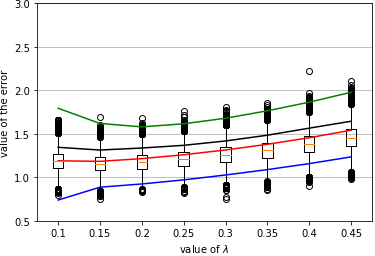}}
    \subfloat[\label{fig:sim420} $\mbox{random seed} =100$]
    {\includegraphics[width=0.18\paperwidth]{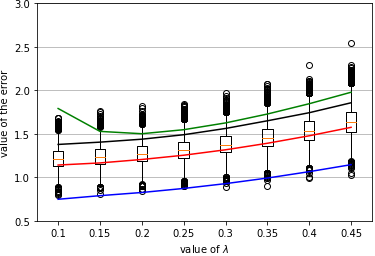}}

  {\includegraphics[width=0.6\paperwidth]{Fig3key.pdf}}

  \caption{Results from repeated lasso simulations with $n/K = 100, \; p = 100$}
  \label{fig:sim4}

\end{figure}
\end{landscape}

\begin{landscape}
\begin{figure}
  \centering

    \subfloat[\label{fig:sim51}  $\mbox{random seed} = 5 $]
    {\includegraphics[width=0.18\paperwidth]{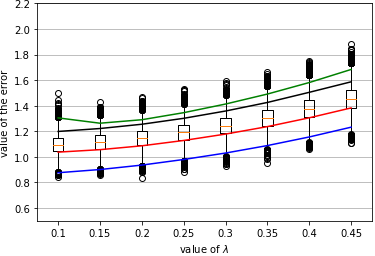}}
    \subfloat[\label{fig:sim52}  $\mbox{random seed} = 10$]
    {\includegraphics[width=0.18\paperwidth]{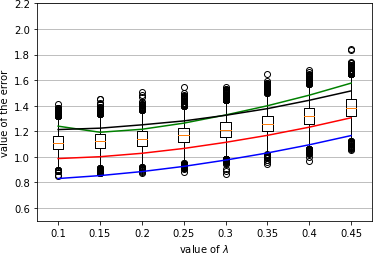}}
    \subfloat[\label{fig:sim53}  $\mbox{random seed} = 15$]
    {\includegraphics[width=0.18\paperwidth]{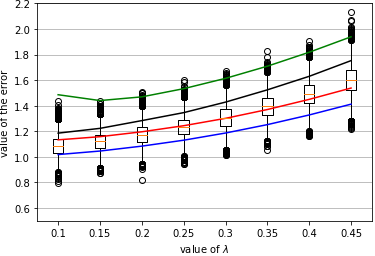}}
    \subfloat[\label{fig:sim54}  $\mbox{random seed} = 20$]
    {\includegraphics[width=0.18\paperwidth]{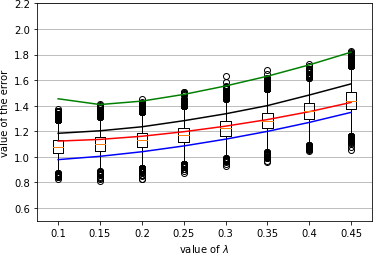}}
    \subfloat[\label{fig:sim55}  $\mbox{random seed} = 25$]
    {\includegraphics[width=0.18\paperwidth]{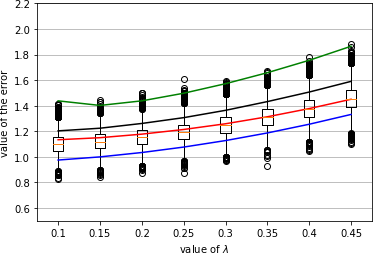}}

    \subfloat[\label{fig:sim56}  $\mbox{random seed} = 30$]
    {\includegraphics[width=0.18\paperwidth]{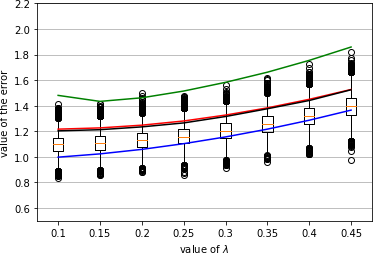}}
    \subfloat[\label{fig:sim57}  $\mbox{random seed} = 35$]
    {\includegraphics[width=0.18\paperwidth]{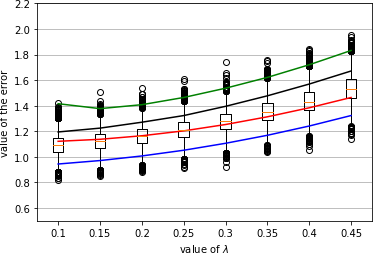}}
    \subfloat[\label{fig:sim58}  $\mbox{random seed} = 40$]
    {\includegraphics[width=0.18\paperwidth]{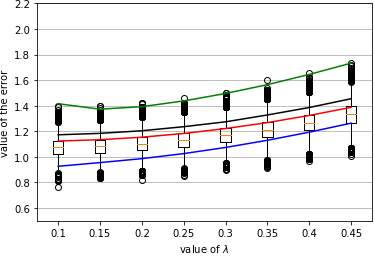}}
    \subfloat[\label{fig:sim59}  $\mbox{random seed} = 45$]
    {\includegraphics[width=0.18\paperwidth]{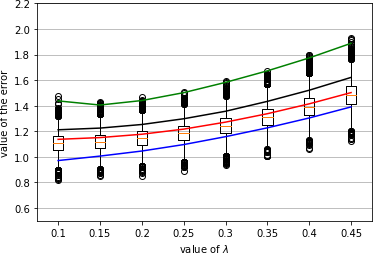}}
    \subfloat[\label{fig:sim510} $\mbox{random seed} = 50$]
    {\includegraphics[width=0.18\paperwidth]{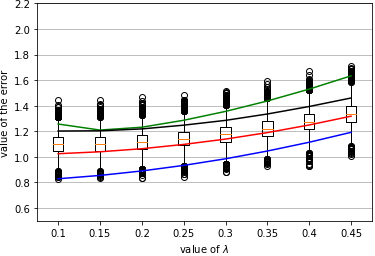}}

    \subfloat[\label{fig:sim511} $\mbox{random seed} = 55$]
    {\includegraphics[width=0.18\paperwidth]{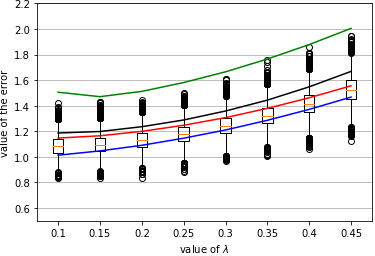}}
    \subfloat[\label{fig:sim512} $\mbox{random seed} = 60$]
    {\includegraphics[width=0.18\paperwidth]{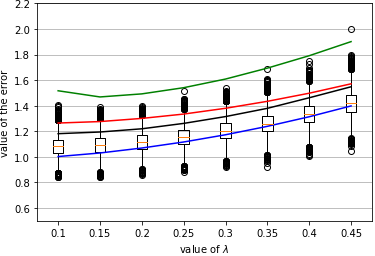}}
    \subfloat[\label{fig:sim513} $\mbox{random seed} = 65$]
    {\includegraphics[width=0.18\paperwidth]{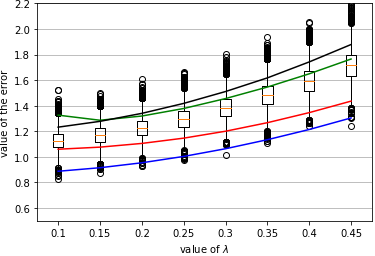}}
    \subfloat[\label{fig:sim514} $\mbox{random seed} = 70$]
    {\includegraphics[width=0.18\paperwidth]{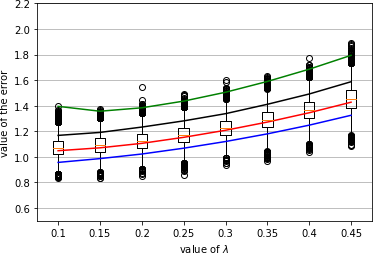}}
    \subfloat[\label{fig:sim515} $\mbox{random seed} = 75$]
    {\includegraphics[width=0.18\paperwidth]{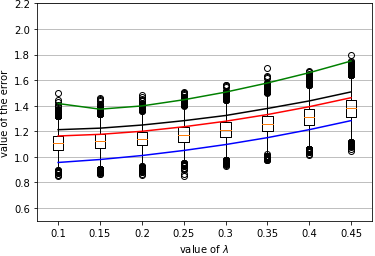}}

    \subfloat[\label{fig:sim516} $\mbox{random seed} = 80$]
    {\includegraphics[width=0.18\paperwidth]{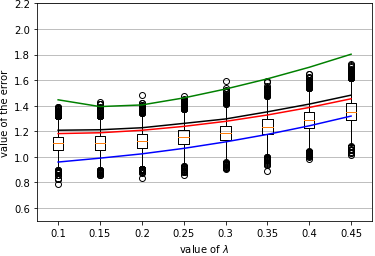}}
    \subfloat[\label{fig:sim517} $\mbox{random seed} = 85$]
    {\includegraphics[width=0.18\paperwidth]{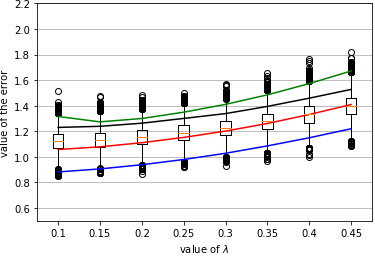}}
    \subfloat[\label{fig:sim518} $\mbox{random seed} = 90$]
    {\includegraphics[width=0.18\paperwidth]{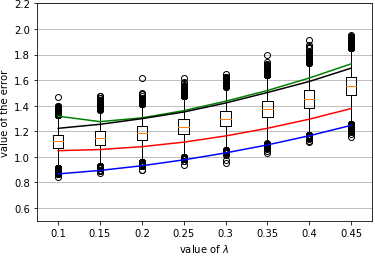}}
    \subfloat[\label{fig:sim519} $\mbox{random seed} = 95$]
    {\includegraphics[width=0.18\paperwidth]{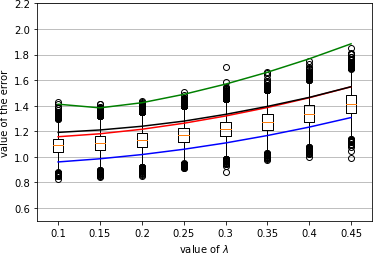}}
    \subfloat[\label{fig:sim520} $\mbox{random seed} =100$]
    {\includegraphics[width=0.18\paperwidth]{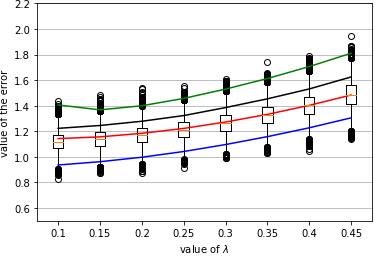}}

  {\includegraphics[width=0.6\paperwidth]{Fig3key.pdf}}

  \caption{Results from repeated lasso simulations with $n/K = 200, \; p = 100$}
  \label{fig:sim5}

\end{figure}
\end{landscape}

\begin{landscape}
\begin{figure}
  \centering

  \subfloat[\label{fig:sim61} $\mbox{random seed} = 5 $]
    {\includegraphics[width=0.18\paperwidth]{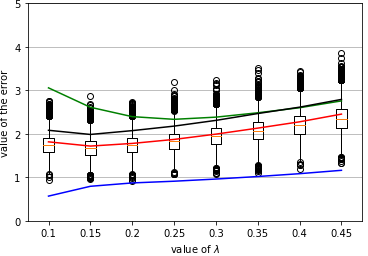}}
    \subfloat[\label{fig:sim62} $\mbox{random seed} = 10$]
    {\includegraphics[width=0.18\paperwidth]{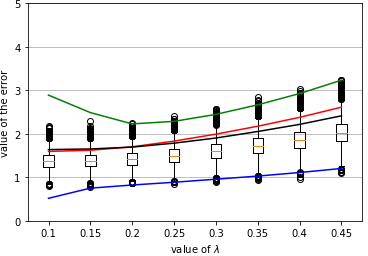}}
    \subfloat[\label{fig:sim63} $\mbox{random seed} = 15$]
    {\includegraphics[width=0.18\paperwidth]{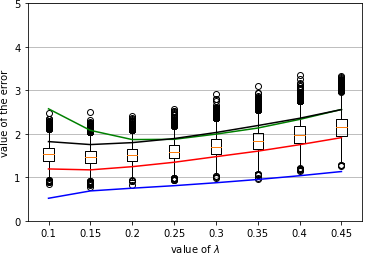}}
    \subfloat[\label{fig:sim64} $\mbox{random seed} = 20$]
    {\includegraphics[width=0.18\paperwidth]{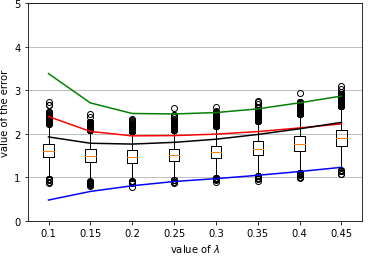}}
    \subfloat[\label{fig:sim65} $\mbox{random seed} = 25$]
    {\includegraphics[width=0.18\paperwidth]{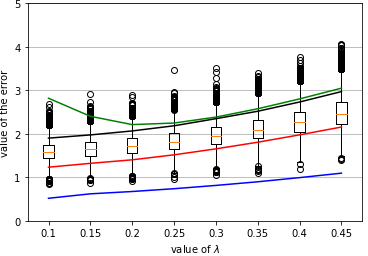}}

    \subfloat[\label{fig:sim66} $\mbox{random seed} = 30$]
    {\includegraphics[width=0.18\paperwidth]{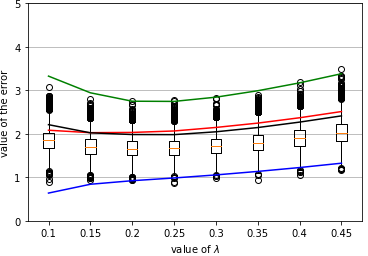}}
    \subfloat[\label{fig:sim67} $\mbox{random seed} = 35$]
    {\includegraphics[width=0.18\paperwidth]{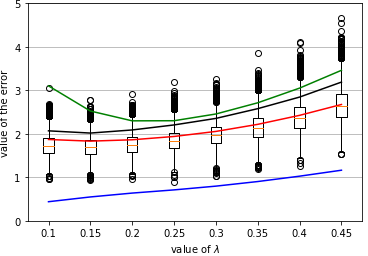}}
    \subfloat[\label{fig:sim68} $\mbox{random seed} = 40$]
    {\includegraphics[width=0.18\paperwidth]{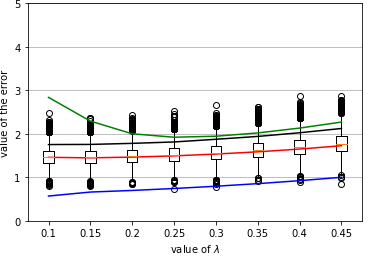}}
    \subfloat[\label{fig:sim69} $\mbox{random seed} = 45$]
    {\includegraphics[width=0.18\paperwidth]{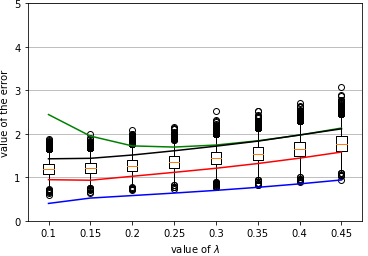}}
    \subfloat[\label{fig:sim610} $\mbox{random seed} = 50$]
    {\includegraphics[width=0.18\paperwidth]{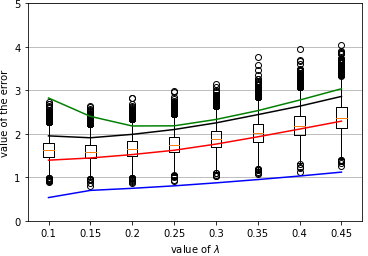}}

    \subfloat[\label{fig:sim611} $\mbox{random seed} = 55$]
    {\includegraphics[width=0.18\paperwidth]{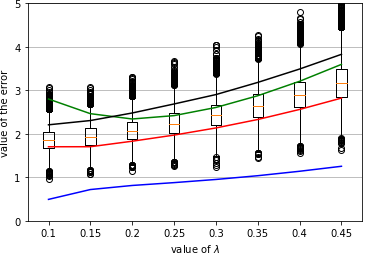}}
    \subfloat[\label{fig:sim612} $\mbox{random seed} = 60$]
    {\includegraphics[width=0.18\paperwidth]{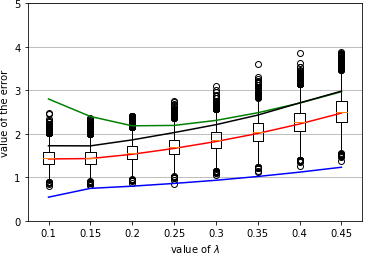}}
    \subfloat[\label{fig:sim613} $\mbox{random seed} = 65$]
    {\includegraphics[width=0.18\paperwidth]{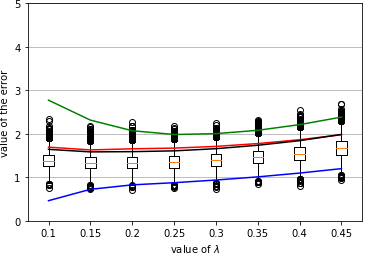}}
    \subfloat[\label{fig:sim614} $\mbox{random seed} = 70$]
    {\includegraphics[width=0.18\paperwidth]{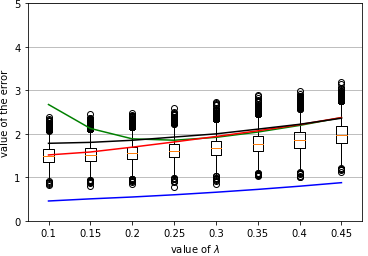}}
    \subfloat[\label{fig:sim615} $\mbox{random seed} = 75$]
    {\includegraphics[width=0.18\paperwidth]{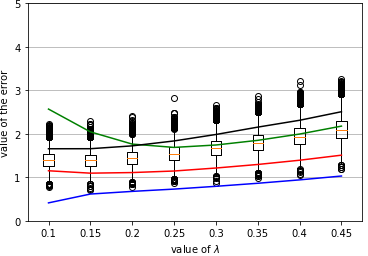}}

    \subfloat[\label{fig:sim616} $\mbox{random seed} = 80 $]
    {\includegraphics[width=0.18\paperwidth]{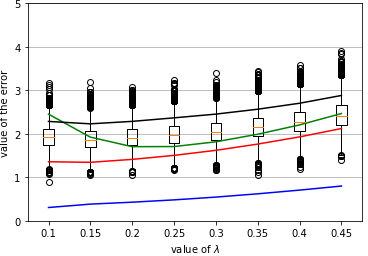}}
    \subfloat[\label{fig:sim617} $\mbox{random seed} = 85 $]
    {\includegraphics[width=0.18\paperwidth]{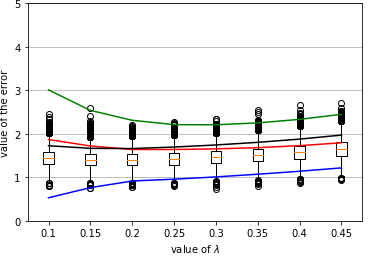}}
    \subfloat[\label{fig:sim618} $\mbox{random seed} = 90 $]
    {\includegraphics[width=0.18\paperwidth]{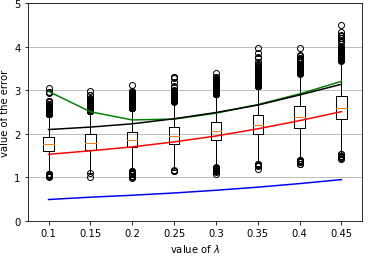}}
    \subfloat[\label{fig:sim619} $\mbox{random seed} = 95 $]
    {\includegraphics[width=0.18\paperwidth]{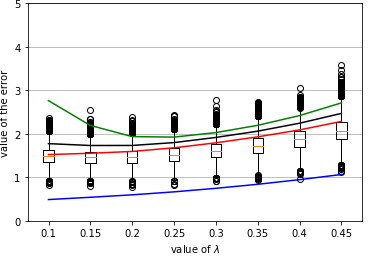}}
    \subfloat[\label{fig:sim620} $\mbox{random seed} = 100$]
    {\includegraphics[width=0.18\paperwidth]{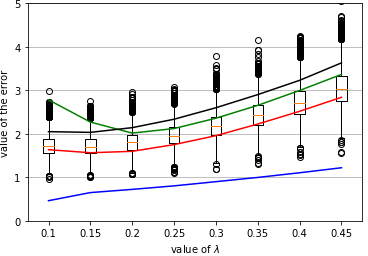}}

  {\includegraphics[width=0.6\paperwidth]{Fig3key.pdf}}

  \caption{Results from repeated lasso simulations with $n/K = 50, \; p = 100$}
  \label{fig:sim6}

\end{figure}
\end{landscape}

\bibliographystyle{elsarticle-harv}
\bibliography{CVrefs}

\begin{thebibliography}{27}
\expandafter\ifx\csname natexlab\endcsname\relax\def\natexlab#1{#1}\fi
\expandafter\ifx\csname url\endcsname\relax
  \def\url#1{\texttt{#1}}\fi
\expandafter\ifx\csname urlprefix\endcsname\relax\def\urlprefix{URL }\fi

\bibitem[{Bartlett et~al.(2005)Bartlett, Bousquet, Mendelson,
  et~al.}]{bartlett2005local}
Bartlett, P.~L., Bousquet, O., Mendelson, S., et~al., 2005. Local {R}ademacher
  complexities. Annals of Statistics 33~(4), 1497--1537.

\bibitem[{Bartlett and Mendelson(2002)}]{bartlett2002rademacher}
Bartlett, P.~L., Mendelson, S., 2002. Rademacher and gaussian complexities:
  Risk bounds and structural results. Journal of Machine Learning Research
  3~(Nov), 463--482.

\bibitem[{Bengio and Grandvalet(2004)}]{bengio2004no}
Bengio, Y., Grandvalet, Y., 2004. No unbiased estimator of the variance of
  k-fold cross-validation. Journal of {M}achine {L}earning {R}esearch 5~(Sep),
  1089--1105.

\bibitem[{Bernstein(1927)}]{bernstein1927extension}
Bernstein, S., 1927. Sur l'extension du th{\'e}or{\`e}me limite du calcul des
  probabilit{\'e}s aux sommes de quantit{\'e}s d{\'e}pendantes. Mathematische
  Annalen 97~(1), 1--59.

\bibitem[{Blum et~al.(1999)Blum, Kalai, and Langford}]{blum1999beating}
Blum, A., Kalai, A., Langford, J., 1999. Beating the hold-out: Bounds for
  k-fold and progressive cross-validation. In: Proceedings of the 12th annual
  conference on Computational learning theory. ACM, pp. 203--208.

\bibitem[{Breiman et~al.(1996)}]{breiman1996heuristics}
Breiman, L., et~al., 1996. Heuristics of instability and stabilization in model
  selection. {A}nnals of {S}tatistics 24~(6), 2350--2383.

\bibitem[{Cawley and Talbot(2010)}]{cawley2010over}
Cawley, G.~C., Talbot, N.~L., 2010. On over-fitting in model selection and
  subsequent selection bias in performance evaluation. Journal of Machine
  Learning Research 11~(Jul), 2079--2107.

\bibitem[{Devroye et~al.(1996)Devroye, Gy{\"o}rfi, and
  Lugosi}]{devroye2013probabilistic}
Devroye, L., Gy{\"o}rfi, L., Lugosi, G., 1996. A probabilistic theory of
  pattern recognition. Vol.~31 of Stochastic {M}odelling and {A}pplied
  {P}robability. Springer-Verlag New York.

\bibitem[{Dietterich(1998)}]{dietterich1998approximate}
Dietterich, T.~G., 1998. Approximate statistical tests for comparing supervised
  classification learning algorithms. Neural {C}omputation 10~(7), 1895--1923.

\bibitem[{Efron(1983)}]{efron1983estimating}
Efron, B., 1983. Estimating the error rate of a prediction rule: improvement on
  cross-validation. Journal of the {A}merican {S}tatistical {A}ssociation
  78~(382), 316--331.

\bibitem[{Friedman et~al.(2001)Friedman, Hastie, and
  Tibshirani}]{friedman2001elements}
Friedman, J., Hastie, T., Tibshirani, R., 2001. The elements of statistical
  learning. Vol.~1 of Springer {S}eries in {S}tatistics. Springer-Verlag New
  York.

\bibitem[{Haussler et~al.(1996)Haussler, Kearns, Seung, and
  Tishby}]{haussler1996rigorous}
Haussler, D., Kearns, M., Seung, H.~S., Tishby, N., 1996. Rigorous learning
  curve bounds from statistical mechanics. Machine Learning 25~(2-3), 195--236.

\bibitem[{Kearns and Ron(1999)}]{kearns1999algorithmic}
Kearns, M., Ron, D., 1999. Algorithmic stability and sanity-check bounds for
  leave-one-out cross-validation. Neural {C}omputation 11~(6), 1427--1453.

\bibitem[{Kohavi(1995)}]{kohavi1995study}
Kohavi, R., 1995. A study of cross-validation and bootstrap for accuracy
  estimation and model selection. In: Proceedings of the 14th {I}nternational
  joint conference on {A}rtificial {I}ntelligence. Vol.~2. pp. 1137--1145.

\bibitem[{Lecue(2009)}]{Lecue09tool}
Lecue, G., 2009. Basic tools from empirical processes theory applied to the
  compress sensing problem.

\bibitem[{Lim and Yu(2016)}]{lim2016estimation}
Lim, C., Yu, B., 2016. Estimation stability with cross-validation ({ESCV}).
  Journal of Computational and Graphical Statistics 25~(2), 464--492.

\bibitem[{Mohri and Rostamizadeh(2009)}]{mohri2009rademacher}
Mohri, M., Rostamizadeh, A., 2009. Rademacher complexity bounds for non-i.i.d.\
  processes. In: Koller, D., Schuurmans, D., Bengio, Y., Bottou, L. (Eds.),
  Advances in Neural Information Processing Systems 21. Curran Associates,
  Inc., pp. 1097--1104.

\bibitem[{Nadeau and Bengio(2000)}]{nadeau2000inference}
Nadeau, C., Bengio, Y., 2000. Inference for the generalization error. In:
  Solla, S.~A., Leen, T.~K., M\"{u}ller, K.-R. (Eds.), Advances in {N}eural
  {I}nformation {P}rocessing {S}ystems 12. MIT Press, pp. 307--313.

\bibitem[{Nan and Yang(2014)}]{nan2014variable}
Nan, Y., Yang, Y., 2014. Variable selection diagnostics measures for
  high-dimensional regression. Journal of Computational and Graphical
  Statistics 23~(3), 636--656.

\bibitem[{Stone(1974)}]{stone74}
Stone, M., 1974. Cross-validatory choice and assessment of statistical
  predictions. Journal of the Royal Statistical Society, Series B
  (Methodological) 36~(2), 111--147.

\bibitem[{Stone(1977)}]{stone77}
Stone, M., 1977. An asymptotic equivalence of choice of model by
  cross-validation and {Akaike}'s criterion. Journal of the Royal Statistical
  Society, Series B (Methodological) 39~(1), 44--47.

\bibitem[{Talagrand(1994)}]{talagrand1994supremum}
Talagrand, M., 1994. The supremum of some canonical processes. American Journal
  of Mathematics 116~(2), 283--325.

\bibitem[{Van~der Vaart and Wellner(1996)}]{vanweak}
Van~der Vaart, A., Wellner, J.~A., 1996. Weak Convergence and Empirical
  Processes. Springer {S}eries in {S}tatistics. Springer-Verlag New York.

\bibitem[{Vapnik(1998)}]{vapnik1998statistical}
Vapnik, V.~N., 1998. Statistical learning theory. Vol.~1. Wiley New York.

\bibitem[{Vapnik and Chervonenkis(1968)}]{vc68}
Vapnik, V.~N., Chervonenkis, A.~Y., 1968. Algorithms with complete memory and
  recurrent algorithms in the problem of learning pattern recognition.
  Avtomatika i Telemekhanika~(4), 95--106.

\bibitem[{Varma and Simon(2006)}]{varma2006bias}
Varma, S., Simon, R., 2006. Bias in error estimation when using
  cross-validation for model selection. BMC {B}ioinformatics 7~(1), 91--98.

\bibitem[{Yu(1994)}]{yu1994rates}
Yu, B., 1994. Rates of convergence for empirical processes of stationary mixing
  sequences. Annals of {P}robability, 94--116.

\end{thebibliography}

\end{document}